\newtheorem{theorem}{Theorem}%
\newtheorem{proposition}[theorem]{Proposition}
\newtheorem{lemma}[theorem]{Lemma}
\newtheorem{corollary}[theorem]{Corollary}
\newtheorem{definition}{Definition}
\newtheorem{assumption}[definition]{Assumption}
\newcommand{\myparagraph}[1]{\paragraph{#1.}\hspace{-0.8em}}  %
\definecolor{darkpink}{rgb}{0.91, 0.33, 0.5}
\definecolor{puorange}{rgb}{0.80,0.40,0}
\definecolor{bluegray}{rgb}{0.04,0,0.7}
\definecolor{greengray}{rgb}{0.05,0.50,0.15}
\definecolor{darkbrown}{rgb}{0.40,0.2,0.05}
\definecolor{darkcyan}{rgb}{0,0.4,1}
\definecolor{black}{rgb}{0,0,0}
\definecolor{grey}{rgb}{0.93,0.93,0.93}
\definecolor{royalazure}{rgb}{0.0, 0.22, 0.66}
\newcommand{\red}[1]{{\color{purple}#1}}
\newcommand{\blue}[1]{{\color{royalazure}#1}}
\newcommand{\orange}[1]{{\color{puorange}#1}}
\crefname{section}{Sec.}{Sections}
\crefname{theorem}{Thm.}{Thms.}
\crefname{lemma}{Lem.}{Lems.}
\crefname{corollary}{Cor.}{Cors.}
\crefname{proposition}{Prop.}{Props.}
\crefname{assumption}{Asm.}{Asms.}
\crefname{property}{Propt.}{Propts.}
\crefname{algorithm}{Alg.}{Algs.}
\crefname{appendix}{Appx.}{Appxs.}
\crefname{figure}{Fig.}{Figs.}
\crefname{table}{Tab.}{Tabs.}
\newcommand{\pn}{P_n}
\newcommand{\px}{P_X}
\newcommand{\py}{P_Y}
\newcommand{\qx}{Q_X}
\newcommand{\qy}{Q_Y}
\newcommand{\pnx}{P_{n, X}}
\newcommand{\pny}{P_{n, Y}}
\newcommand{\param}{\varphi}
\newcommand{\C}{\mc{C}}
\newcommand{\M}{\mu}
\newcommand{\tpn}{\hat{P}_n}
\newcommand{\tpnx}{\hat{P}_{n, X}}
\newcommand{\tpny}{\hat{P}_{n, Y}}
\newcommand{\tpx}{\hat{P}_{X, \epsilon}}
\newcommand{\tpy}{\hat{P}_{Y, \epsilon}}
\newcommand{\hpx}{\hat{P}_X}
\newcommand{\hpy}{\hat{P}_Y}
\newcommand{\tvn}{\hat{V}_n}
\newcommand{\tun}{\hat{V}_n}
\newcommand{\tgn}{\hat{\mathbb{G}}_n}
\newcommand{\tps}{\hat{p}_{\star, \epsilon}}
\newcommand{\hps}{\hat{p}_\star}
\newcommand{\tM}{\hat{M}}
\newcommand{\tB}{\hat{B}}
\newcommand{\tT}{\hat{T}}
\newcommand{\rx}{R_{X}}
\newcommand{\ryx}{R_{Y|X}}
\newcommand{\pxy}{P_{X|Y}}
\newcommand{\qyx}{Q_{Y|X}}
\newcommand{\gn}{\mathbb{G}_n}
\newcommand{\vn}{V_{n}}
\newcommand{\one}{\mathbf{1}}
\newcommand{\A}{\mathcal{A}}
\renewcommand{\S}{\mathcal{S}}
\newcommand{\pow}[1]{^{\scriptscriptstyle(#1)}}
\newcommand{\range}{\operatorname{range}}
\newcommand{\iidsim}{\overset{\mathrm{i.i.d.}}{\sim}}
\newcommand{\abs}[1]{\left| #1 \right|}
\newcommand{\norm}[1]{\left\lVert #1 \right\rVert}
\newcommand{\ones}{\operatorname{\mathbf 1}}
\newcommand{\ip}[1]{{\left\langle #1 \right\rangle}}
\newcommand{\ltwo}{\mathbf{L}^2}
\newcommand{\Expect}{\operatorname{\mathbb E}}
\newcommand{\Var}{\operatorname{\mathbb{V}ar}}
\newcommand{\Supp}[1]{\text{Supp}(#1)}
\newcommand{\kl}{\operatorname{KL}}
\newcommand{\tv}{\operatorname{TV}}
\newcommand{\cstext}{\texorpdfstring{$\chi^2$}{Chi-Squared}}
\DeclareMathOperator*{\argmax}{arg\,max}
\DeclareMathOperator*{\argmin}{arg\,min}
\newcommand{\calE}{\mathcal{E}}
\newcommand{\calF}{\mathcal{F}}
\newcommand{\ind}{\mathds{1}}
\newcommand{\br}[1]{\ensuremath{\left\{#1\right\}}}
\newcommand{\prob}{\mathbb{P}}
\newcommand{\X}{\mathcal{X}}
\newcommand{\Y}{\mathcal{Y}}
\newcommand{\mc}[1]{\mathcal{#1}}
\newcommand{\sse}{\subseteq}
\newcommand \grad {\nabla}
\newcommand{\R}{\mathbb{R}}
\newcommand{\Ex}{\mathbb{E}}
\newcommand{\E}[2]{\ensuremath{{\mathbb E}_{#1}\left[#2\right]}}
\newcommand{\sbr}[1]{\ensuremath{\left[#1\right]}}
\newcommand{\p}[1]{\ensuremath{\left(#1\right)}}
\newcommand\blfootnote[1]{%
  \begingroup
  \renewcommand\thefootnote{}\footnote{#1}%
  \addtocounter{footnote}{-1}%
  \endgroup
}
\title{The Benefits of Balance:\\ From Information Projections to Variance Reduction}
\author{Lang Liu$^{*}$ \qquad Ronak Mehta$^{*}$ \qquad Soumik Pal \qquad Zaid Harchaoui\blfootnote{These authors contributed equally to this work.} \vspace{0.3cm} \\
{
\small University of Washington, Seattle
}
}
\date{\today}
\begin{document}

\maketitle

\begin{abstract}
    
Data balancing across multiple modalities and sources appears in various forms in foundation models in machine learning and AI,~\textit{e.g.}~in CLIP and DINO. We show that data balancing across modalities and sources actually offers an unsuspected benefit: variance reduction. We present a non-asymptotic statistical bound that quantifies this variance reduction effect and relates it to the eigenvalue decay of Markov operators. Furthermore, we describe how various forms of data balancing in contrastive multimodal learning and self-supervised clustering can be better understood, and even improved upon, owing to our variance reduction viewpoint.

\end{abstract}

\section{Introduction}\label{sec:intro}
Deep neural networks have shown remarkable success at learning task-specific representations of data when provided supervision from massive amounts of labeled training examples. Recent trends, however, have shifted toward task-agnostic, universal representations that may be easily fine-tuned or even have zero-shot capabilities out of the box. Supervised learning, \emph{stricto sensu}, is too limited a framework for these billion-parameter, data-hungry models, and a question at the heart of modern machine learning is learning from unlabeled, partially labeled, or weakly labeled data. 

This need has paved the way for the current generation of self-supervised learning (SSL) approaches that circumvent the need for large amounts of ``strong'' labels. In SSL, a model is trained on a generic pseudo-task suited for unlabeled data, such as relating image-caption pairs or augmentations of the same image. Despite modern foundation models such as DINO \citep{carron2021emerging} and CLIP \citep{radford2021learning} being trained in this fashion, many aspects of SSL remain mysterious.

In particular, the training process of self-supervised models often transcends the rules of the standard empirical risk minimization (ERM) toolkit. ERM combines two well-understood techniques: minibatch sampling and gradient-based optimization using backpropagation. On the other hand, SSL adds clever, yet less-understood techniques to the training pipeline. 
To illustrate this, consider a minibatch of independent and identically distributed (i.i.d.)~training examples $(X_1, Y_1), \ldots, (X_n, Y_n) \sim P$, where $P$ is a joint probability measure on sample spaces $\X \times \Y$ (e.g.~feature-label or image-caption pairs)
and let $P_n = \frac1n \sum_{i=1}^n \delta_{(X_i, Y_i)}$ be the empirical distribution. For a model parameterized by $\theta \in \R^d$ with loss function $h_\theta$, a stochastic learning algorithm involves computing the minibatch loss
\begin{align}
    \E{P_n}{h_\theta(X, Y)} = \frac{1}{n} \sum_{i=1}^n h_\theta(X_i, Y_i)
    \label{eq:erm}
\end{align}
and backpropagating through it to produce a minibatch stochastic gradient estimate. The algorithm then proceeds with the stochastic gradient descent (SGD) or a variant thereof (e.g., Adam, SGD with momentum, etc). 
Self-supervised methods often modify this recipe by \emph{intervening} on the optimization algorithm in a minibatch-specific way. 

For example, SwaV \citep{caron2020unsupervised} passes the minibatch examples through the model's encoder and clusters output vectors to generate pseudo-labels for a prediction task.
In teacher-student architectures such as BYOL \citep{grill2020bootstrap} and DINO \citep{carron2021emerging}, the minibatch is passed through two networks, where the ``student'' network is updated via backpropagation and the ``teacher'' network is updated by cloning the student's weights in regular intervals. 
In CLIP \citep{radford2021learning}, a model optimizes the sum of two cross-entropy losses, where the predicted class probabilities on example $i$ are generated by comparison to all other elements of the minibatch. 
While introducing such interventions into the procedure has clearly proven useful practically, it remains conceptually unclear what exactly is being optimized by the learning algorithm.

In this work, we aim to gain a better theoretical understanding of the objectives and algorithms underlying these empirically effective recipes. In particular, we want to shed a theoretical light on their precise benefits over traditional learning methods. We show that such recipes often enjoy an unsuspected benefit: reducing the variance of the empirical minibatch objective. 

Concretely, we formalize the model updates described above as two phrases. Let $Z_1, \ldots, Z_n$ be a minibatch containing data points of arbitrary type (e.g.~unlabeled images). In the first phase,  this original data source is mapped (possibly using a model parameterized by $\theta$) to another minibatch $(X_1, Y_1), \ldots, (X_n, Y_n)$ of \emph{derived} pairs in $\X \times \Y$. For example, in SwaV, each $Z_i$ is an image, and we derive $(X_i, Y_i)$ by setting $X_i = Z_i$ and letting $Y_i$ be the pseudo-label based on clustering the vector representations of the images. In CLIP, each $Z_i$ is an image-caption pair, and we derive $(X_i, Y_i)$ by simply letting $X_i$ be the image and $Y_i$ be the caption. Note that $Y_i$ is \emph{not} a label in the traditional sense in neither of these examples. In the second phase, we use the model to compute a probability distribution $P_{n, \theta}$ over $\X \times \Y$, and perform a stochastic gradient update for the objective
\begin{align}
    \E{P_{n, \theta}}{h_\theta(X, Y)}.
    \label{eq:meas_opt}
\end{align}
This reduces to empirical risk minimization on the minibatch objective~\eqref{eq:erm} when $Z = (X, Y)$ (each data point is originally observed in $\X \times \Y$) and $P_{n, \theta} = P_n$ (the empirical distribution of the data is used, regardless of the model). Beyond this setting, one specific example of $P_{n, \theta}$ has been applied across various families of self-supervised learning (as detailed in \Cref{sec:ssl}), which we refer to as \emph{data balancing} or simply \emph{balancing}, the primary subject of this work.

For a probability measure $Q$ on $\X \times \Y$, let $\qx$ and $\qy$ be the respective marginals on $\X$ and $\Y$ and let $Q_{X|Y}$ and $Q_{Y|X}$ denote the respective conditional distributions. Given fixed \emph{target} marginal distributions  $\px$ on $\X$ and $\py$ on $\Y$, balancing refers to repeatedly applying the operations
\begin{align}
    R \mapsto \argmin_{Q: Q_X = \blue{\px}} \kl(Q \Vert R) \quad \text{and} \quad R \mapsto \argmin_{Q: Q_X = \orange{\py}} \kl(Q \Vert R),
    \label{eq:iterations}
\end{align}
in an alternating fashion. After enough iterations, the resulting probability measure approximately marginalizes to $\px$ and $\py$ in each variable. When $\X$ and $\Y$ are finite with $\abs{\X} = m$ and $\abs{\Y} = l$,
these operations reduce to rescaling the rows of an $(m \times l)$-matrix by $\px/R_X$ or its columns by $\py/R_Y$. This algorithm has a decades-old history and is known in other contexts as Sinkhorn-Knopp matrix scaling \citep{sinkhorn1967diagonal}, iterative proportional or biproportional fitting \citep{Johnston1993Entropy}, and raking-ratio estimation \citep{thompson2000theory}. The marginals $\px$ and $\py$ can represent auxiliary information or inductive bias from users, such as the desire for balanced clusters.

Returning to $P_{n, \theta}$ in~\eqref{eq:meas_opt}, we show in \Cref{sec:ssl} that both self-labeling and contrastive approaches in SSL implicitly define $P_{n, \theta}$ by the following steps: 1) constructing a method-specific ``initial'' measure $\pn\pow{0}$ on $\X \times \Y$, then 2) applying $k$ iterations of the operations~\eqref{eq:iterations} to generate a sequence $\pn\pow{0}, \ldots, \pn\pow{k}$, and finally, 3) setting $P_{n, \theta} := \pn\pow{k}$.
In other words, these methods embed a \emph{learnable} balancing operation in their objectives.
A natural question to consider is: if the marginals one uses accurately represent the ones of the true probability measure $P$ governing the data, are balanced quantities ``better behaved'' than their unbalanced counterparts? If so, in what way?

Inspired by this question, we fix the model parameter $\theta$ (thus dropping the subscript from the quantities above) and analyze the fluctuations of the unbalanced and balanced objectives. The formal problem statement is as follows. Let $P_n\pow{0} = P_n$ and $P_n\pow{k}$ denote the output of $k \geq 1$ iterations of data balancing (see \Cref{sec:analysis} for the precise definition). 
Finally, letting $h: \X \times \Y \rightarrow \R$ be a fixed function of interest, we define the population parameter $\param$ and its $k$-step \emph{balanced estimator} $\param_n\pow{k}$ by
\begin{align}\label{eq:est_framework}
    \param := \E{P}{h(X, Y)} \quad \text{and} \quad \param\pow{k}_n := \E{\pn\pow{k}}{h(X, Y)}.
\end{align}
Our goal is to establish theoretical guarantees on the mean squared error (MSE) $\Ex_{P}[(\param\pow{k}_n - \param)^2]$ of estimating $\param$ using $\param\pow{k}_n$, with an informative dependence on the sample size $n$, number of iterations $k$, target marginals $(\px, \py)$, and test function $h$. We are particularly interested in its comparison to the direct estimator based on the empirical measure $\param\pow{0}_n = \frac{1}{n}\sum_{i=1}^n h(X_i, Y_i)$, as to quantify the effect of the auxiliary information $(\px, \py)$. Our analysis uncovers two surprising facts. Firstly, while originally proposed for a different purpose, balancing reduces the variance of the empirical estimate. Secondly, while the balancing iterations are nonlinear operations on the input measure, the variance reduction can be precisely quantified using the spectral decay of two linear Markov operators: the conditional means given $X$ and $Y$, respectively.

\myparagraph{Contributions}
In \Cref{sec:ssl}, we detail the mathematical connection between balancing and the modern representation learning techniques mentioned above. In \Cref{sec:analysis}, we prove a new upper bound on the MSE of the balancing estimator $\param\pow{k}_n$.
The bound decomposes into an $O(n^{-1})$ first-order variance term and an $O(k^6 n^{-3/2})$ second-order term. The first-order term is shown to have a strict improvement over the empirical measure baseline with a fine-grained dependence on the spectra of two particular Markov operators. The second-order term can be used to compute the asymptotic variance reduction for statistical efficiency comparisons. Our proof technique relies on a recursion decomposition for balancing-based estimators, which may be of independent interest. In \Cref{sec:experiments}, we illustrate how insights from our analysis can be practically applied to CLIP-type objectives and evaluation setups.

\section{Data Balancing in Practice}\label{sec:ssl}
To demonstrate a precise connection to~\eqref{eq:meas_opt}, we describe how a collection of training examples $Z_1, \ldots, Z_n$ observed in an original data space $\mc{Z}$ (e.g.~grayscale images) is mapped to a probability measure $P_{n, \theta}$. Using the framework introduced in \Cref{sec:intro}, this amounts to specifying four components: 1) the map from the original data into the derived sample spaces $\X$ and $\Y$, 2) the initial measure $\pn\pow{0}$, 3) the function $h$, and 4) the target marginals $(\px, \py)$ for this measure to fit. From that point, the iterations of~\eqref{eq:iterations} produce $\pn\pow{1}, \ldots, \pn\pow{k}$, and we set $P_{n, \theta} := \pn\pow{k}$. For ease of presentation, we hide the dependence of $\pn\pow{k} = P_{n, \theta}$ and $h \equiv h_\theta$ on the model parameter $\theta$.
See \Cref{fig:ssl} for examples of different choices of the sample spaces $\X$ and $\Y$.

\myparagraph{Example 1: Self-Supervised Clustering}
Balancing is used in discriminative clustering and self-supervised clustering; see \citep{jones2022distriminative, asano2020self, caron2020unsupervised} for variations on this theme. We describe the swapped prediction task of \citet{caron2020unsupervised} for concreteness but emphasize that clustering of this form is used as an intermediate step (or as the task itself) in many SSL pseudo-tasks. At a high level, this approach involves passing elements of a minibatch through two encoders to generate vector representations. These representations are then clustered separately, and the features from one encoder predict the cluster label from the other encoders. Denote the encoders $f_{\theta_s}: \mc{Z} \rightarrow \R^r$ and $f_{\theta_t}: \mc{Z} \rightarrow \R^r$, colloquially known as the \emph{student} and \emph{teacher} networks, respectively.
Here, we let $\br{Z_i}_{i=1}^n$ be a minibatch of $n$ images, with 
\begin{align*}
    \mc{X} = \br{Z_1, \ldots, Z_n} \quad \text{and} \quad \mc{Y} = \br{1, \ldots, l},
\end{align*}
where $m = n$ and the elements of $\Y$ index learnable cluster representation vectors $c_1, \ldots, c_l \in \R^r$. 
Thus, we consider the overall parameter vector to be $\theta := (\theta_s, \theta_t, c_1, \ldots, c_l)$. Given temperature hyperparameters $\epsilon, \tau > 0$, the initial measure and loss function are given by the expressions
\begin{align*}
    \pn\pow{0}(x, y) \propto e^{f_{\theta_s}(x)^\top c_y/\epsilon}
    \quad \text{and} \quad
    h(x, y) = \log\tfrac{e^{f_{\theta_t}(x)^\top c_y/\tau}}{\sum_{y'=1}^l e^{f_{\theta_t}(x)^\top c_{y'}/\tau}}.
\end{align*}
Directly optimizing $\sum_{x, y} \pn\pow{0}(x, y) h(x, y)$ without any constraints would lead to collapse, so it is balanced before optimization. The target marginals $\px$ and $\py$ are given by the discrete uniform measures on $\X$ and $\Y$. This formulation is often derived by solving an optimal transport problem with the Sinkhorn-Knopp algorithm to assign soft cluster labels, the iterative solution result from this procedure is precisely $\pn\pow{k}$. The intuition behind the choice of uniform marginal $\px$ is that each data point has an equal amount of mass to be allotted, whereas $\py$ captures that the cluster sizes are equal. The number of iterations $k$ is selected based on optimization considerations. 

\begin{figure}[t]
    \centering
    \includegraphics[width=\linewidth]{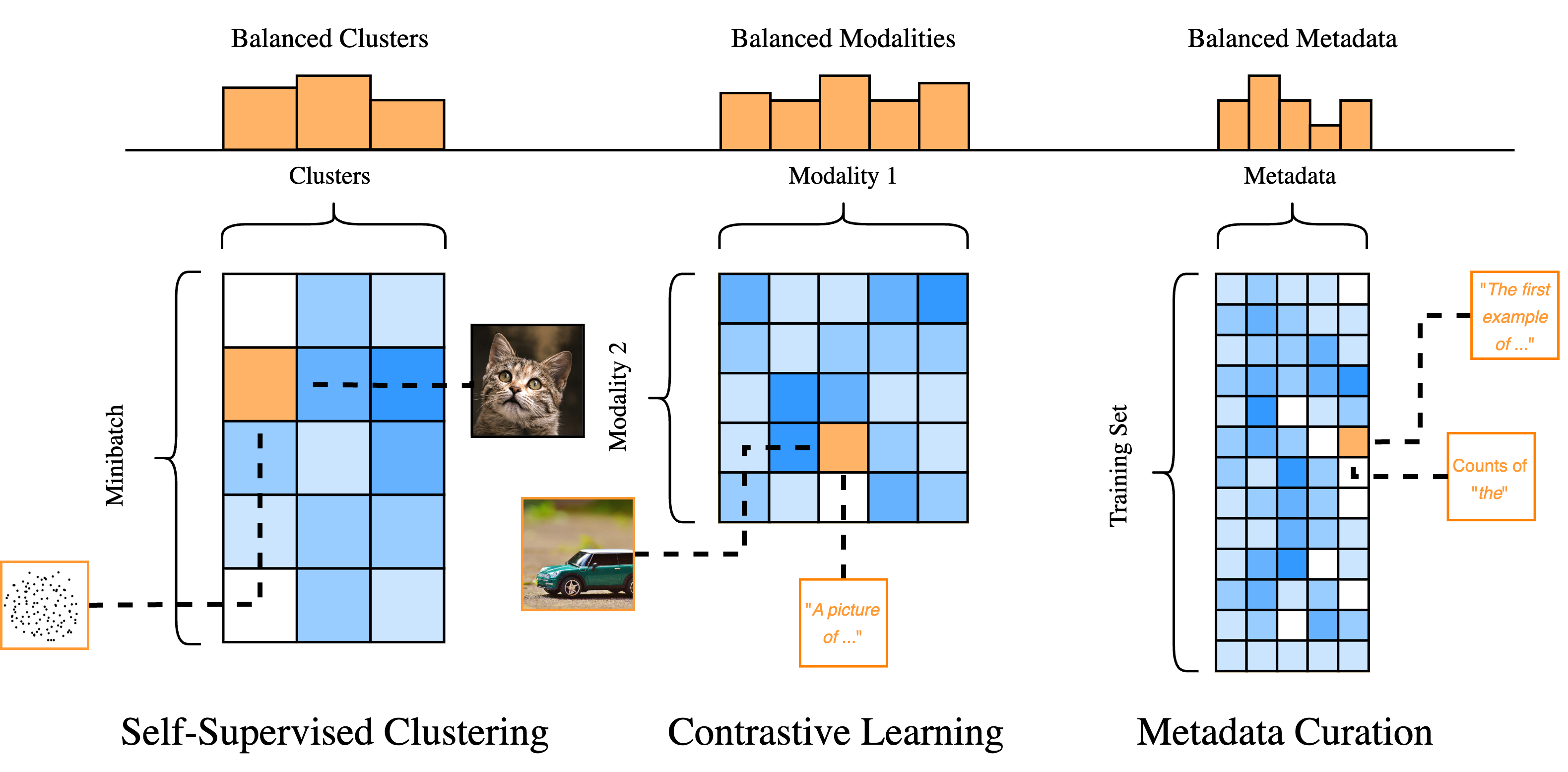}
    \caption{{\bf Data Balancing Examples:} Each panel shows a possible distribution $Q$ on different choices of ($\X, \Y$). The orange histograms are the target marginal $\py$. {\bf Left:} $Q(x, y)$ is the affinity of an image $x$ for cluster $y$. {\bf Center:} $Q(x, y)$ is the similarity of an image $x$ to a text caption $y$. {\bf Right:} $Q(x, y)$ is the proportion of substring matches between a text caption $x$ and a keyword $y$.}
    \label{fig:ssl}
\end{figure}

\myparagraph{Example 2: Contrastive Learning}
Contrastive Language-Image Pre-Training \citep{radford2021learning}, or CLIP, is an architecture with an image encoder and a text encoder that map to a joint embedding space. Trained using image-caption pairs, the loss promotes representations such that images and text that are paired in the minibatch are close, whereas those that are not paired are far. The latter aspect (promoting dissimilarity of unpaired images/text) is what prevents collapse in this framework.
To our knowledge, our interpretation of the CLIP objective as an implicit data balancing procedure is novel. Under this interpretation, we demonstrate that the objective is in fact a nonlinear function of $P_{n, \theta}$, whereas its gradient will have a linear form similar to~\eqref{eq:meas_opt}. In this case, each $Z_i = (X_i, Y_i)$, where $X_i$ is an image and $Y_i$ is an associated caption. We have that
\begin{align*}
    \X = \br{X_1, \ldots, X_n} \quad \mbox{and} \quad \Y = \br{Y_1, \ldots, Y_n}, 
\end{align*}
so that $m = n$.
Consider an image encoder $f_{\theta_I}: \X \mapsto \R^r$ and text encoder $f_{\theta_T}: \Y \mapsto \R^r$ with parameter vector $\theta = (\theta_I, \theta_T)$.
The initial, unnormalized measure and the (in this case, vector-valued) function $h$ are chosen based on these encoded representations:
\begin{align}
    \pn\pow{0}(x, y) \propto e^{f_{\theta_I}(x)^\top f_{\theta_T}(y)} \quad \text{and} \quad h(x, y) = \grad_{\theta} (f_{\theta_I}(x)^\top f_{\theta_T}(y)).\label{eq:clip_equiv}
\end{align}
While we usually interpret $h$ as a loss function, we will show below that the CLIP loss depends nonlinearly on $P_{n, \theta}$, while the gradient has a linear dependence.
If we believe, as in Example 1, that the target marginals $(\px, \py)$ of the images and the text should be roughly uniform, we can apply the balancing iterations~\eqref{eq:iterations} with the target marginals being the uniform distributions over $\X$ and $\Y$, respectively.
Because there is no preference for starting the iterations with the $\X$ or $\Y$ dimension first, we may consider both orderings.
Let $Q_{n}\pow{1}$ be one iteration of balancing in the $\Y$ dimension and $R_{n}\pow{1}$ represent one such iteration in the $\X$ dimension.
Then the original CLIP objective $L_{n}^{\mathrm{CLIP}}$ can be recovered (up to an additive constant) as
\begin{align}
    L_{n}^{\mathrm{CLIP}} 
    &:= -\frac12 \sum_{i=1}^n \left[ \log\frac{P_{n}\pow{0}(X_i, Y_i)}{\sum_{x} P_{n}\pow{0}(x, Y_i)} + \log\frac{P_{n}\pow{0}(X_i, Y_i)}{\sum_y P_{n}\pow{0}(X_i, y)}  \right] \notag\\
    &= -\frac{1}{2} \sum_{i=1}^n \sbr{\log Q_{n}\pow{1}(X_i, Y_i) + \log R_{n}\pow{1}(X_i, Y_i)} - \log n. \label{eq:clip_obj}
\end{align}
The measure $P_{n, \theta} = \frac{1}{2}Q_{n}\pow{1} + \frac{1}{2}R_{n}\pow{1}$ is constructed in this case by averaging the outputs of one iteration of balancing under each modality. Taking the gradient of~\eqref{eq:clip_obj} with respect to $\theta$ (whose dependence is contained in $(Q_{n}\pow{1}, R_{n}\pow{1})$) recovers the expression for $h$ in~\eqref{eq:clip_equiv}.
The objective is often interpreted as an average of cross-entropy loss terms, each representing the prediction of one modality's original pair from the other.
In our formulation, $L_{n}^{\mathrm{CLIP}}$ can also be viewed as an average negative log-likelihood under the $Q_{n}\pow{1}$ and $R_{n}\pow{1}$.
It is also of interest to study the effect of using $Q_{n}\pow{k}$ and $R_{n}\pow{k}$ for $k \ge 0$ in general, as we show in \Cref{sec:experiments}.

\myparagraph{Example 3: Metadata Curation}
Here, we consider balancing an entire training set, as opposed to a particular minibatch.
At the billion-parameter scale, dataset design can be the primary factor that differentiates performance between foundation models \citep{fang2022data, xu2024demystifying, gadre2023datacomp}.
One general approach used in both the original CLIP dataset \citep{radford2021learning} and an open-source replication \citep{xu2024demystifying} is metadata curation, wherein a text dataset (possibly captions for images) is synthesized using a list of keywords $\br{y_1, \ldots, y_l}$ so that
\begin{align*}
    \X = \br{Z_1, \ldots, Z_n}, \quad \Y = \br{y_1, \ldots, y_l},
\end{align*}
meaning that $m = n$. The keywords are matched to texts within $\X$ via substring matching. While the approach of \citet{xu2024demystifying} (dubbed MetaCLIP) pools all matched keywords on every text to measure the ``distribution'' of keywords, we consider a version in which each text $Z_i$ can only be labeled with a single keyword $y_j$. This allows for a true joint probability measure on $\X \times \Y$.
The marginal distribution of observed keywords is initially long-tailed (see \Cref{fig:metaclip}) 
(e.g., ``the'' will match many more texts than ``xylophone''). 
In both \citet{radford2021learning} and \citet{xu2024demystifying}, the data are resampled so that this distribution of keywords over matches is closer to uniformity, i.e.~keywords with many matches have their associated texts downsampled during the dataset creation process. While the probability measure may not be computed explicitly (due to scale), this adjustment of the keyword distribution can be viewed as a single iteration of balancing~\eqref{eq:iterations} applied to the $\Y$ marginal. For tasks such as language modeling, we have
\begin{align}
    \pn\pow{0}(x, y) = P_n(x, y) \quad \text{and} \quad h(x, y) = \ell_\theta(x),\label{eq:metaclip_equiv}
\end{align}
where $\ell_\theta(x)$ denotes the loss of a model evaluated at a single text $x \in \X$ (notice that the keyword is not used).
We elucidate this connection by applying direct balancing on a subset of the ImageNet-Captions dataset in \Cref{sec:experiments}, observing the effect on downstream model performance.

Motivated by these scenarios, we address the statistical problem outlined in \Cref{sec:intro} by analyzing balancing-based estimators. We then return to examples mentioned above in \Cref{sec:experiments}, illustrating how the theoretical analysis can be translated to algorithmic variants.

\section{Theoretical Analysis of Variance Reduction}\label{sec:analysis}
We now present theoretical guarantees on the mean squared error (MSE) of the data-balanced estimator $\param\pow{k}_n$ and highlight relevant points in the proofs. 
For readers' convenience, a notation table (\Cref{tab:notation}) is in \Cref{sec:a:notation}. We first give context on the main innovations of the analysis and then outline its high-level steps. These innovations include relating the nonlinear iterations of balancing over probability measures to linear operators on a vector space and using a singular value decomposition of these operators to quantify their effect after a finite number of iterations. Furthermore, by scaling the number of iterations appropriately, we can characterize the estimator using the limit of balancing iterations, which is an object of interest in applications including optimal transport.

\myparagraph{Preliminaries}
Recall the setting introduced in \Cref{sec:intro}, in which we consider sample spaces $(\X, \Y)$, along with true and unknown joint distribution $P$ on $\X \times \Y$ with known marginals $(\px, \py)$. For ease of presentation, we assume that $\abs{\X} = \abs{\Y} = m$, although the arguments do not rely on equal support sizes. We make the following assumption throughout, which is usually satisfied by the desired marginals $\px$ and $\py$, such as in the uniform cases discussed in \Cref{sec:ssl}: the target marginals $\px(x) > 0$ and $\py(y) > 0$ for all $x \in \X$ and $y \in \Y$.
We define $\pn\pow{0} = \pn$ as the empirical measure and for $k \geq 1$ construct
\begin{align}
    \pn\pow{k}(x, y) := \begin{cases}
       \tfrac{\px(x)}{\pnx\pow{k-1}(x)} \cdot \pn\pow{k-1}(x, y) & \text{ $k$ odd}\\
        \tfrac{\py(y)}{\pny\pow{k-1}(y)} \cdot \pn\pow{k-1}(x, y) & \text{ $k$ even}
    \end{cases}.
    \label{eq:raking}
\end{align}

By direct computation, we see that the iterations in~\eqref{eq:raking} are equivalent to applying~\eqref{eq:iterations} for $k$ odd and even, respectively. 
See \Cref{fig:iterative} (left) for a visualization of this procedure.
\begin{figure}[t]
    \centering
    \includegraphics[width=0.9\linewidth]{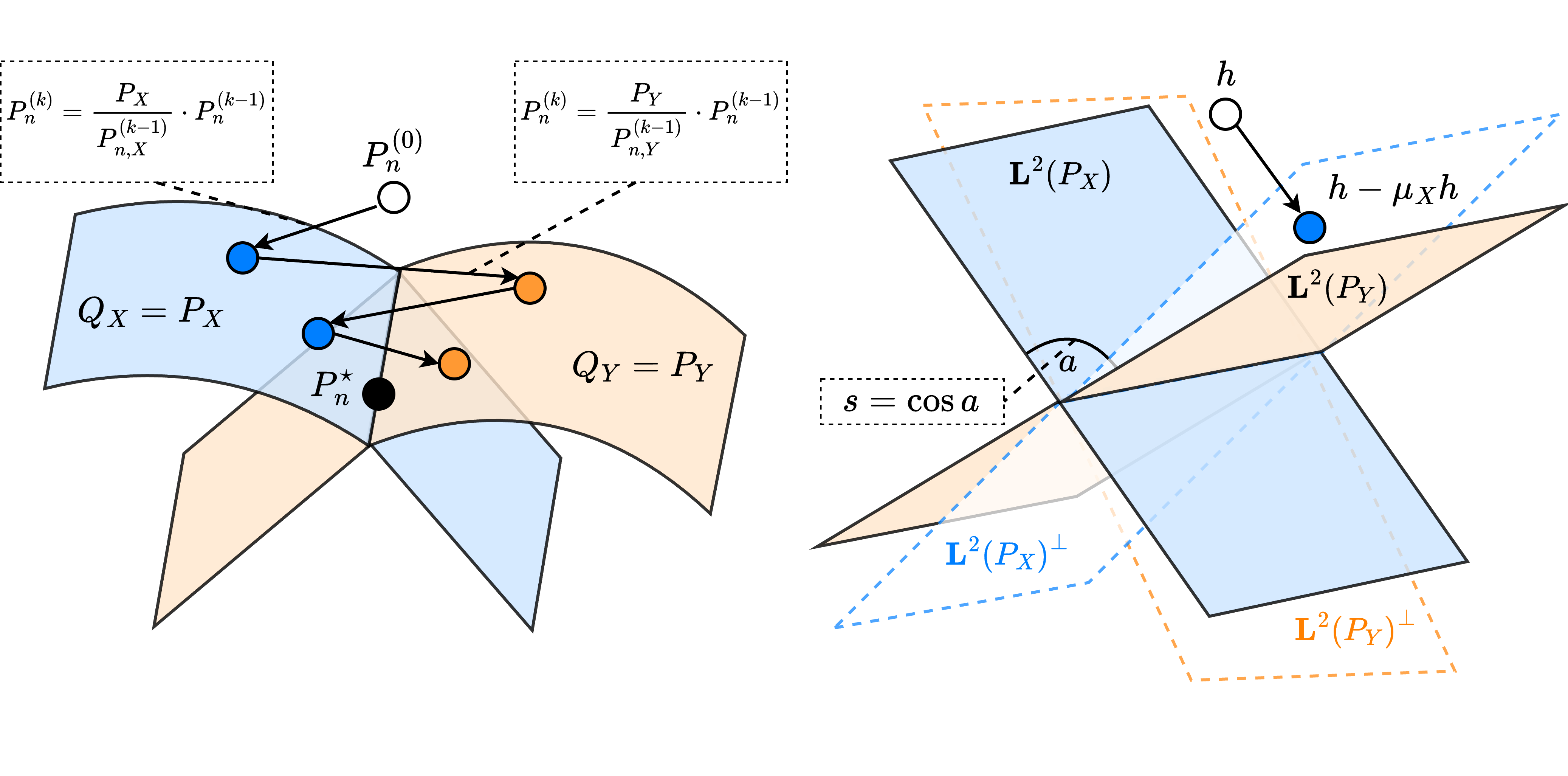}
    \vspace{-20pt}
    \caption{{\bf Data Balancing.} Nonlinear and linear operators associated with each iteration of~\eqref{eq:raking}. {\bf Left:} Visualization of the exact iterations of~\eqref{eq:raking} in the space of probability measures. The blue set contains joint distributions with $\X$-marginal equal to $\px$, whereas the orange set contains joint distributions with $\Y$-marginal equal to $\py$. {\bf Right:} Visualization of $\ltwo(P)$, the operators defining~\eqref{eq:variance_k}, and the singular values given in~\eqref{eq:svd1}. 
    }
    \label{fig:iterative}
\end{figure}
The iterations are well-defined for all $k$ under the event that $\Supp{\pnx} = \Supp{\px}$ and $\Supp{\pny} = \Supp{\py}$,
i.e., all observed row counts and column counts are non-empty.\footnote{Due to this technical consideration, we define $\pn\pow{k}$ to be the empirical measure $\pn$ when this condition is not satisfied, which we show occurs with low-probability. See \Cref{sec:a:statistical:main} for details.}

To provide background, the scheme of alternating the operators~\eqref{eq:raking} is often seen as an iterative algorithm to solve the problem
\begin{align}
    \min_{Q\in \Pi(\px, \py)} \kl(Q \Vert \pn\pow{0}),
    \label{eq:info_project}
\end{align}
where $\Pi(\px, \py)$ denotes the set of probability measures on $\X \times \Y$ that marginalize to $\px$ and $\py$ in each variable and $\kl(\cdot \Vert \cdot)$ denotes the Kullback-Leibler divergence. The iterations~\eqref{eq:raking} are based on the alternating minimization approach of solving
\begin{align*}
    \pn\pow{k}(x, y) := \begin{cases}
        \argmin_{\br{Q: Q_X = \px}} \kl(Q \Vert \pn\pow{k-1})  & \text{ $k$ odd}\\
        \argmin_{\br{Q: Q_Y = \py}} \kl(Q \Vert \pn\pow{k-1})  & \text{ $k$ even}
    \end{cases},
\end{align*}
which inspires the viewpoint of balancing as alternating \emph{information projections}. As we show in \Cref{sec:a:projection}, the iterations of~\eqref{eq:raking} can equivalently be defined using the KL, reverse KL, or $\chi^2$-divergences.
This viewpoint is relevant as previously, efforts have been made (e.g.~in \citet{Bickel1991Efficient}) to analyze the variance reduction afforded by the solution to~\eqref{eq:info_project} directly. However, quantifying the variance reduction (in terms of properties of $P$) using this approach is challenging, as there is no closed-form expression for the solution of~\eqref{eq:info_project}. A key mathematical outcome of our analysis is that the closed-form expressions of the projections~\eqref{eq:raking} can be used to compute the reduction in mean squared error at each iteration. Thus, by letting $k \equiv k(n) \rightarrow \infty$ (scaled appropriately against $n$), we can determine the reduction for the solution of~\eqref{eq:info_project} for large $n$. This is the subject of \Cref{thm:mse_main}.

\myparagraph{From Information Projections to Orthogonal Projections}
First, we will show that the variance reduction resulting from each nonlinear iteration of~\eqref{eq:raking} is associated with a linear operator applied to $h$. Thus, instead of analyzing the alternating information projections over probability measures, we may use familiar tools to understand alternating orthogonal projections in a vector space. To define them, we first let $\ltwo(P)$ to be the set of functions $h: \X \times \Y \rightarrow \R$ satisfying $\E{P}{h^2(X, Y)} < \infty$. Even though $\X \times \Y$ is finite, working within $\ltwo(P)$ will be analytically convenient. Let $\ltwo(\px)$ be the subspace of $\ltwo(P)$ containing functions that only depend on the first argument $x \in \X$ and define $\ltwo(\py)$ analogously. These are the solid-colored subspaces in \Cref{fig:iterative} (right).
Next, let $\M_X: \ltwo(P) \rightarrow \ltwo(\px)$ and $\M_Y: \ltwo(P) \rightarrow \ltwo(\py)$ be defined as, for any $h \in \ltwo(P)$,
\begin{align*}
    \M_X h = \argmin_{f \in \ltwo(\px)} \E{P}{(h(X, Y) - f(X))^2} \implies [\M_X h](x, y) := \E{P}{h(X, Y)|X}(x)
\end{align*}
The operator $\M_X$ is an orthogonal projection onto $\ltwo(\px)$. The orthogonal projection operator $\M_Y$ onto $\ltwo(\py)$ is defined analogously.
We may also define the conditional \emph{debiasing} operators $\C_X = I - \M_X$ and $\C_Y = I - \M_Y$, which each project onto the orthogonal complements of $\ltwo(\px)$ and $\ltwo(\py)$, visualized as subspaces with dotted border in \Cref{fig:iterative} (right).
To understand the importance of the conditional mean and debiasing operators, we give a recursive formula that forms the backbone of our analysis.
Define $\M_k = \M_X$ for $k$ odd and $\M_k = \M_Y$ for $k$ even, and define $\C_k$ similarly. 
Thus, by using the notation $Q(h) := \Ex_Q[h(X, Y)]$, we have by linearity of expectation that
\begin{align}
    [\pn\pow{k} - P](h) &= [\pn\pow{k} - P](\C_k h) + \overbrace{[\pn\pow{k} - P](\M_k h)}^{=0} \notag\\
    &= [\pn\pow{k-1} - P](\C_k h) + [\pn\pow{k} - \pn\pow{k-1}](\C_k h) \notag\\
    &= \underbrace{[\pn\pow{0} - P](\C_1 \ldots \C_k h)}_{\text{first-order term}} + \underbrace{\textstyle\sum_{\ell = 1}^k [\pn\pow{\ell} - \pn\pow{\ell-1}](\C_\ell \ldots \C_k h)}_{\text{higher-order terms}}.\label{eq:expansion}
\end{align}
To justify the first line, we discuss the case when $k$ is odd. Notice that $\M_X h$ is only a function of $X$, so its expectation only depends on $\px$ that is equal to $\pnx\pow{k}$ (the $\X$-marginal of $\pn\pow{k}$) by~\eqref{eq:raking}. The last line follows by unrolling the previous step $k - 1$ times. This recursive expansion is proven formally in \Cref{prop:recursion} in \Cref{sec:a:statistical}.
Given the expansion, the mean squared error can be computed by taking the expectation of squared~\eqref{eq:expansion}. We show that the second moment of the first-order term in~\eqref{eq:expansion} is equal to $\sigma_k^2 / n$ where
\begin{align}
    \sigma_0^2 &:= \Var(h) \text{ and } \sigma^2_k := \Var(\C_1 \ldots \C_k h) \text{ for } k \geq 1,
    \label{eq:variance_k}
\end{align}
and all other terms are $O(k^6n^{-3/2})$. Thus, by exactly computing the constant in the dominating term, we may quantify the asymptotic variance reduction. Our first main result concerns the higher-order terms and shows that it is indeed dominated by the first-order term. 
Note that the empirical mean $\param_n\pow{0} = \frac{1}{n}\sum_{i=1}^n h(X_i, Y_i)$ is unbiased, and so its MSE is equal to $\sigma_0^2/n$.
Define in addition
\begin{align*}
    p_\star := \min\{\min_{x} \px(x), \min_{y} \py(y)\}
\end{align*}
which measures the non-uniformity of the target marginals.
We have that $p_\star$ is positive because both $\px$ and $\py$ are positive. We now state the first main result.
\begin{theorem}\label{thm:mse_main}
    For a sequence of data balancing estimators $(\param_n\pow{k})_{k \geq 1}$ as defined in~\eqref{eq:est_framework}, there exists an absolute constant $C > 0$ and distribution dependent constant $s \in [0, 1)$ and such the following holds for $\sigma_{\text{gap}}^2 = \sigma_0^2 - \sigma_k^2$: For $n \geq C[\log_2(2n/p_\star) + m \log{(n+1)}] / p_\star^2$ and $k \geq 1$, we have
    \begin{align}
        \E{P}{(\param_n\pow{k} - \param)^2}
        &\le \frac{\sigma_0^2 - \sigma_{\text{gap}}^2}{n} + O\p{\frac{\; s^k}{n}}  +  \tilde{O}\p{\frac{k^6}{n^{3/2}}}.\label{eqn:mse_main}
    \end{align}
\end{theorem}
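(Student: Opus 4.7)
The natural starting point is the recursion~\eqref{eq:expansion}, which splits $\param_n\pow{k} - \param = T_1 + T_2$ with
$T_1 := [\pn\pow{0} - P](\C_1 \cdots \C_k h)$
and
$T_2 := \sum_{\ell=1}^k [\pn\pow{\ell} - \pn\pow{\ell-1}](\C_\ell \cdots \C_k h)$.
Squaring, the MSE decomposes into $\Ex[T_1^2] + 2\Ex[T_1 T_2] + \Ex[T_2^2]$, and the proof reduces to identifying $\Ex[T_1^2]$ as the leading variance term while showing that both the cross term and $\Ex[T_2^2]$ are of lower order.

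\textbf{Leading term.} Each $\C_\ell$ is an orthogonal projection in $\ltwo(P)$ onto the orthogonal complement of $\ltwo(\px)$ or $\ltwo(\py)$; in particular, $\C_1 \cdots \C_k h$ is mean-zero under $P$, so $T_1$ is a centered empirical average of $n$ i.i.d.\ variables. Hence $\Ex[T_1^2] = \Var_P(\C_1 \cdots \C_k h)/n = \sigma_k^2/n$, which equals $(\sigma_0^2 - \sigma_{\mathrm{gap}}^2)/n$ by the definition of $\sigma_{\mathrm{gap}}^2$.

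\textbf{Residual and cross terms.} For $T_2$, the closed form in~\eqref{eq:raking} yields, on odd steps, the telescoping identity
\[
[\pn\pow{\ell} - \pn\pow{\ell-1}](g) = -\sum_{x}\bigl(\pnx\pow{\ell-1}(x) - \px(x)\bigr)\cdot \Ex_{\pn\pow{\ell-1}}\!\bigl[g(X,Y)\mid X=x\bigr],
\]
and an analogous identity on even steps, so each increment is a linear statistic against $\pnx\pow{\ell-1}-\px$ (resp.\ $\pny\pow{\ell-1}-\py$) paired with a bounded conditional expectation. Bernstein-type concentration combined with an inductive $\ell_\infty$ control of $\|\pnx\pow{\ell}-\px\|_\infty$ and $\|\pny\pow{\ell}-\py\|_\infty$ across $\ell = 1,\dots,k$ — for which the sample-size hypothesis $n \gtrsim (\log_2(2n/p_\star) + m\log(n+1))/p_\star^2$ is exactly what is needed to guarantee full support of $\pnx, \pny$ with high probability — then propagates to $\Ex[T_2^2] = \tilde{O}(k^6/n^{3/2})$. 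For the cross term, crude Cauchy--Schwarz is too lossy to recover $s^k/n$; instead, I would invoke the spectral picture in Figure~\ref{fig:iterative} (right): the alternating product $\C_1 \cdots \C_k$ contracts on $\ltwo(P)$ at geometric rate $s^{k/2}$, where $s \in [0,1)$ is the squared cosine of the Friedrichs angle between $\ltwo(\px)$ and $\ltwo(\py)$ (equivalently, the largest nontrivial singular value of $\M_X\M_Y$). Combining this contraction with the $T_2$ bound yields $|\Ex[T_1 T_2]| = O(s^k/n)$. The low-probability event on which~\eqref{eq:raking} is undefined is dispatched by a Chernoff bound giving failure probability exponentially small in $np_\star$, which is absorbed into the $\tilde{O}$.

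\textbf{Main obstacle.} The most delicate step is extracting a polynomial (rather than exponential) $k$-dependence in $\Ex[T_2^2]$: the balancing operation is nonlinear, and a naive propagation of concentration across $k$ iterations inflates constants uncontrollably. The telescoping identity above is essential because it reduces each nonlinear step to a linear statistic against a bounded function, enabling moment bounds to accumulate additively. Verifying that the resulting exponent is $k^6$ (rather than something worse) and that the contraction rate $s$ for $\C_1 \cdots \C_k$ appears cleanly in the cross term — without being polluted by the ratio factors in~\eqref{eq:raking} — is the most technically involved part of the accounting.
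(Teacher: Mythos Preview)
Your decomposition and the identification of the leading term $\Ex[T_1^2]=\sigma_k^2/n$ are correct and match the paper. The gap is in where the $O(s^k/n)$ term comes from and how the cross term is handled.

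In the paper, the $O(s^k/n)$ contribution arises \emph{entirely} from the leading term. The singular value decomposition of the conditional mean operators (\Cref{prop:svd}) is used to show that $\sigma_k^2$ converges geometrically to its limit: \Cref{prop:variance_main} gives $\sigma_0^2-\sigma_k^2=\sigma_{\text{gap}}^2-O(s_2^{2k})$, so that $\sigma_k^2/n=(\sigma_0^2-\sigma_{\text{gap}}^2)/n+O(s^k/n)$. The cross term $\Ex[T_1T_2\ind_\S]$ and the squared remainder $\Ex[T_2^2\ind_\S]$ are \emph{both} absorbed into $\tilde O(k^6/n^{3/2})$ by direct high-probability bounds (\Cref{lem:main:cross_term}, \Cref{lem:main:squared_term}), with no spectral input at all; the $n^{-3/2}$ rate comes from the cross term, combining $|T_1|=\tilde O(k/\sqrt n)$ (Hoeffding) with $|T_2|=\tilde O(k^3/n)$ on a good event.

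Your proposed route---extracting $O(s^k/n)$ from $\Ex[T_1T_2]$ via contraction of $\C_1\cdots\C_k$---cannot work as stated. The alternating projections $\C_1\cdots\C_k\bar h$ do \emph{not} contract to zero; by von Neumann's theorem they converge to the projection of $\bar h$ onto $\ltwo(\px)^\perp\cap\ltwo(\py)^\perp$, which is generically nonzero (indeed $\|\C_1\cdots\C_k\bar h\|_{\ltwo(P)}^2=\sigma_k^2\to\sigma_0^2-\sigma_{\text{gap}}^2>0$). So there is no $s^k$ factor available in $T_1$ to borrow, and $T_2$ carries none either. The spectral rate $s$ governs only the speed at which $\sigma_k^2$ approaches its limit, not the size of the remainder.

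A secondary concern: your ``inductive $\ell_\infty$ control'' of $\|\pnx\pow{\ell}-\px\|_\infty$ across $\ell=1,\dots,k$ is exactly the step that, done naively, produces exponential blow-up in $k$ (this is the dependence in \citet{albertus2019auxiliary} that the paper explicitly improves). The paper avoids this not by induction but by the \emph{monotonicity} of the Sinkhorn KL divergences (\Cref{prop:nutz2021}), which bounds every intermediate marginal violation by the initial one $\kl(\pnx\|\px)$ uniformly in $\ell$. Combined with Pinsker's inequality and the method-of-types bound (\Cref{thm:cover1999}) on $\kl(\pnx\|\px)$, this is what keeps the $k$-dependence polynomial in \Cref{prop:marginals} and \Cref{prop:remainder_sum}.
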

The quantities $\sigma_{\text{gap}}^2$ and $s$ are quantified toward the end of this section and are dependent on eigendecays of the conditional mean operators for each variable under $P$. Furthermore, $\sigma_{\text{gap}}^2 > 0$ except for the pathological case of $\M_X h$ being a constant function.
Showing~\Cref{thm:mse_main} boils down to showing that the higher-order term in~\eqref{eq:expansion} is $O(n^{-1})$ with high probability. Using the expression~\eqref{eq:raking} and assuming that $\ell \geq 1$ is odd, we see that
\begin{align*}
    [\pn\pow{\ell} - \pn\pow{\ell-1}](\C_\ell \ldots \C_k h) = \sum_{x, y} \blue{\sbr{\frac{\px(x)}{\pnx\pow{\ell-1}(x)} - 1}} \cdot \orange{[\C_\ell \ldots \C_k h](x,y) \pn\pow{\ell-1}(x, y)}.
\end{align*}
The first (blue) term in the product quantifies the disagreement between the $\X$-marginal of $\pn\pow{\ell-1}$ and the true marginal, which can be bounded in terms of $\kl(\pnx\pow{0} \Vert \px)$ and is shown to be $O(n^{-1/2})$ with high probability via techniques from information theory. The second (orange) term can be unrolled recursively in a similar fashion to~\eqref{eq:expansion} itself, which will consequently be $O(n^{-1/2})$ as well; this is the most technical part of the analysis (see \Cref{sec:a:statistical:higher}). Our analysis also yields a bound for the sensitivity of balancing to misspecified marginals; see \Cref{sec:a:statistical:misspecified}.   

Given \Cref{thm:mse_main}, a natural next step is to quantify the gap between $\sigma_0^2$ and $\sigma_k^2$, which requires finer-grained properties of $\C_X$ and $\C_Y$. Notably, we show that as $k \rightarrow \infty$, $\sigma_k^2$ approaches a limiting value. Thus, via~\eqref{eqn:mse_main}, by using $k = o(n^{1/12})$ obtains asymptotic variance of the solution to~\eqref{eq:info_project}. This contrasts with \citet{albertus2019auxiliary}, in which the dependence of a quantity similar to~\eqref{eqn:mse_main} is exponential in $k$, meaning that $k = o(\log(n))$ is required for convergence under this argument.

\myparagraph{From Orthogonal Projections to Variance Reduction}
We now clarify what is precisely meant by the ``spectrum'' of the conditional mean operators $\M_X$ and $\M_Y$.
As proven using a \emph{singular value decomposition} (\Cref{prop:svd}) in \Cref{appendix:sub:svd}, there exists a basis $\{\alpha_j\}_{j=1}^m$ of $\ltwo(\px)$, a basis $\{\beta_j\}_{j=1}^m$ of $\ltwo(\py)$, and real values $\br{s_j}_{j = 1}^{m}$, that satisfy
\begin{align}
    \M_Y \alpha_j = s_j \beta_j \text{ and } \M_X \beta_j = s_j \alpha_j \text{ for } j \in \br{1, \ldots, m}.
    \label{eq:svd1}
\end{align}
Furthermore, $\alpha_1 = \ones_{\X}$ and $\beta_1 = \ones_{\Y}$ leading to the equality $\ip{f, \alpha_1}_{\ltwo(\px)} = \E{\px}{f(X)}$. Finally, 
$s_1 = 1$ and $s_j$ is non-negative and non-increasing in $j$. For a concrete example, consider $m = 2$, in which case $P$ can be written as a matrix in $\R^{2 \times 2}$ and elements of $\ltwo(\px)$ and $\ltwo(\px)$ are vectors in $\R^{2}$. Then, in the case of uniform marginals, we can verify directly that~\eqref{eq:svd1} can be satisfied by setting
\begin{align}
    \alpha_1 = \beta_1 = \begin{bmatrix}
        1 \\
        1 
    \end{bmatrix},
    \alpha_2 = \beta_2 = \begin{bmatrix}
        1 \\
        -1
    \end{bmatrix},
    \text{ and }
    P = \frac{1}{4}\begin{bmatrix}
        1+s  & 1-s\\
        1-s & 1+s
    \end{bmatrix}
    \label{eq:example1}
\end{align}
for $s = s_2$ (the second largest singular value). Thus, as $s \rightarrow 1$, the distribution becomes “fully dependent” as $Y$ and $X$ are completely determined by one another. As $s \rightarrow 0$, $P$ approaches the product measure. Geometrically, because $\alpha_1 = \beta_1$, we know that the angle $a$ between the subspaces $\ltwo(\px)$ and $\ltwo(\py)$ is given by the angle between $\alpha_2$ and $\beta_2$. By computing their inner product in $\ltwo(P)$, we have that $\ip{\alpha_2, \beta_2}_{\ltwo(P)} = \ip{P, \alpha_2\beta_2^\top} = s = \cos a$. Thus, $s = 0$ indicates orthogonality of these subspaces, alluding to the independence of $X$ and $Y$ (see the right panel of \Cref{fig:iterative}).

Returning to $m\geq 2$, we consider the following as a sufficient condition for variance reduction: the operators $\M_X$ and $\M_Y$ have a positive spectral gap, i.e., $s_2 < s_1$.
Note that this assumption is satisfied when $P(x, y) > 0$ for all $(x, y) \in \mc{X} \times \mc{Y}$ by the Perron–Frobenius Theorem \citep[Chapter 8]{horn2013matrix}.
Using the intuition from \Cref{fig:iterative}, this rules out pathological cases such as $Y$ being a deterministic function of $X$.
Under the spectral gap condition, the singular values $\br{s_j}_{j=2}^m$ that are strictly less than $1$ will determine a geometric rate of decay in variance given in \Cref{prop:variance_main}. The left and right singular functions $\alpha_j: \X \rightarrow \R$ and $\beta_j: \Y \rightarrow \R$ will define a useful coordinate system to represent projections of $h$ when analyzing $\param_n\pow{k}$. 

Indeed, let $\bar{h} = P(h)$ be the centered test function. Because $\M_X \bar{h}\in \ltwo(\px)$ and $\M_Y \bar{h} \in \ltwo(\py)$ 
, we may decompose this function on the two bases to write
\begin{align}
    \M_X \bar{h} = \sum_{j=1}^m u_j \alpha_j \quad \text{and} \quad \M_Y \bar{h} = \sum_{j=1}^m v_j \beta_j.\label{eq:coordinates}
\end{align}
\Cref{prop:variance_main} below relates the (normalized) variance $\sigma_k^2$ of the first-order term to the one of the sample mean $\param_n\pow{0}$.
In fact, it shows that the variance reduction $\sigma_0^2 - \sigma^2_k$ decays geometrically to the quantity
\begin{align*}
    \sigma_{\text{gap}}^2 := \sum_{j=2}^m \left[ u_{j}^2 + \frac{(v_{j} - s_j u_{ j})^2}{1 - s_j^2} \right].
\end{align*}
For simplicity, we only present the result for $k$ even, i.e., $\sigma_{2t}^2$.
\begin{corollary}\label{prop:variance_main}
    The variance reduction achieved by $t+1$ iterations of the $\C_Y \C_X$ operator can be quantified as
    \begin{align*}
        \sigma_0^2 - \sigma_{2(t+1)}^2
        = \sigma_{\text{gap}}^2 - \sum_{j=2}^m \frac{s_j^2(v_{j} - s_j u_{ j})^2}{1 - s_j^2} s_j^{4t} = \sum_{j=2}^m \left[ u_j^2 + (1-s_j^{4t+2}) \frac{(v_{j} - s_j u_{ j})^2}{1-s_j^2} \right].
    \end{align*}
\end{corollary}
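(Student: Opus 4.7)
The plan is to reduce $\sigma_{2(t+1)}^2$ to a squared $\ltwo(P)$-norm, unroll the iteration in the singular-value basis, and telescope via the Pythagorean identity. Write $\bar h := h - P(h)$ and $g_k := (\C_Y \C_X)^k \bar h$. Since $\C_Y$ is the orthogonal projection in $\ltwo(P)$ onto the orthogonal complement of $\ltwo(\py)$, and the constant function lies in $\ltwo(\py)$, each $g_k$ with $k \ge 1$ has zero $P$-mean, so $\sigma_{2(t+1)}^2 = \|g_{t+1}\|_{\ltwo(P)}^2$.

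The main recursion uses the identity $\|\C_X f\|^2 = \|f\|^2 - \|\M_X f\|^2$ (and its $\C_Y$-analogue), valid because $\C_X$ and $\C_Y$ are orthogonal projections. Applied to $g_{k+1} = \C_Y(\C_X g_k)$, this gives
\begin{align*}
    \|g_k\|^2 - \|g_{k+1}\|^2 = \|\M_X g_k\|^2 + \|\M_Y \C_X g_k\|^2,
\end{align*}
so the problem reduces to evaluating these two projection norms in closed form. Parametrizing $g_k = \bar h - \sum_{j \ge 2}(A_{k,j} \alpha_j + B_{k,j} \beta_j)$ and applying the SVD identities $\M_X \beta_j = s_j \alpha_j$ and $\M_Y \alpha_j = s_j \beta_j$ (together with $u_1 = v_1 = 0$, since $\alpha_1 = \beta_1 = \one$ and $\bar h$ has $P$-mean zero) yields the coupled recursion $A_{k+1, j} = u_j - s_j B_{k, j}$ and $B_{k+1, j} = s_j^2 B_{k, j} + (v_j - s_j u_j)$ with $A_{0, j} = B_{0, j} = 0$. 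The $B$-recursion solves in closed form as $B_{k, j} = (v_j - s_j u_j)(1 - s_j^{2k})/(1 - s_j^2)$, from which one verifies that, for $k \ge 1$,
\begin{align*}
    \M_X g_k = -\sum_{j \ge 2} s_j^{2k-1}(v_j - s_j u_j)\, \alpha_j, \qquad \M_Y \C_X g_k = \sum_{j \ge 2} s_j^{2k}(v_j - s_j u_j)\, \beta_j.
\end{align*}

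The last step is to telescope. The per-step decrement is $\sum_{j \ge 2} s_j^{4k-2}(1 + s_j^2)(v_j - s_j u_j)^2$ for $k \ge 1$, while at $k = 0$ it equals $\sum_{j \ge 2}[u_j^2 + (v_j - s_j u_j)^2]$. Summing the geometric series $\sum_{k=1}^{t} s_j^{4k-2}(1 + s_j^2) = s_j^2(1 - s_j^{4t})/(1 - s_j^2)$ and combining with the $k = 0$ contribution produces the second stated identity; the first form then follows by subtracting from $\sigma_{\text{gap}}^2 = \sum_{j \ge 2}\bigl[u_j^2 + (v_j - s_j u_j)^2/(1 - s_j^2)\bigr]$. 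The main obstacle lies in the closed-form step: the recursion couples $A_{k,j}$ and $B_{k,j}$, and at first glance $\M_X g_k$ and $\M_Y \C_X g_k$ contain several competing $u_j, v_j$-contributions. The key simplification is that, after substituting the solved $B_{k, j}$, the combinations $u_j - A_{k, j} - s_j B_{k, j}$ and $(v_j - s_j u_j) - (1 - s_j^2) B_{k, j}$ both collapse to $\pm s_j^{2k}(v_j - s_j u_j)$ (up to a factor of $s_j$), so only the single ``off-axis'' amplitude $v_j - s_j u_j$ survives in each per-step decrement, which is precisely what makes the geometric sum close.
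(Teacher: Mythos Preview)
Your proof is correct and follows essentially the same route as the paper: the Pythagorean identity $\|g_k\|^2-\|g_{k+1}\|^2=\|\M_X g_k\|^2+\|\M_Y\C_X g_k\|^2$ is exactly the content of \Cref{lem:spectral:step1}, and your telescoping step mirrors \Cref{prop:spectral}. The only difference is bookkeeping: the paper tracks the coordinates $u_{t,j}=\ip{\M_X g_t,\alpha_j}$ and $v_{t,j}=\ip{\M_Y g_t,\beta_j}$ of the \emph{current} iterate (obtaining the recursion $u_{t+1,j}=s_j^2 u_{t,j}-s_j v_{t,j}$, $v_{t+1,j}=0$ of \Cref{lem:spectral:step2}), whereas you track the \emph{accumulated shift} $A_{k,j},B_{k,j}$ of $g_k$ away from $\bar h$; the two parametrizations are related by $u_{k,j}=u_j-A_{k,j}-s_jB_{k,j}$ and $v_{k,j}=v_j-s_jA_{k,j}-B_{k,j}$, and both collapse to the same closed form $\pm s_j^{2k-1}(v_j-s_ju_j)$ for $k\ge 1$.
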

Intuitively, the operators $\C_X$ and $\C_Y$ are the main sources of the variance reduction via orthogonality.
Since $\alpha_1 = \ones_{\mc{X}}$, we can see that the reduction will always be strictly positive as long as $\M_X \bar{h}$ is not a constant function. Finally, using $s := s_2 \geq s_j$ for $j \geq 2$ gives the second term in \Cref{thm:mse_main}.

\section{Numerical Illustrations}\label{sec:experiments}
We illustrate how data balancing manifests in the motivating examples mentioned in \Cref{sec:ssl} with experiments with CLIP-type models. 
We focus here on zero-shot image classification tasks. Details on these experiments, and additional ones including linear probing and zero-shot retrieval, as well as an empirical investigation of the sensitivity to misspecified marginals, are all contained in \Cref{sec:a:experiments}. Code to reproduce the data and experiments can be found at \href{https://github.com/ronakdm/balancing}{https://github.com/ronakdm/balancing}.

\begin{figure}[t]
    \centering
    \includegraphics[width=\linewidth]{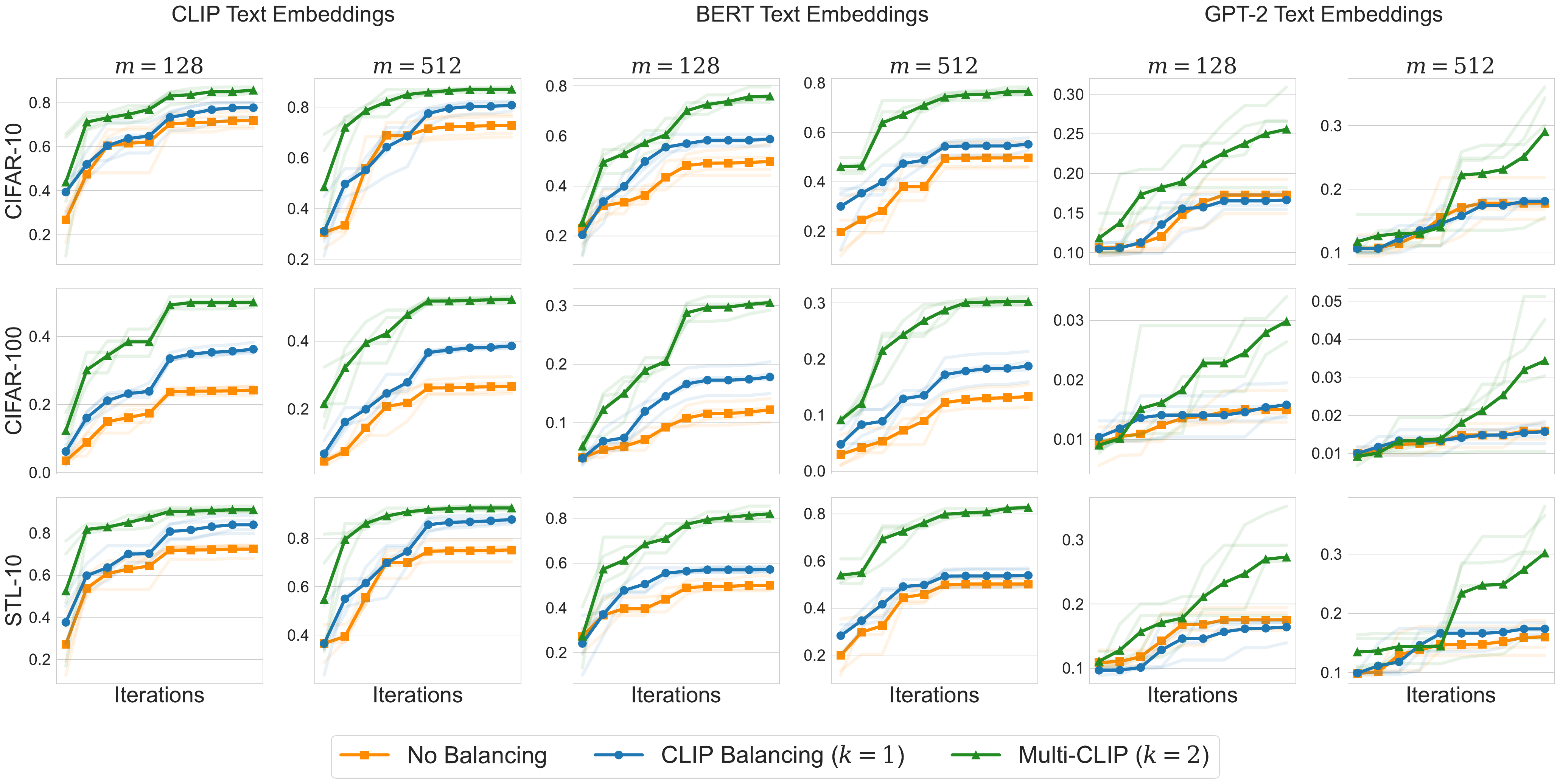}
    \caption{{\bf Zero-Shot Classification Performance across Embeddings, Batch Sizes, and Objectives.} The three vertical panels describe different choices of the text encoder $f_{\theta_T}$ which increases in quality from left to right; that is, pre-trained GPT-2, BERT, and CLIP embeddings, respectively. Within each vertical panel, examples include batch sizes $m = 128$ and $m=512$. Rows indicate various evaluation datasets from CIFAR-10, CIFAR-100, and STL-10. The $y$-axis of each plot indicates average per-class recall, whereas the $x$-axis indicates training iterations at the given batch size. 
    }
    \label{fig:zero_shot}
\end{figure}

\myparagraph{Model, Datasets, and Evaluation}
Throughout, we consider training variants of CLIP models (see \Cref{sec:ssl}), which require a dataset of image-caption pairs. For the training set, we use the ImageNet-Captions dataset \citep{fang2022data}, which pairs images from ImageNet \citep{deng2009imagenet} that were taken from Flickr with their original captions.
In the notation of \Cref{sec:ssl}, the model is specified by selecting an image encoder $f_{\theta_I}$ and a text encoder $f_{\theta_T}$.
In all cases, we use a fixed image/text encoder as a base vector representation and compose it with a trainable feed-forward neural network, i.e., $f_\theta = f_\theta^{\text{head}} \circ f^{\text{base}}$.
We fix the base image encoder as CLIP ViT-B/32 architecture pre-trained on LAION-2B \citep{schuhmann2022laionb}, and vary the base text encoder across embedding models of varying quality: GPT-2 \citep{radford2019language}, BERT \citep{devlin2019bert}, and CLIP-based encodings. When two CLIP encoders are used for the base image/text vector representation, they are taken from separate CLIP models (i.e.~the base representations are not dependent). We evaluate models based on zero-shot classification performance using the standard CLIP inference procedure: for any image $x$, a label $c \in \br{1, \ldots, C}$ is predicted by associating to each $c$ a natural language prompt $y_c$, and predicting the scores $s(x) = (s_1(x), \ldots, s_C(x))$, with
\begin{align}
    s_c(x) = \frac{e^{\ip{f_{\theta_I}(x), f_{\theta_T}(y_c)} / \tau}}{\sum_{c'=1}^C e^{\ip{f_{\theta_I}(x), f_{\theta_T}(y_{c'})}/\tau}} \label{eq:zeroshot}
\end{align}
for a temperature $\tau > 0$. Multiple prompting strategies can be used depending on the evaluation dataset, for which we average embeddings before applying~\eqref{eq:zeroshot}. We use the public \href{https://github.com/LAION-AI/CLIP_benchmark}{CLIP Benchmark} repository, using the datasets CIFAR-10, CIFAR-100, STL-10, with their default caption sets. 

\myparagraph{Data Balancing Effects} \Cref{fig:zero_shot} shows the zero-shot classification performance (in terms of average per-class recall) of variants depending on whether the \emph{contrastive learning} objective from \Cref{sec:ssl} is used. One iteration of balancing already leads to improvement in terms of downstream performance. Multiple balancing iterations lead to further improvements. See~\Cref{sec:a:experiments} for more details on this experiment and analogous ones on linear probing and zero-shot retrieval. 

\Cref{fig:metaclip} then shows how balancing can be used to adjust an entire pre-training set to given marginals based on metadata, as described in~\Cref{sec:ssl} in the \emph{metadata curation} example. After balancing, the target marginal has less than $2$ orders of difference. In terms of downstream performance, data balancing leads to some improvement in the smaller batch regime ($m = 512$) when curating the dataset. See~\Cref{sec:a:experiments} for more details on this experiment.

\begin{figure}[t]
    \centering
    \includegraphics[width=\linewidth]{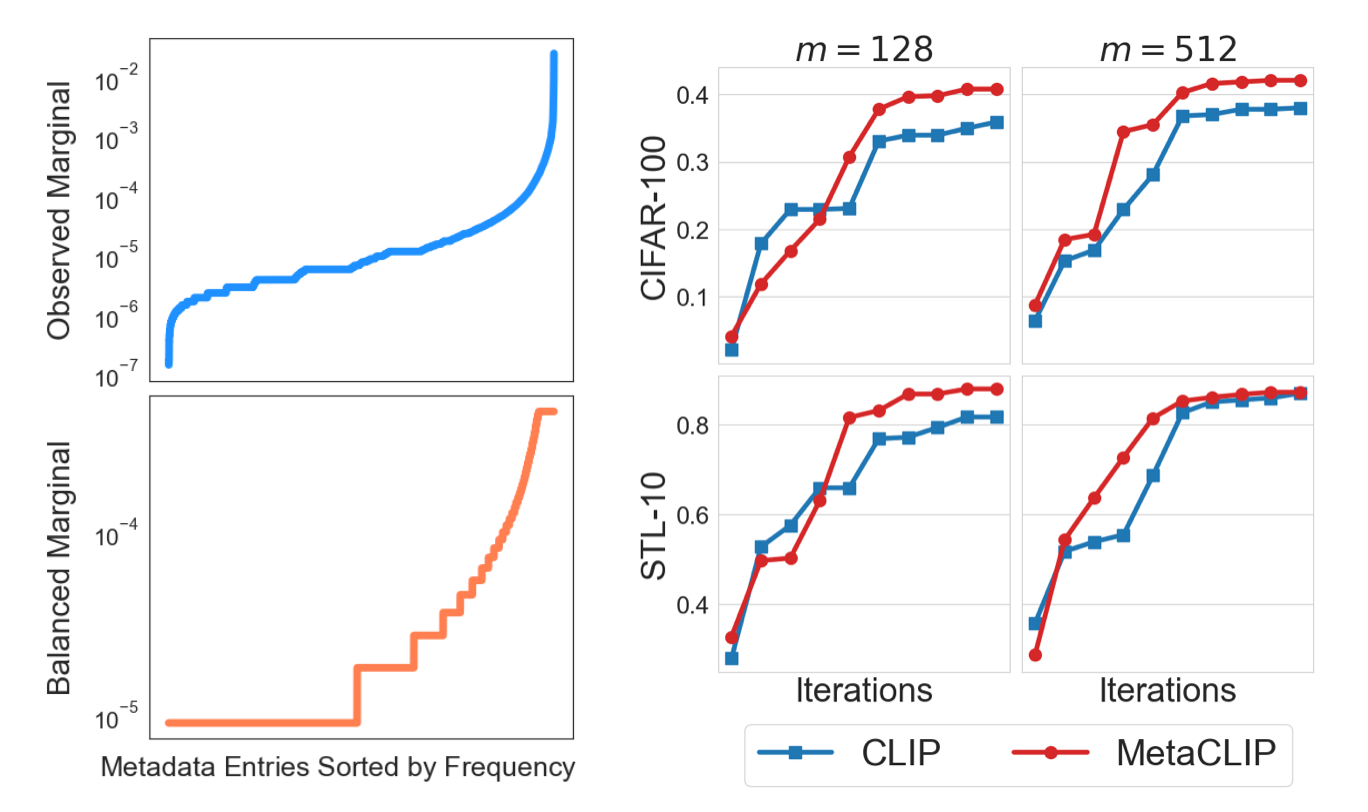}
    \caption{{\bf Balancing and Metadata Curation.} Depiction of balancing and metadata curation (Example 3 in \Cref{sec:ssl}) on ImageNet-Captions dataset, in which $\X$ represents image-caption pairs and $\Y$ represents keywords. {\bf Left:} Observed marginal $\pny$ (orange) and $\py$ (blue), which are sorted by order of increasing probability. {\bf Right:} Zero-shot evaluation of an embedding model trained using the standard CLIP loss original versus the balanced training set.}
    \label{fig:metaclip}
\end{figure}

\paragraph{Related Work}\label{sec:related}
Self-supervised learning has witnessed a surge of recent interest as datasets and computing hardware allow for larger, more capable models (see \citet{balestriero2023cookbook} and references therein). While we highlight in this paper the connections between data balancing and contrastive learning~\citep{radford2021learning}, we acknowledge that data balancing can also be broadly related to "self-distillation"~\citep{grill2020bootstrap, chen2021cvpr, oquab2024dinov}. 

Historical motivations for data balancing include census or survey data, in which $P_n$ is a cross-tabulation of (a limited number of) paired observations and the target marginals were estimated from large amounts of unpaired observations \citep{deming1940least, Ireland1968Contingency}. This situation is not unlike the present day---yet at a different scale---in which the amount of unstructured single-modality data (such as images) still dwarfs the amount of high-quality multimodal data \citep{gadre2023datacomp}. \citet{Bickel1991Efficient} proved classical asymptotic results on balancing estimators. Linear operators similar to the ones we use in \Cref{sec:analysis} also appear in their analysis. More recently, \citet{albertus2019auxiliary} studied such estimators from an asymptotic empirical process viewpoint. Our theoretical results significantly improve on those from~\citet{albertus2019auxiliary} primarily in the dependence of the number of iterations $k$ on the sample size $n$ to achieve convergence guarantees (from logarithmic to polynomial).

Matrix scaling is a popular algorithm for solving entropy-regularized optimal transport (EOT). We refer to~\citet{peyre2019computational} for a survey. See also~\citet{courty2017joint, shen2018wasserstein, peng2019moment} for interesting methods based on EOT in machine learning. Entropy-regularized optimal transport was one of the original inspirations for SSL techniques such as SwaV (see \Cref{sec:ssl}). While EOT is itself a deterministic optimization problem, a related statistical problem is the large-sample limits of EOT solutions when the marginal measures are estimated from data \citep{mena2019statistical, genevay2019aistats, klatt2020empirical}. We emphasize that, while this line of work shares the matrix scaling algorithm with our setting, the statistical problem is entirely distinct; in statistical EOT, the target marginal distributions are computed from observations of independent, unpaired data, and the initial measure can be computed from the cost function. In our setting, the data are dependent, forming the random initial measure $P_n$, whereas $\px$ and $\py$ are fixed auxiliary information.

\section{Conclusion}\label{sec:conclusion}
We showed how several disparate techniques used towards the training of foundation models are instances of a data balancing algorithm, which has the unsuspected benefit of reducing the variance of learning objectives involving multiple sources of data. We proved a new non-asymptotic bound on the mean-squared error of balanced estimators as they adjust to the given marginals. We also highlight the key roles of conditional expectation operators in quantifying that variance reduction effect. Finally, we translated the marginal balancing interpretation of several training practices for foundation models into algorithmic variants that warrant further investigation. Exploring variants incorporating prior information on the data sources is also an interesting venue for future work.

\bibliographystyle{abbrvnat}
\bibliography{biblio}

\clearpage
\appendix
\addcontentsline{toc}{section}{Appendix} %
\part{Appendix} %
\parttoc %

\clearpage

\section{Notation}\label{sec:a:notation}
\begin{table}[ht]
    \centering

    \begin{adjustbox}{max width=0.8\linewidth}
    \renewcommand{\arraystretch}{1.4}
    \begin{tabular}{cc}
    \toprule
        {\bf Symbol} & {\bf Description}\\
        \midrule
        $\X$, $\Y$      & Sample spaces for two data sources.\\
        $m$, $l$      & 
        \begin{tabular}{c}
             Support sizes $m = \abs{\X}$ and $l = \abs{\Y}$.\\
             We sometimes assume $m = l$ for ease of presentation.
        \end{tabular}\\
        $P$      & Probability measure on $\X \times \Y$ (the data-generating distribution).\\
        $n$      & Sample size.\\
        $(X_1, Y_1), \ldots, (X_n, Y_n)$      & Independent and identically distributed sample from $P$.\\
        $P_n$      & Empirical measure of $\br{(X_i, Y_i)}_{i=1}^n$.\\
        $Q_X, Q_Y$      & Marginals of measure $Q$ on $\X \times \Y$, e.g.~$R_X$, $\py$, $\pnx$, etc.\\
        $\Supp{Q}$      & For measure $Q$ over $\mc{Z}$, the set of values $z \in \mc{Z}$ such that $Q(z) > 0$.\\
        $Q(h)$      & The expected value of $h$ under $Q$, or $\E{Q}{h(X, Y)}$.\\
        
        \midrule

        $(\pn\pow{k})_{k \geq 1}$ & Sequence of iterations of~\eqref{eq:raking}.\\
        $k$ & Iteration count of~\eqref{eq:raking}.\\
        $\S$ & The event $\{\Supp{\pnx} = \Supp{\px} \text{ and } \Supp{\pny} = \Supp{\py}\}$.\\
        $h$ & Test function $h: \X \times \Y \rightarrow \R$ of interest.\\
        $\param$ & The estimand $\sum_{x, y} h(x, y) P(x, y)$.\\
        $\param\pow{k}_n$ & The estimator $\sum_{x, y} h(x, y) \pn\pow{k}(x, y)$, when well-defined.\\
        $\tilde{\param}\pow{k}_n$ & The estimator $\tilde{\param}_n\pow{k} := \param\pow{k}_n \ind_{\mc{S}} + \param_n\pow{0} \ind_{\mc{S}^c}$.\\
        $\gn\pow{k}(h)$ & Normalized error $\sqrt{n}(\tilde{\param}\pow{k}_n - \param)$.\\
        $\vn\pow{k}(h)$ & Remainder defined in \Cref{prop:recursion}.\\
        $\bar{h}$ & Centered function $h - \E{P}{h}$.\\
        $\sigma_k^2$ & Variance term $\E{P}{(\C_1, \ldots \C_k h)^2}$.\\
        $p_\star$ & $\min\{\min_{x} \px(x), \min_{y} \py(y)\}$.\\
        
        \midrule
        
        $\ltwo(P)$ & Functions $h: \X \times \Y \rightarrow \R$ (as $\X \times \Y$ is finite).\\
        $\ltwo(\px), \ltwo(\py)$ & Subspaces of $\ltwo(P)$ containing functions only of $x \in \X$ and $y \in \Y$, respectively.\\
        $\M_X, \M_Y$ & 
        \begin{tabular}{c}
             Conditional expectation operators $[\M_X h](x) := \E{P}{h(X, Y)|X}(x)$\\
             and $[\M_Y h](y) := \E{P}{h(X, Y)|Y}(y)$.
        \end{tabular}\\
        $\C_X, \C_Y$ & Debiasing/centering operators $\C_X = I - \M_X$ and $\C_Y = I - \M_Y$.\\
        $\M_k, \C_k$ & $(\M_X, \C_X)$ for $k$ odd and $(\M_Y, \C_Y)$ for $k$ even.\\
        $\br{s_j}_{j=1}^m$ & Singular values in \Cref{prop:svd}.\\
        $\br{\alpha_j}_{j=1}^m, \br{\beta_j}_{j=1}^m$ & Bases for $\ltwo(\px)$ and $\ltwo(\py)$ in \Cref{prop:svd}.\\
        \bottomrule
    \end{tabular}
    \end{adjustbox}
    \vspace{6pt}
    \caption{Notation used throughout the paper.}
    \label{tab:notation}
\end{table}

\section{Linear Operators and Variance Reduction}\label{sec:a:linear}
This section is dedicated to establishing the variance reduction result in \Cref{prop:variance_main} by employing properties of the Markov operators introduced in \Cref{sec:analysis}. In the first part, we establish \Cref{prop:svd}, the singular value decomposition that defines the quantities appearing in \Cref{prop:variance_main}. In the second part, we quantify the difference between $\sigma_0^2$ and $\sigma_k^2$ for even and odd iterations of $k$. 

\subsection{Singular Value Decomposition}
\label{appendix:sub:svd}

Recall the conditional mean operators $\M_X$ and $\M_Y$ from \Cref{sec:analysis}, 
\begin{align*}
    [\M_X h](x) := \E{}{h(X, Y) | X}(x) \text{ and } [\M_Y h](y) := \E{}{h(X, Y) | Y}(y),
\end{align*}
with the corresponding debiasing (a.k.a.~centering) operators defined by $\C_X = I - \M_X$ and $\C_Y = I - \M_Y$.
\begin{restatable}{proposition}{svd}\label{prop:svd}
    There exists a basis $\{\alpha_j\}_{j=1}^m$ of $\ltwo(\px)$, a basis $\{\beta_j\}_{j=1}^m$ of $\ltwo(\py)$, and real values $\br{s_j}_{j = 1}^{m}$, which satisfy:
    \begin{align}
        \M_Y \alpha_j = s_j \beta_j \text{ and } \M_X \beta_j = s_j \alpha_j \text{ for } j \in \br{1, \ldots, m},
        \label{eq:svd}
    \end{align}
    $\alpha_1 = \ones_{\mc{X}}$, $\beta_1 = \ones_{\mc{Y}}$, $s_1 = 1$ and $s_j$ is non-negative and non-increasing in $j$.
\end{restatable}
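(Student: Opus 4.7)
The plan is to realize $\M_X$ and $\M_Y$, appropriately restricted, as mutually adjoint operators between finite-dimensional Hilbert spaces and then invoke the standard singular value decomposition. Writing $B : \ltwo(\px) \to \ltwo(\py)$ for the restriction of $\M_Y$ and $A : \ltwo(\py) \to \ltwo(\px)$ for the restriction of $\M_X$, a direct computation shows that $A = B^\ast$: for $f \in \ltwo(\px)$ and $g \in \ltwo(\py)$, both $\ip{Bf, g}_{\ltwo(\py)}$ and $\ip{f, Ag}_{\ltwo(\px)}$ unfold to $\E{P}{f(X) g(Y)}$. Since both $\ltwo(\px)$ and $\ltwo(\py)$ have dimension $m$, we are cleanly in finite-dimensional linear algebra.

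Next I would apply the finite-dimensional SVD theorem to the adjoint pair $(A, B)$, obtaining orthonormal bases $\br{\alpha_j}_{j=1}^m$ of $\ltwo(\px)$ and $\br{\beta_j}_{j=1}^m$ of $\ltwo(\py)$ together with values $s_1 \geq s_2 \geq \cdots \geq s_m \geq 0$ satisfying $B \alpha_j = s_j \beta_j$ and $A \beta_j = s_j \alpha_j$; this already establishes~\eqref{eq:svd} together with the non-negativity and monotonicity of $\br{s_j}$. To pin down $s_1 = 1$, I would first note that conditional expectation is an $\ltwo$-contraction by Jensen's inequality: $\norm{Ag}_{\ltwo(\px)}^2 = \E{P}{\E{P}{g(Y)|X}^2} \le \E{P}{g(Y)^2} = \norm{g}_{\ltwo(\py)}^2$, and similarly for $B$, so $s_1 \le 1$. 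On the other hand, $\ones_{\X}$ has unit norm in $\ltwo(\px)$ and $B \ones_{\X} = \ones_{\Y}$ is of unit norm in $\ltwo(\py)$, so the value $s_1 = 1$ is attained.

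Finally, to force the specific choice $\alpha_1 = \ones_{\X}$ and $\beta_1 = \ones_{\Y}$, I would construct the SVD through the self-adjoint positive semidefinite operator $AB$ on $\ltwo(\px)$, whose eigenvalues are $\br{s_j^2}$ with top eigenvalue $1$. Since $\ones_{\X}$ lies in the eigenspace for $1$, I can extend it to an orthonormal eigenbasis of that eigenspace via Gram--Schmidt and then complete with eigenvectors of the remaining eigenspaces; defining $\beta_j := s_j^{-1} B \alpha_j$ whenever $s_j > 0$ (and completing arbitrarily within $\ker A$ otherwise) then produces $\beta_1 = \ones_{\Y}$ automatically. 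The only step requiring mild care is the potential multiplicity of the top eigenvalue: if $s_1 = 1$ has multiplicity greater than one, the SVD does not single out $\ones_{\X}$, so this explicit placement inside the top eigenspace is needed. Apart from this minor bookkeeping, the argument is a direct application of the finite-dimensional spectral theorem, and I do not anticipate any other substantial obstacle.
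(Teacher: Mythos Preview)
Your proposal is correct and follows essentially the same route as the paper: establish adjointness of the restricted conditional-mean operators, invoke the singular value decomposition, and identify $s_1 = 1$ with the constant functions via the contraction property of conditional expectation. The only minor difference is that you use Jensen's inequality where the paper uses Cauchy--Schwarz for the bound $s_1 \le 1$, and you are slightly more explicit than the paper about handling possible multiplicity of the top singular value when placing $\ones_{\X}$ as $\alpha_1$.
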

\begin{proof}
    When $\M_X$ is restricted to $\ltwo(\py)$ and $\M_Y$ is restricted to $\ltwo(\px)$, these operators are in fact adjoint in $\ltwo(P)$, as by the tower property we have the relation
    \begin{align*}
        \ip{f, \M_X g}_{\ltwo(\px)} = \E{}{f(X)\E{}{g(Y)|X}} = \E{}{\E{}{f(X)|Y}g(Y)} = \ip{\M_Y f, g}_{\ltwo(\py)}.
    \end{align*}
    Since $\M_Y: \ltwo(P_X) \rightarrow \ltwo(P_Y)$ is a compact linear operator, by \citet[Section IV.1 Theorem 1.1]{gohberg1990classes} and \citet[Section IV.1 Corollary 1.2]{gohberg1990classes}, we have that $\M_Y$ admits a singular value decomposition satisfying~\eqref{eq:svd}. Next, we show that $s_1 \leq 1$ and that $\ones_{\mc{X}}$ is an eigenvector of $\M_X\M_Y: \ltwo(\px) \rightarrow \ltwo(\px)$ with eigenvalue $1$, which confirms that $s_1 = 1$ and $\alpha_1 = \ones_{\mc{X}}$ by the definition of singular values (arguing symmetrically achieves $\beta_1 = \ones_{\mc{Y}}$). By the variational representation of singular values \citep[Section IV.1 Equation (2)]{gohberg1990classes}, we have that
    \begin{align*}
        \sup_{f: \norm{f}_{\ltwo(\px)} = 1} \norm{\M_Y f}_{\ltwo(\py)} = s_1.
    \end{align*}
    Consider any $f \in \ltwo(\px)$ such that $\norm{f}_{\ltwo(\px)} = 1$. Define the conditional probability $\pxy(x|y) = P(x,y)/\py(y)$ which is well-defined by assumption. Then, by the Cauchy-Schwarz inequality in $\ltwo(\pxy)$,
    \begin{align*}
        \norm{\M_Y f}_{\ltwo(\py)}^2 &= \sum_{y \in \Y} \p{\sum_{x \in \X} f(x) \pxy(x|y)}^2 \py(y)\\
        &\leq \sum_{y \in \Y} \sum_{x \in \X} f^2(x) \pxy(x|y) \py(y)\\
        &= \sum_{x \in \X}  f^2(x) \sum_{y \in \Y} P(x, y)\\
        &= \norm{f}_{\ltwo(\px)}^2 = 1.
    \end{align*}
    This proves that $s_1 \leq 1$. For equality, notice that $\M_X\M_Y \ones_{\mc{X}} = \M_X\ones_{\mc{Y}} = \ones_{\mc{X}}$, completing the proof.
\end{proof}

\subsection{Proof of Main Results}

From \Cref{prop:svd}, we establish two bases $\br{\alpha_j}_{j=1}^m$ and $\br{\beta_j}_{j=1}^m$ of $\ltwo(\px)$ and $\ltwo(\py)$, respectively. These bases span the range of the operators $\M_X$ and $\M_Y$.
We will consider the repeated application of the operator $\C_Y \C_X$, a sequence of two centering operations on some function $h \in \ltwo(P)$, and compare
\begin{align*}
    \E{}{((\C_Y \C_X)^t \bar{h})^2} \text{ against } \E{}{\bar{h}^2}
\end{align*}
for $\bar{h} = h - \E{P}{h}$. We establish the main result by measuring the reduction in variance from a single application, in terms of the coordinates of the function of interest on each of the two subspaces. We will then observe how these coordinates change iteration-to-iteration to give the final result.

\begin{lemma}\label{lem:spectral:step1}
    For any $h \in \ltwo(P)$ such that $\E{P}{h} = 0$, let 
    \begin{align*}
        \M_X h = \sum_{j=1}^m u_j \alpha_j \text{ and } \M_Y h = \sum_{j=1}^m v_j \beta_j.
    \end{align*}
    Then, we have that
    \begin{align*}
        \E{}{(\C_Y \C_X h)^2} = \E{}{h^2} - \sum_{j=2}^m u_j^2 - \sum_{j=2}^m (v_j - s_j u_j)^2.
    \end{align*}
\end{lemma}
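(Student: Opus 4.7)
The plan is to exploit the Pythagorean theorem twice, using that $\M_X$ and $\M_Y$ are orthogonal projections in $\ltwo(P)$, and then identify the ``lost'' squared norms by expanding $\M_X h$ and $\M_Y \C_X h$ in the SVD bases. By \Cref{prop:svd} we may take $\{\alpha_j\}$ and $\{\beta_j\}$ to be orthonormal bases of $\ltwo(\px)$ and $\ltwo(\py)$, so that Parseval applies inside each subspace.

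First I would note that because $\M_X$ is an orthogonal projection onto $\ltwo(\px)$ and $\C_X = I - \M_X$ is the orthogonal projection onto its complement, the Pythagorean identity gives
\begin{align*}
    \E{}{h^2} \;=\; \|\M_X h\|_{\ltwo(P)}^2 + \|\C_X h\|_{\ltwo(P)}^2.
\end{align*}
Applying the same identity with $\M_Y$ and $\C_Y$ to the function $\C_X h \in \ltwo(P)$ yields
\begin{align*}
    \|\C_X h\|_{\ltwo(P)}^2 \;=\; \|\M_Y \C_X h\|_{\ltwo(P)}^2 + \|\C_Y \C_X h\|_{\ltwo(P)}^2.
\end{align*}
Chaining these two identities reduces the proof to evaluating $\|\M_X h\|^2$ and $\|\M_Y \C_X h\|^2$ in closed form using the coefficients $\{u_j\}$, $\{v_j\}$, and $\{s_j\}$.

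The first of these is immediate: Parseval in the orthonormal basis $\{\alpha_j\}$ gives $\|\M_X h\|^2 = \sum_{j=1}^m u_j^2$, and the centering hypothesis $\E{P}{h}=0$ combined with $\alpha_1 = \ones_{\X}$ forces $u_1 = \ip{h, \alpha_1}_{\ltwo(P)} = 0$, collapsing the sum to $\sum_{j=2}^m u_j^2$. Analogously $v_1 = 0$. The key computation is the second norm: writing $\M_Y \C_X h = \M_Y h - \M_Y \M_X h$, expanding $\M_X h = \sum_j u_j \alpha_j$, and applying the SVD identity $\M_Y \alpha_j = s_j \beta_j$ from \Cref{prop:svd}, I obtain
\begin{align*}
    \M_Y \C_X h \;=\; \sum_{j=1}^m v_j \beta_j - \sum_{j=1}^m s_j u_j \beta_j \;=\; \sum_{j=1}^m (v_j - s_j u_j)\,\beta_j.
\end{align*}
Parseval in $\{\beta_j\}$ then yields $\|\M_Y \C_X h\|^2 = \sum_{j=1}^m (v_j - s_j u_j)^2$, and the $j=1$ term vanishes since $u_1 = v_1 = 0$.

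Assembling the pieces gives exactly
\begin{align*}
    \E{}{(\C_Y \C_X h)^2} \;=\; \E{}{h^2} - \sum_{j=2}^m u_j^2 - \sum_{j=2}^m (v_j - s_j u_j)^2.
\end{align*}
The only place anything nontrivial happens is the step in which the SVD pairing $\M_Y \alpha_j = s_j \beta_j$ transports the $\{u_j\}$ coordinates from the $\{\alpha_j\}$ side over to the $\{\beta_j\}$ side; everything else is Pythagoras plus Parseval. Because both $\M_X$ and $\M_Y$ are genuine orthogonal projections in $\ltwo(P)$ (as noted in \Cref{sec:analysis} and verified through the adjointness computation in the proof of \Cref{prop:svd}), no subtlety arises from changing inner-product spaces when applying Parseval on the two subspaces.
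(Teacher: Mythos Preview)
Your proof is correct and follows essentially the same approach as the paper's: both arguments use the Pythagorean decomposition $\E{}{h^2} = \|\M_X h\|^2 + \|\C_X h\|^2$ and then $\|\C_X h\|^2 = \|\M_Y \C_X h\|^2 + \|\C_Y \C_X h\|^2$, and both evaluate $\|\M_Y \C_X h\|^2$ by applying the SVD relation $\M_Y \alpha_j = s_j \beta_j$ to compute the $\beta_j$-coefficients as $v_j - s_j u_j$. The only cosmetic difference is that you compute $\M_Y \C_X h$ directly as an element of $\ltwo(\py)$ and then invoke Parseval, whereas the paper computes each inner product $\ip{\M_Y \C_X h, \beta_j}$ separately.
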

\begin{proof}
    By orthogonality, we have that
    \begin{align*}
        \E{}{(\C_Y \C_X h)^2} &= \E{}{((I - \M_Y)\C_X h)^2}\\
        &= \E{}{(\C_X h)^2} - 2 \E{}{(\C_X h) (\M_Y \C_X h)} + \E{}{(\M_Y \C_X h)^2} \\
        &= \E{}{(\C_X h)^2} - 2 \py((\M_Y \C_X h)^2) + \py((\M_Y \C_X h)^2) \\
        &= \E{}{(\C_X h)^2} - \py((\M_Y \C_X h)^2)\\
        &= \E{}{h^2} - \px((\M_X h)^2) - \py((\M_Y \C_X h)^2).
    \end{align*}
    Because $P(h) = 0$, it holds by the tower property of conditional expectation that $\px(\M_X h) = 0$, which implies that 
    \begin{align*}
        u_1 = \ip{\M_X h, \alpha_1}_{\ltwo(\px)} = 0 \implies \px((\M_X h)^2) = \sum_{j=2}^m u_j^2.
    \end{align*}
    For the second term, observe that $\px(\C_X h) = 0$, so it holds by the tower property that $\py(\M_Y \C_X h) = 0$, so
    \begin{align*}
        \py((\M_Y \C_X h)^2) = \sum_{j=2}^m \p{\ip{\M_Y \C_X h, \beta_j}_{\ltwo(\py)}}^2.
    \end{align*}
    Next, we compute the term in the square by applying \Cref{prop:svd}:
    \begin{align*}
        \ip{\M_Y \C_X h, \beta_j}_{\ltwo(\py)} &= \ip{\M_Y h, \beta_j}_{\ltwo(\py)} - \ip{\M_Y \M_X h, \beta_j}_{\ltwo(\py)} \\
        &= v_j - \ip{\M_Y \sum_{k=1}^m u_k \alpha_k, \beta_j}_{\ltwo(\py)} \\
        &= v_j - \ip{\sum_{k=1}^m u_k s_k \beta_k, \beta_j}_{\ltwo(\py)} \\
        &= v_j - s_j u_j,
    \end{align*}
    which completes the proof.
\end{proof}
\Cref{lem:spectral:step1} ensures that we have a reduction on each iteration, with a formula that depends on the coordinates of the function on each subspace. Because these coordinates change every iteration, we track them in the next lemma.
Define $h_0 = \bar{h}$ and $h_{t+1} = (\C_Y \C_X) h_t$, along with the constants $\br{u_{t, j}}_{j=1}^m$ and $\br{v_{t, j}}_{j=1}^m$ given by
\begin{align*}
    \M_X h_t = \sum_{j=1}^m u_{t, j} \alpha_j \text{ and } \M_Y h_t = \sum_{j=1}^m v_{t, j} \beta_j.
\end{align*}
We have the following.
\begin{lemma}\label{lem:spectral:step2}
    For all $t \geq 0$, it holds that
    \begin{align*}
        u_{t+1, j} &= s_j^2 u_{t, j} - s_j v_{t, j},\\
        v_{t+1, j} &= 0.
    \end{align*}
\end{lemma}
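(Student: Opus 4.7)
The plan is to prove both identities directly by expanding $h_{t+1} = \C_Y \C_X h_t$ and then applying $\M_Y$ and $\M_X$ in turn, using the SVD relations of \Cref{prop:svd} together with the fact that $\M_X$ and $\M_Y$ are orthogonal projections.

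First, for $v_{t+1, j} = 0$, I would simply observe that $\M_Y$ is idempotent, so $\M_Y \C_Y = \M_Y - \M_Y^2 = 0$. Applied to $h_{t+1} = \C_Y(\C_X h_t)$, this yields $\M_Y h_{t+1} = 0$. Since $\{\beta_j\}_{j=1}^m$ is a basis of $\ltwo(\py)$, taking inner products against each $\beta_j$ gives $v_{t+1, j} = 0$ for every $j$. This step is essentially free once we recall that centering kills the conditional mean.

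Next, for the formula for $u_{t+1, j}$, the plan is to expand
\begin{align*}
\M_X h_{t+1} = \M_X (I - \M_Y)(I - \M_X) h_t
= \M_X h_t - \M_X \M_Y h_t - \M_X^2 h_t + \M_X \M_Y \M_X h_t.
\end{align*}
Using $\M_X^2 = \M_X$, the first and third terms cancel, leaving $\M_X h_{t+1} = -\M_X \M_Y h_t + \M_X \M_Y \M_X h_t$. Now I plug in the expansions $\M_X h_t = \sum_j u_{t,j} \alpha_j$ and $\M_Y h_t = \sum_j v_{t,j} \beta_j$ and apply \Cref{prop:svd}: since $\M_X \beta_j = s_j \alpha_j$ and $\M_Y \alpha_j = s_j \beta_j$, one obtains $\M_X \M_Y h_t = \sum_j s_j v_{t,j} \alpha_j$ and $\M_X \M_Y \M_X h_t = \sum_j s_j^2 u_{t,j} \alpha_j$. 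Collecting gives $\M_X h_{t+1} = \sum_j (s_j^2 u_{t,j} - s_j v_{t,j}) \alpha_j$, and matching coefficients in the basis $\{\alpha_j\}$ yields $u_{t+1, j} = s_j^2 u_{t,j} - s_j v_{t,j}$.

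There is essentially no obstacle here; the only thing worth flagging is the bookkeeping in applying the two SVD identities in the correct order ($\M_X$ acting on $\beta_j$ produces $\alpha_j$ and vice versa), so I would make explicit the chain $\alpha_j \xmapsto{\M_Y} s_j \beta_j \xmapsto{\M_X} s_j^2 \alpha_j$ to avoid sign or index slips. Together the two identities match the claim of the lemma, and they set up the later recursion that governs how the coordinates evolve under iteration of $\C_Y \C_X$, which feeds into \Cref{prop:variance_main}.
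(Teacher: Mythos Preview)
Your proposal is correct and follows essentially the same approach as the paper: both expand $\M_X \C_Y \C_X = \M_X(I - \M_X - \M_Y + \M_Y \M_X)$, cancel via idempotence $\M_X^2 = \M_X$, and then apply the SVD relations $\M_Y \alpha_j = s_j \beta_j$, $\M_X \beta_j = s_j \alpha_j$ to read off the coefficient $s_j^2 u_{t,j} - s_j v_{t,j}$, while $v_{t+1,j}=0$ follows in both from $\M_Y \C_Y = 0$. The only cosmetic difference is that the paper computes $u_{t+1,j}$ as an inner product $\ip{\M_X h_{t+1}, \alpha_j}_{\ltwo(\px)}$ whereas you expand $\M_X h_{t+1}$ in the basis $\{\alpha_j\}$ and match coefficients.
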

\begin{proof}
    Fix any $j \in [m]$, and use \Cref{prop:svd} to write
    \begin{align*}
        u_{t+1, j} &= \ip{\M_X \C_Y \C_X h_t, \alpha_j}_{\ltwo(P_X)}\\
        &= \ip{\M_X (I - \M_X - \M_Y + \M_Y \M_X) h_t, \alpha_j}_{\ltwo(P_X)}\\
        &= \ip{\M_X \M_Y \M_X h_t, \alpha_j}_{\ltwo(P_X)} - \ip{\M_X \M_Y h_t, \alpha_j}_{\ltwo(P_X)}\\
        &= \ip{\M_X \M_Y \sum_{k=1}^m u_{t, k} \alpha_k, \alpha_j}_{\ltwo(P_X)} - \ip{\M_X \sum_{k=1}^m v_{t, k} \beta_k, \alpha_j}_{\ltwo(P_X)}\\
        &= s_j^2 u_{t, j} - s_j v_{t, j},
    \end{align*}
    which proves the first part of the claim. For the second part, note that $\M_Y \C_Y = 0$, so $\ip{\M_Y \C_Y \C_X h_t, \alpha_j}_{\ltwo(P_Y)} = 0$.
\end{proof}
Using \Cref{lem:spectral:step1} and \Cref{lem:spectral:step2}, we can simply accumulate the reduction incurred on every iteration.
\begin{proposition}\label{prop:spectral}
    Define the constants $(u_j)_{j=1}^m$ and $(v_j)_{j=1}^m$ by
    \begin{align*}
        \M_X \bar{h} = \sum_{j=1}^m u_j \alpha_j \text{ and } \M_Y \bar{h} = \sum_{j=1}^m v_j \beta_j.
    \end{align*}
    Then, we may quantify the variance reduction achieved by $t+1$ iterations of the $\C_Y \C_X$ operator as
    \begin{align*}
        \E{}{\bar{h}^2} - \E{}{((\C_Y \C_X)^{t+1} \bar{h})^2}
        &= \sum_{j=2}^m \left\{ u_{j}^2 + (v_{j} - s_j u_{ j})^2\sbr{1 + \frac{s^2_j(1 - s_j^{4t})}{1 - s_j^2}} \right\} \\
        &\rightarrow \sum_{j=2}^m \left[ u_{j}^2 + \frac{(v_{j} - s_j u_{ j})^2}{1 - s_j^2} \right]
    \end{align*}
    as $t \rightarrow \infty$.
\end{proposition}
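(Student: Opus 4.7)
The plan is to iterate Lemma~\ref{lem:spectral:step1}, track the coordinates along the orbit using Lemma~\ref{lem:spectral:step2}, and then sum the resulting geometric series. Let $h_0 = \bar{h}$ and $h_{t+1} = \C_Y \C_X h_t$. Because $P(\bar{h}) = 0$ and the tower property gives $P(\C_Y \C_X f) = \py(\C_Y(\C_X f)) = 0$ for any $f$, every iterate $h_t$ is centered and Lemma~\ref{lem:spectral:step1} applies at every step. Writing the telescoping identity
\begin{align*}
\E{}{\bar{h}^2} - \E{}{h_{t+1}^2} = \sum_{k=0}^{t} \bigl(\E{}{h_k^2} - \E{}{h_{k+1}^2}\bigr) = \sum_{k=0}^{t} \sum_{j=2}^{m} \bigl[ u_{k,j}^2 + (v_{k,j} - s_j u_{k,j})^2 \bigr],
\end{align*}
reduces the problem to a closed-form computation of the coordinates $(u_{k,j}, v_{k,j})$ defined just before Lemma~\ref{lem:spectral:step2}.

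Next, I would use Lemma~\ref{lem:spectral:step2} to get a clean expression for these coordinates. The lemma gives $v_{k,j} = 0$ for all $k \geq 1$, so the recursion collapses to $u_{k+1,j} = s_j^2 u_{k,j}$ for $k \geq 1$, with initial step $u_{1,j} = s_j^2 u_j - s_j v_j = -s_j(v_j - s_j u_j)$. Unrolling yields $u_{k,j} = -s_j^{2k-1}(v_j - s_j u_j)$ for $k \geq 1$, and substitution gives, for these $k$,
\begin{align*}
u_{k,j}^2 + (v_{k,j} - s_j u_{k,j})^2 = s_j^{4k-2}(v_j - s_j u_j)^2 + s_j^{4k}(v_j - s_j u_j)^2 = s_j^{4k-2}(1 + s_j^2)(v_j - s_j u_j)^2,
\end{align*}
while the $k = 0$ summand is simply $u_j^2 + (v_j - s_j u_j)^2$ by definition.

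The remaining step is a geometric sum. Factoring out $(v_j - s_j u_j)^2$ from the $k \geq 1$ contributions and using $\sum_{k=1}^{t} s_j^{4k-2} (1+s_j^2) = s_j^2 (1-s_j^{4t})/(1-s_j^2)$ (valid since $s_j < 1$ for $j \geq 2$ under the spectral gap assumption noted before the statement), combining with the $k=0$ term gives exactly
\begin{align*}
\sum_{j=2}^{m} \Bigl\{ u_j^2 + (v_j - s_j u_j)^2 \Bigl[ 1 + \frac{s_j^2(1 - s_j^{4t})}{1 - s_j^2} \Bigr] \Bigr\}.
\end{align*}
Taking $t \to \infty$ and dominated convergence on the finite sum (using $s_j < 1$) produces the stated limit. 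The main obstacle is not conceptual but notational: one must keep the indices on $(u_{k,j}, v_{k,j})$ straight and handle the $k=0$ case separately from $k \geq 1$, since $v_{0,j}$ need not vanish whereas $v_{k,j} = 0$ for all later iterates — a fact that is ultimately what makes the recursion tractable and the series geometric.
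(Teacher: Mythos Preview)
Your proposal is correct and follows essentially the same approach as the paper: telescope using Lemma~\ref{lem:spectral:step1}, use Lemma~\ref{lem:spectral:step2} to see that $v_{k,j}=0$ for $k\ge 1$ so the recursion becomes $u_{k,j}=-s_j^{2k-1}(v_j-s_ju_j)$, and then sum the resulting geometric series. The paper's write-up organizes the algebra slightly differently (it first expands the increment as $(1+s_j^2)u_{\tau,j}^2+v_{\tau,j}^2-2s_ju_{\tau,j}v_{\tau,j}$ before specializing to $\tau\ge 1$), but the argument is the same.
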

\begin{proof}
    Apply \Cref{lem:spectral:step1} $(t+1)$-times so that
    \begin{align*}
        \E{}{((\C_Y \C_X)^{t+1} \bar{h})^2} &= \E{}{\bar{h}^2} - \sum_{j=2}^m \sum_{\tau=0}^{t} \sbr{(1+s_j^2)u_{\tau, j}^2 + v_{\tau, j}^2 - 2s_j u_{\tau, j} v_{\tau, j}}\\
        &= \E{}{\bar{h}^2} - \sum_{j=2}^m \sbr{v_{0, j}^2 - 2s_j u_{0, j} v_{0, j} + \sum_{\tau=0}^{t} (1+s_j^2)u_{\tau, j}^2}\\
    \end{align*}
    as by \Cref{lem:spectral:step2}, we have that $v_{\tau, j} = 0$ for $\tau > 0$. Next, we unroll the definition of $u_{\tau, j}$ so that
    \begin{align*}
        u_{\tau, j} &= s_j^2 u_{\tau -1, j} - s_j v_{\tau - 1, j}\\
        &= s_j^2 (s_j^2 u_{\tau -2, j} - s_j v_{\tau - 2, j}) - s_j v_{\tau - 1, j}\\
        &= s_j^{2\tau - 2} (s_j^2 u_{0, j} - s_j v_{0, j})\\
    \end{align*}
    for $\tau > 0$, yielding
    \begin{align*}
        &\E{}{\bar{h}^2} - \E{}{((\C_Y \C_X)^{t+1} \bar{h})^2}\\
        &= \sum_{j=2}^m \sbr{u_{0, j}^2 + (v_{0, j} - s_j u_{0, j})^2 + (1+s_j^2)(s_j^2 u_{0, j} - s_j v_{0, j})^2\sum_{\tau=1}^{t} (s_j^4)^{\tau - 1}}\\
        &= \sum_{j=2}^m \sbr{u_{0, j}^2 + (v_{0, j} - s_j u_{0, j})^2 + (1+s_j^2)(s_j^2 u_{0, j} - s_j v_{0, j})^2\sum_{\tau=0}^{t-1} (s_j^4)^{\tau}}\\
        &= \sum_{j=2}^m \sbr{u_{0, j}^2 + (v_{0, j} - s_j u_{0, j})^2 + \frac{s^2_j(1+s_j^2)(v_{0, j} - s_j u_{0, j})^2 (1 - s_j^{4t})}{1 - s_j^4}} \\
        &= \sum_{j=2}^m \sbr{u_{0, j}^2 + (v_{0, j} - s_j u_{0, j})^2 + \frac{s^2_j(v_{0, j} - s_j u_{0, j})^2 (1 - s_j^{4t})}{1 - s_j^2}}.
    \end{align*}
    Substitute $u_{0, j} = u_{j}$ and $v_{0, j} = v_{j}$ to complete the proof.
\end{proof}

We also present the corresponding result for $k$ odd. The proof follows similarly by repeated application of the operator $\C_Y \C_X$. However, the iterations will be compared to $\sigma_1^2 = \E{P}{(\C_X \bar{h})^2}$, as we consider $\C_X \bar{h}$ as the ``first'' iteration to this process.
\begin{proposition}\label{prop:spectral_odd}
    Define the constants $(u_j)_{j=1}^m$ by
    \begin{align*}
        \M_Y \C_X\bar{h} = \sum_{j=1}^m u_j \beta_j.
    \end{align*}
    Then, we may quantify the variance reduction achieved by $t+1$ iterations of the $\C_X \C_Y$ operator as
    \begin{align*}
        \E{}{(\C_X\bar{h})^2} - \E{}{((\C_X \C_Y)^{t+1} \C_X\bar{h})^2}
        &= \sum_{j=2}^m \left\{ u_{j}^2 + (s_j u_{ j})^2\sbr{1 + \frac{s^2_j(1 - s_j^{4t})}{1 - s_j^2}} \right\} \\
        &\rightarrow \sum_{j=2}^m \p{\frac{1 + s_j^2}{1 - s_j^2}} u_{j}^2
    \end{align*}
    as $t \rightarrow \infty$.
\end{proposition}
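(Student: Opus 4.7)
The plan is to mirror the three-step strategy of \Cref{prop:spectral}, with the adjustments that the starting function is $g_0 := \C_X \bar h$ (rather than $\bar h$) and the iterated operator is $\C_X \C_Y$ (rather than $\C_Y \C_X$). A convenient simplification is that $\M_X g_0 = \M_X \C_X \bar h = 0$, so the $\alpha$-coordinates of every iterate start at zero and the coordinate recursion collapses to a one-dimensional one.

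Writing $g_\tau := (\C_X \C_Y)^\tau g_0$ and decomposing $\M_X g_\tau = \sum_j a_{\tau,j}\,\alpha_j$ and $\M_Y g_\tau = \sum_j b_{\tau,j}\,\beta_j$, the observation above gives $a_{0,j} = 0$ for all $j$. The tower property yields $\E{P}{g_0} = 0$, which forces $b_{0,1} = 0$ and $b_{0,j} = u_j$ for $j \geq 2$. I would then prove the one-step reduction analogue of \Cref{lem:spectral:step1}: for any $f \in \ltwo(P)$,
\begin{align*}
    \E{P}{(\C_X \C_Y f)^2} = \E{P}{f^2} - \py((\M_Y f)^2) - \px((\M_X \C_Y f)^2),
\end{align*}
by expanding $(I - \M_X)(I - \M_Y) f$ and invoking the tower property in exactly the same way as in \Cref{lem:spectral:step1}, with the roles of $X$ and $Y$ swapped. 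Using $\M_X \C_Y f = \M_X f - \M_X \M_Y f$ together with the SVD identity $\M_X \beta_j = s_j \alpha_j$ then expresses the per-step reduction at $f = g_\tau$ as
\begin{align*}
    \E{P}{g_\tau^2} - \E{P}{g_{\tau+1}^2} = \sum_{j=2}^m b_{\tau,j}^2 + \sum_{j=2}^m (a_{\tau,j} - s_j b_{\tau,j})^2.
\end{align*}

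Next, I would establish the coordinate recursion in analogy with \Cref{lem:spectral:step2}. The identity $\M_X \C_X = 0$ immediately gives $a_{\tau+1,j} = 0$ for every $\tau \geq 0$. To handle $b_{\tau+1,j}$, I would expand
\begin{align*}
    \M_Y \C_X \C_Y g_\tau = \M_Y g_\tau - \M_Y \M_X g_\tau - \M_Y^2 g_\tau + \M_Y \M_X \M_Y g_\tau,
\end{align*}
cancel $\M_Y g_\tau$ with $\M_Y^2 g_\tau = \M_Y g_\tau$, and apply the SVD relations $\M_Y \alpha_j = s_j \beta_j$ and $\M_X \beta_j = s_j \alpha_j$ to obtain $b_{\tau+1,j} = s_j^2 b_{\tau,j} - s_j a_{\tau,j}$. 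Since $a_{\tau,j} \equiv 0$, this collapses to the closed form $b_{\tau,j} = s_j^{2\tau} u_j$.

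Finally, substituting into the per-step formula gives $\E{P}{g_\tau^2} - \E{P}{g_{\tau+1}^2} = \sum_{j=2}^m (1 + s_j^2)\, s_j^{4\tau}\, u_j^2$. Summing over $\tau = 0,\ldots,t$ and evaluating the geometric series $\sum_{\tau=0}^{t} s_j^{4\tau} = (1 - s_j^{4(t+1)})/(1 - s_j^4)$ produces the claimed closed-form expression, and letting $t \to \infty$ yields the limit. The only step that requires care is the algebraic expansion of $\M_Y \C_X \C_Y g_\tau$; once the four terms are simplified via the SVD, the remainder is a routine geometric-series computation.
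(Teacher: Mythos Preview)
Your proposal is correct and follows essentially the same route the paper indicates: it is the role-swapped version of the argument for \Cref{prop:spectral}, applied with starting function $g_0 = \C_X \bar h$ and iterated operator $\C_X \C_Y$, exploiting the simplification $\M_X g_0 = 0$ so that $a_{\tau,j}\equiv 0$ and the recursion collapses to $b_{\tau,j} = s_j^{2\tau} u_j$. The per-step reduction $\sum_{j\ge 2}(1+s_j^2)s_j^{4\tau}u_j^2$ and the geometric-series summation then yield $\sum_{j\ge 2} u_j^2\,(1 - s_j^{4(t+1)})/(1-s_j^2)$, which matches the stated finite formula after a short algebraic rearrangement.
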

In order to have full monotonicity, we also need that $\sigma_0^2 \geq \sigma_1^2$. This follows by orthogonality, as
\begin{align}
    \sigma_0^2 = \E{}{\bar{h}^2} = \E{}{(\C_X\bar{h})^2} + \E{}{(\M_X\bar{h})^2} = \sigma_1^2 + \E{}{(\M_X\bar{h})^2} \geq \sigma_1^2.
    \label{eq:monotonicity}
\end{align}

Thus, we can combine \Cref{prop:spectral_odd} and~\eqref{eq:monotonicity} to fully quantify the relationship between $\sigma_0^2$ and $\sigma_k^2$ for $k$ odd.

\section{From Information Projections to Data Balancing}\label{sec:a:projection}
This section is dedicated to deriving three representations of the balancing procedure as projections in various statistical divergences, as shown in \Cref{fig:iterative}.

We consider two sets of probability measures denoted by $\Pi_X = \br{Q: \qx = \px}$ and $\Pi_Y = \br{Q: \qy = \py}$. The marginal matching steps are written as projections in terms of a statistical divergence $D$ (precisely, an $f$-divergence) in the form
\begin{align*}
    \frac{\px}{\pnx\pow{k-1}} \otimes \pn\pow{k-1} = \argmin_{Q \in \Pi_X} D(Q \Vert \pn\pow{k-1}), \quad \frac{\py}{\pny\pow{k-1}} \otimes R = \argmin_{Q \in \Pi_Y} D(Q \Vert \pn\pow{k-1}).
\end{align*}
We provide the derivations for three common choices of $D$: Kullback-Leibler (KL), reverse KL, and $\chi^2$. Using this viewpoint, and simply assuming the positivity of the marginal measures $\px$ and $\py$, we derive an upper bound in \Cref{prop:marginals} that is \emph{constant} in $k$. This is an improvement over the recent work of \citet{albertus2019auxiliary}, in which they show an upper bound that scales \emph{exponentially} in $k$.

The KL representation will be used in the proof of \Cref{prop:marginals}, which (recalling the sequence $(\pn\pow{k})_{k \geq 1}$ from~\eqref{eq:raking}), controls the error between $\pny\pow{k}$ and $\py$ for $k$ odd and $\pnx\pow{k}$ and $\px$ for $k$ even. 

\subsection{Balancing as Information Projections}\label{sec:a:info_projection}

\subsubsection{Projection in KL-Divergence}
\begin{proposition}\label{prop:kl_iproj}
    Assume that $\px \ll R_X$ and $\py \ll R_Y$, and define
    \begin{align}
        Q^\star := \argmin_{Q \in \Pi_X} \kl(Q\Vert R), \quad P^\star := \argmin_{Q \in \Pi_Y} \kl(Q\Vert R).
    \end{align}
    Then, it holds that
    \begin{align}
        Q^\star(x, y) = \begin{cases}
            \px(x) \ryx(y|x) &\text{ if } R_X(x) > 0\\
            0 &\text{ if } R_X(x) = 0
        \end{cases}
    \end{align}
    and
    \begin{align}
        P^\star(x, y) = \begin{cases}
            \py(y) R_X(x|y) &\text{ if } R_Y(y) > 0\\
            0 &\text{ if } R_Y(y) = 0
        \end{cases}.
        \label{eq:kl_recurse}
    \end{align}
\end{proposition}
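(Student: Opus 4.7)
The plan is to exploit the chain rule for KL divergence to decompose the objective into a term that depends only on the $X$-marginal and a term that depends only on the conditional, so that the constraint $\qx = \px$ fixes the first while the conditional is optimized pointwise.

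First, for every $x$ with $\rx(x) > 0$ I would introduce the conditional $\ryx(y\mid x) = R(x, y)/\rx(x)$, and similarly $\qyx(y\mid x)$ whenever $\qx(x) > 0$. The chain rule for KL then yields
\begin{align*}
    \kl(Q \Vert R) = \kl(\qx \Vert \rx) + \sum_{x} \qx(x)\,\kl(\qyx(\cdot\mid x) \Vert \ryx(\cdot\mid x)),
\end{align*}
with the standard convention that the $x$-th summand equals $0$ when $\qx(x) = 0$. Under the constraint $Q \in \Pi_X$ the first term is the constant $\kl(\px \Vert \rx)$, which is finite by $\px \ll \rx$. Every summand in the second term is nonnegative, and for each $x$ with $\px(x) > 0$---in which case $\rx(x) > 0$ as well---the summand is minimized to zero by the choice $\qyx(\cdot\mid x) = \ryx(\cdot\mid x)$.

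To handle the support boundary, observe that if $\rx(x) = 0$ then absolute continuity forces $\px(x) = 0$, so any feasible $Q$ has $\qx(x) = 0$ and hence $Q(x, y) = 0$ for every $y$. Combining the two cases recovers the stated piecewise formula: $Q^\star(x, y) = \px(x)\,\ryx(y\mid x)$ when $\rx(x) > 0$, and $Q^\star(x, y) = 0$ otherwise. The expression for $P^\star$ then follows by the entirely symmetric argument, factoring $\kl(Q\Vert R)$ along $y$ instead of $x$, using $\py \ll \ry$ to fix $\kl(\py\Vert\ry)$, and pointwise setting $\qxy(\cdot\mid y) = \rxy(\cdot\mid y)$.

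This is a standard information-projection computation, so I do not expect a substantive obstacle; the only point requiring care is bookkeeping at $\rx(x) = 0$ (and $\ry(y) = 0$) to reconcile the optimization with the $0/0$ convention that appears in the piecewise statement.
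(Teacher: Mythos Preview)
Your proposal is correct and follows essentially the same approach as the paper: both decompose $\kl(Q\Vert R)$ via the chain rule into $\kl(\qx\Vert\rx)+\sum_x \qx(x)\kl(\qyx(\cdot\mid x)\Vert\ryx(\cdot\mid x))$, fix the first term using the constraint $\qx=\px$ and the assumption $\px\ll\rx$, and minimize the second term pointwise by setting $\qyx=\ryx$. If anything, your handling of the boundary case $\rx(x)=0$ is slightly more explicit than the paper's.
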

\begin{proof}
    In the case that $Q(x,y) = 0$, we apply the convention that $0\log 0 = 0$.
    Consider the case $Q^\star$, the projection of $R$ onto $\Pi_X$. Write
    \begin{align*}
        \kl(Q \Vert R) &= \sum_{x \in \X} \sum_{y \in \Y} Q(x, y) \log{\tfrac{\qyx(y|x)Q_X(x)}{\ryx(y|x)R_X(x)}}\\
        &= \sum_{x \in \X} Q_X(x) \sbr{\sum_{y \in \Y} \qyx(y|x) \log{\tfrac{\qyx(y|x)Q_X(x)}{\ryx(y|x)R_X(x)}}}\\
        &= \sum_{x \in \X} Q_X(x) \sbr{\sum_{y \in \Y} \qyx(y|x) \log{\tfrac{\qyx(y|x)}{\ryx(y|x)}} + \sum_{y \in \Y} \qyx(y|x) \log{\tfrac{Q_X(x)}{R_X(x)}}}\\
        &= \sum_{x \in \X} Q_X(x) \sbr{\sum_{y \in \Y} \qyx(y|x) \log{\tfrac{\qyx(y|x)}{\ryx(y|x)}}} + \sum_{x \in \X} Q_X(x) \log{\tfrac{Q_X(x)}{R_X(x)}}\\
        &= \sum_{x \in \X} \blue{Q_X}(x) \kl(\blue{\qyx(\cdot|x)} \Vert \ryx(\cdot|x)) + \kl(\blue{Q_X} \Vert R_X)\\
        &= \sum_{x \in \X} \red{\px}(x) \kl(\blue{\qyx(\cdot|x)} \Vert \ryx(\cdot|x)) + \kl(\red{\px} \Vert R_X),
    \end{align*}
    where the last line is due to the marginal constraint $Q \in \Pi_X$. For the above to be well defined, we need that $\px \ll R_X$ so that $\kl(\px \Vert R_X) < +\infty$. The above is minimized when $\qyx(y|x) = \ryx(y|x)$ for all $(x, y) \in \X \times \Y$ such that $Q_X(x) = \px(x) > 0$. The case of $P^\star$ follows analogously when using that $\py \ll R_Y$. 
\end{proof}

\subsubsection{Projection in Reverse KL-Divergence}
\begin{proposition}\label{prop:reverse_iproj}
    Assume that $\py \ll \rx$ and $\py \ll R_Y$, and define
    \begin{align}
        Q^\star := \argmin_{Q \in \Pi_X} \kl(R\Vert Q), \quad P^\star := \argmin_{Q \in \Pi_Y} \kl(R\Vert Q).
    \end{align}
    Then, it holds that
    \begin{align}
        Q^\star(x, y) = \begin{cases}
            P_X(x) \ryx(y|x) &\text{ if } \rx(x) > 0\\
            0 &\text{ if } \rx(x) = 0
        \end{cases}
    \end{align}
    and
    \begin{align}
        P^\star(x, y) = \begin{cases}
            P_Y(y) \rx(x|y) &\text{ if } R_Y(y) > 0\\
            0 &\text{ if } R_Y(y) = 0
        \end{cases}.
        \label{eq:reverse_recurse}
    \end{align}
\end{proposition}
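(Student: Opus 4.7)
The plan is to mirror the proof of \Cref{prop:kl_iproj} by applying the chain rule decomposition of the KL divergence, except now the ``true'' measure in the divergence is $R$ rather than $Q$. The key identity I would invoke is
\[
\kl(R \Vert Q) = \kl(R_X \Vert Q_X) + \sum_{x \in \X} R_X(x)\, \kl(R_{Y|X}(\cdot|x) \Vert Q_{Y|X}(\cdot|x)),
\]
valid whenever $R \ll Q$ (so that the left-hand side is finite). Under the constraint $Q \in \Pi_X$, the first term reduces to the constant $\kl(R_X \Vert P_X)$, which is finite under the stated absolute continuity hypothesis (the typeset ``$\py \ll \rx$'' should read $P_X \ll R_X$, consistent with the parallel hypothesis in \Cref{prop:kl_iproj}). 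Thus the minimization collapses to minimizing the conditional sum over the choice of $Q_{Y|X}$.

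Next, I would minimize the conditional sum pointwise in $x$. Each summand is nonnegative with minimum value $0$, attained uniquely (on the support of $R_X$) when $Q_{Y|X}(\cdot|x) = R_{Y|X}(\cdot|x)$. Combined with the marginal constraint $Q_X = P_X$, this forces
\[
Q^\star(x, y) = P_X(x)\, R_{Y|X}(y|x) \quad \text{for every } x \text{ with } R_X(x) > 0,
\]
matching the claimed formula on the support of $R_X$.

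It remains to handle $x$ with $R_X(x) = 0$. Here the corresponding term in the conditional sum vanishes regardless of $Q_{Y|X}(\cdot|x)$, so the minimization is underdetermined from that piece alone; however, absolute continuity $P_X \ll R_X$ forces $P_X(x) = 0$ at such $x$, so the marginal constraint $Q_X(x) = P_X(x) = 0$ pins down $Q^\star(x, y) = 0$ and establishes \eqref{eq:reverse_recurse}. The derivation for $P^\star$ on $\Pi_Y$ is symmetric, swapping the roles of the $\X$ and $\Y$ coordinates and using $P_Y \ll R_Y$.

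The only subtle point, and hence the main (mild) obstacle, is the support bookkeeping: verifying that the formula is simultaneously consistent with the marginal constraint and with the support of $R$, which amounts to tracking the direction of absolute continuity carefully. No new analytic machinery beyond the chain rule and nonnegativity of KL is required, so the argument is essentially parallel to the forward KL case.
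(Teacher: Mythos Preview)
Your approach is essentially the paper's: both separate $\kl(R\Vert Q)$ into a marginal piece that is constant under $Q_X=P_X$ and an $R_X$-weighted conditional piece, then minimize the latter pointwise to force $Q_{Y|X}=R_{Y|X}$ on the support of $R_X$ (the paper writes this via the cross-entropy $-\sum R\log Q$ rather than the KL chain rule, which is the same computation up to the additive constant $H(R)$). One small slip: finiteness of $\kl(R_X\Vert P_X)$ requires $R_X\ll P_X$, not $P_X\ll R_X$; the hypothesis $P_X\ll R_X$ is what you (correctly) use later to pin down $Q^\star(x,y)=0$ when $R_X(x)=0$.
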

\begin{proof}
    In the case that $R(x,y) = 0$, we apply the convention that $0\log 0 = 0$. Note that minimizing $\kl(R\Vert Q)$ over $Q$ is equivalent to minimizing $-\sum_{x, y} R(x, y) \log Q(x, y)$ (i.e.~the cross entropy). Consider the case $Q^\star$, the projection of $R$ onto $\Pi_X$. Because $R \ll Q$ for $\kl(R\Vert Q) < +\infty$ to hold, we have that $R(x) > 0 \implies Q(x) > 0$, so that $\qyx(y|x)$ is well-defined. Write
    \begin{align*}
        &-\sum_{x, y} R(x, y) \log Q(x, y) \\
        &= -\sum_{x \in \X} \rx(x) \log \blue{\qx(x)} -\sum_{x \in \X} R(x) \sum_{y \in \Y} \ryx(y|x) \log \qyx(y|x)\\
        &= -\sum_{x \in \X} \rx(x) \log \red{\px(x)} +\sum_{x \in \X} \rx(x) \sbr{-\sum_{y \in \Y} \ryx(y|x) \log \qyx(y|x)}.
    \end{align*}
    The second first term does not depend on $Q$ due to the marginal constraint $Q \in \Pi_X$. The second term is the expectation of the cross entropy from $\ryx$ to $\qyx$ over $\rx$, which is minimized if $\ryx = \qyx$. We have specified $\qyx$ and $\qx$, completing the proof.
\end{proof}

\subsubsection{Projection in \cstext-Divergence}
Let $\ones$ denote the function that is identically equal to $1$.
Consider the following optimization problem, which is the subject of the subsequent lemmas:
\begin{align}
    \min_{\xi \in \A_X} \norm{\ones - \xi}^2_{\ltwo(R)},
    \label{eq_chisq_affproj}
\end{align}
where
\begin{align*}
    \A_X := \br{f: \X \times \Y \rightarrow \R \text{ satisfying } \sum_{y \in \Y} f(x,y) R(x, y) = P_X(x) \text{ for any } x \in \X}.
\end{align*}
\begin{lemma}\label{lem:reparam_sol}
    Assume that $P_X \ll R_X$, and define
    The problem~\eqref{eq_chisq_affproj} is feasible, and its solution can be written as
    \begin{align*}
        \xi^\star = \C_X^R(\ones - f) + f
    \end{align*}
    for any $f \in \ltwo(R)$, where the linear operator $\C_X^R$ is specified by
    \begin{align*}
        [\C_X^R g](x,y) = g(x, y) - \sum_{y' \in \Y} g(x, y') \ryx(y'|x).
    \end{align*}
\end{lemma}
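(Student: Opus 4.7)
The plan is to recast~\eqref{eq_chisq_affproj} as an orthogonal projection problem in the Hilbert space $\ltwo(R)$, where $\A_X$ is an affine subspace, and then to identify $\C_X^R$ as the $\ltwo(R)$-orthogonal projection onto the corresponding tangent linear subspace. First, for feasibility: under $P_X \ll R_X$, the function $f_0(x,y) := (P_X(x)/R_X(x))\,\ind\{R_X(x) > 0\}$ is a well-defined element of $\ltwo(R)$, and direct computation yields $\sum_y f_0(x,y) R(x,y) = P_X(x)$ for every $x$ (the case $R_X(x) = 0$ is vacuous since $P_X(x) = 0$ by absolute continuity). Hence $\A_X$ is nonempty.

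Second, let $V := \{g \in \ltwo(R) : \sum_y g(x,y) R(x,y) = 0 \text{ for all } x\}$, so that $\A_X = f + V$ for any fixed $f \in \A_X$, and the problem reduces to projecting $\ones$ onto the affine subspace $f + V$. Introducing the $R$-conditional mean operator $[\M_X^R g](x,y) := \sum_{y'} g(x,y') \ryx(y'|x)$, we have $\sum_y g(x,y) R(x,y) = R_X(x)\,[\M_X^R g](x,y)$, so $V$ is precisely the kernel of $\M_X^R$. The key observation is that $\M_X^R$ is the $\ltwo(R)$-orthogonal projection onto the subspace of functions depending only on $x$: it is self-adjoint and idempotent via the tower property of conditional expectation, mirroring the argument in~\Cref{appendix:sub:svd}. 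Consequently $\C_X^R = I - \M_X^R$ is the orthogonal projection onto $V$.

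Finally, by the standard formula for orthogonal projection onto an affine subspace in a Hilbert space, the unique minimizer of a strictly convex quadratic over $f + V$ is $\xi^\star = f + \C_X^R(\ones - f)$ for any $f \in \A_X$. Using $\C_X^R \ones = 0$ (since $\M_X^R \ones = \ones$), this further collapses to $\xi^\star = \M_X^R f$, which evaluates to the function $x \mapsto P_X(x)/R_X(x)$ and is therefore independent of the particular choice of feasible $f$. The only real bookkeeping obstacle is handling the exceptional set $\{x : R_X(x) = 0\}$ uniformly, which is resolved by $P_X \ll R_X$; no deeper technical hurdle arises, as $\ltwo(R)$ is finite-dimensional and the objective is strictly convex.
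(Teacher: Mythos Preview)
Your proposal is correct and follows essentially the same route as the paper: establish feasibility via the explicit density $P_X/R_X$, recognize $\A_X$ as an affine translate of the linear subspace $V = \ker(\M_X^R)$, identify $\C_X^R = I - \M_X^R$ as the $\ltwo(R)$-orthogonal projection onto $V$, and apply the affine projection formula. You correctly restrict to $f \in \A_X$ (the statement's ``$f \in \ltwo(R)$'' is a typo, as the paper's own proof confirms), and your closing simplification $\xi^\star = \M_X^R f$ anticipates exactly the computation the paper carries out in the subsequent lemma.
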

\begin{proof}
    First, we establish feasibility by letting 
    \begin{align*}
        f(x,y):= \begin{cases}
            P_X(x) / R_X(x) &\text{ if } R_X(x) > 0\\
            1  &\text{ otherwise}
        \end{cases}.
    \end{align*}
    This function does not depend on the second input $y$. Because we assumed that $P_X \ll R_X$, we have that the terms of $f(x,y)$ for which $R_X(x) = 0$ do not affect whether $\sum_{y \in \Y} f(x,y) R(x, y) = P_X(x)$, because $P_X(x) = 0$ in these cases.
    In the remainder of this proof, we will show that~\eqref{eq_chisq_affproj} is an affine projection problem, and find its solution by converting it to a subspace projection problem. Indeed, consider $f_1, \ldots, f_r \in \A_X$, and $\alpha_1, \ldots, \alpha_r \in \R$ such that $\sum_{j=1}^r \alpha_j  = 1$. Then,
    \begin{align*}
        \sum_{y \in \Y} \sbr{\sum_{j=1}^r \alpha_j f_j(x, y)} \cdot R(x,y) = \sum_{j=1}^r \alpha_j \sbr{\sum_{y \in \Y} f_j(x, y) R(x,y)} = P_X(x),
    \end{align*}
    indicating that $\sum_{j=1}^r \alpha_j f_j(x, y) \in \A_X$ and $\A_X$ is an affine subset of $\ltwo(R)$. Define
    \begin{align*}
        \S_X := \br{g: \X \times \Y \rightarrow \R \text{ satisfying } \sum_{y \in \Y} g(x,y) R(x, y) = 0 \text{ for any } x \in \X}.
    \end{align*}
    Then, for any $f \in \A_X$, we have that $g \in \S_X$ if and only if $g + f \in \A_X$. Taking any $f \in \A_X$, letting $\phi^\star$ be the solution of
    \begin{align}
        \min_{\phi \in \S_X} \norm{\ones - f - \phi}^2_{\ltwo(R)},
        \label{eq_chisq_lnproj}
    \end{align}
    we will have that $\phi^\star + f$ will be the solution of~\eqref{eq_chisq_affproj}. The remainder of the proof is showing that $\phi^\star = \C_X^R(\ones - f)$.

    First, define the operator $\M_X^R$ by $[\M_X g](x, y) = \sum_{y' \in \Y} g(x,y') \ryx(y'|x)$, and note (by factoring out $R_X(x)$) that $g \in \S_X$ if and only if $\M^R_X g = 0$. In addition, $\M^R_X g$ is linear and idempotent as $\M^R_X \M^R_X g = \M^R_X g$, so it is a projection operator in $\ltwo(R)$. Thus, $\S_X$ is the orthogonal complement of $\range(\M_X^R)$, and the solution of~\eqref{eq_chisq_lnproj} is given by $(I - \M^R_X)(\ones - f) = \C_X^R(\ones - f)$, because $\C_X^R = I - \M^R_X$. The claim is proved.
\end{proof}
\begin{lemma}\label{lem:chi2_proj}
    Assume that $P_X \ll R_X$. Define
    \begin{align}
        Q^\star := \argmin_{Q \in \Pi_X} \chi^2(Q\Vert R). \label{eq:chi2_prob}
    \end{align}
    and let $\xi^\star$ be the solution of problem~\eqref{eq_chisq_affproj}.
    Then,
    \begin{align}
        Q^\star(x, y) &= \xi^\star(x, y) R(x, y) = \begin{cases}
            P_X(x) \ryx(y|x) &\text{ if } R_X(x) > 0\\
            0 &\text{ if } R_X(x) = 0
        \end{cases}.
        \label{eq:chi2_sol}
    \end{align}
\end{lemma}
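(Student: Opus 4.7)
The plan is to reduce the $\chi^2$-projection problem to the affine $L^2(R)$-projection problem~\eqref{eq_chisq_affproj} already solved in \Cref{lem:reparam_sol}, and then evaluate the resulting formula explicitly using a judicious choice of the feasible point $f$.

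First, I would reparametrize the optimization variable. For any $Q \ll R$, write $Q(x,y) = \xi(x,y) R(x,y)$ where $\xi \in L^2(R)$ is the density ratio. A direct computation gives
\[
\chi^2(Q \Vert R) = \sum_{x,y} \bigl(\xi(x,y) - 1\bigr)^2 R(x,y) = \norm{\ones - \xi}_{L^2(R)}^2,
\]
so minimizing $\chi^2(Q \Vert R)$ over $Q$ is equivalent to minimizing $\norm{\ones - \xi}_{L^2(R)}^2$ over $\xi$. Next, I would translate the marginal constraint $Q \in \Pi_X$ into the constraint $\xi \in \A_X$: indeed, $Q_X(x) = \sum_{y} \xi(x,y) R(x,y)$, so the constraint $Q_X = P_X$ is exactly $\sum_y \xi(x,y) R(x,y) = P_X(x)$ for every $x$. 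The assumption $P_X \ll R_X$ is what ensures the constraint is consistent on the set $\{R_X = 0\}$ (both sides vanish). Consequently, problem~\eqref{eq:chi2_prob} is equivalent to problem~\eqref{eq_chisq_affproj}, and $Q^\star(x,y) = \xi^\star(x,y) R(x,y)$.

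Now apply \Cref{lem:reparam_sol} to obtain $\xi^\star = \C_X^R(\ones - f) + f$ for any feasible $f \in \A_X$. The key observation is to choose $f$ depending only on $x$, specifically
\[
f(x,y) := \begin{cases} P_X(x)/R_X(x) & \text{if } R_X(x) > 0,\\ 1 & \text{otherwise}, \end{cases}
\]
which lies in $\A_X$ as shown in the proof of \Cref{lem:reparam_sol}. Since $\ones - f$ depends only on $x$, we have $[\M_X^R (\ones - f)](x,y) = \sum_{y'} (1 - f(x,y')) R_{Y|X}(y'|x) = 1 - f(x,y)$, so $\C_X^R(\ones - f) = 0$ and hence $\xi^\star = f$.

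Finally, I would read off the product $\xi^\star(x,y) R(x,y)$. On $\{x : R_X(x) > 0\}$, this equals $(P_X(x)/R_X(x)) \cdot R(x,y) = P_X(x) R_{Y|X}(y|x)$. On $\{x : R_X(x) = 0\}$, one has $R(x,y) = 0$ for every $y$, so $Q^\star(x,y) = 0$; note also $P_X(x) = 0$ there by $P_X \ll R_X$, which is consistent with the stated formula and confirms that the choice of $\xi^\star$ off $\Supp(R)$ is immaterial. The only potential subtlety is this boundary case, but it is handled entirely by the absolute continuity assumption. Uniqueness of $Q^\star$ is automatic from strict convexity of $\norm{\ones - \cdot}_{L^2(R)}^2$ on the convex set $\{\xi R : \xi \in \A_X\}$.
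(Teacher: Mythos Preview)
Your approach is essentially the same as the paper's: reparametrize $Q = \xi R$, reduce to the affine $\ltwo(R)$-projection of \Cref{lem:reparam_sol}, plug in the feasible $f(x,y) = P_X(x)/R_X(x)$ (which is constant in $y$), and read off the product. The one omission is that $\Pi_X$ consists of \emph{probability measures}, so the reparametrized problem carries the additional constraint $\xi \ge 0$, and hence your claimed equivalence between~\eqref{eq:chi2_prob} and~\eqref{eq_chisq_affproj} is not immediate. The paper makes this explicit: it first solves the unconstrained affine problem~\eqref{eq_chisq_affproj}, then observes that the resulting $\xi^\star = f$ is manifestly non-negative, so it also solves the constrained version. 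Your explicit computation of $\xi^\star = f \ge 0$ verifies this implicitly, so the gap is cosmetic rather than substantive.
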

\begin{proof}
    First, by reparametrizing the problem~\eqref{eq:chi2_prob} as finding $\xi$ such that $Q(x,y) = \xi(x, y)R(x,y)$, we can compute its solution by solving
    \begin{align}
        \min_{\xi \in \A_X, \xi \geq 0} \norm{\ones - \xi}^2_{\ltwo(R)},
        \label{eq_chisq_affproj2}
    \end{align}
    Notice that we also have a non-negativity constraint, as opposed to
    ~\eqref{eq_chisq_affproj}. If $\xi^\star$ solves~\eqref{eq_chisq_affproj} and happens to be non-negative, then we have that $\xi^\star$ solves~\eqref{eq_chisq_affproj2} as well and the first equality of~\eqref{eq:chi2_sol} is satisfied by definition. We show the second equality of~\eqref{eq:chi2_sol} by direct computation, which also establishes the non-negativity of $\xi^\star$ simultaneously.

    Apply \Cref{lem:reparam_sol} with 
    \begin{align*}
        f(x,y):= \begin{cases}
            P_X(x) / R_X(x) &\text{ if } R_X(x) > 0\\
            1  &\text{ otherwise}
        \end{cases}.
    \end{align*}
    so that
    \begin{align*}
        \xi^\star(x, y) &= \C_X^R\p{\ones - f}(x,y) + f(x,y)\\
        &= \sbr{\sum_{z \in \Y} f(x,z) \ryx(z|x) - f(x,y)} + f(x,y)\\
        &= f(x,y')
    \end{align*}
    for any $y' \in \Y$.
    Thus, the likelihood ratio of $Q^\star$ with respect to $R$ is a marginal reweighting. Accordingly,
    \begin{align*}
        Q^\star(x, y) &= \xi^\star(x, y) R(x, y) = \begin{cases}
            P_X(x) \ryx(y|x) &\text{ if } R_X(x) > 0\\
            0 &\text{ if } R_X(x) = 0
        \end{cases},
    \end{align*}
    completing the proof.
\end{proof}

\begin{proposition}\label{prop:chi2_iproj}
    Assume that $P_X \ll R_X$ and $P_Y \ll R_Y$. Define
    \begin{align}
        Q^\star := \argmin_{Q \in \Pi_X} \chi^2(Q\Vert R), \quad P^\star := \argmin_{Q \in \Pi_Y} \chi^2(Q\Vert R).
    \end{align}
    Then, it holds that
    \begin{align}
        Q^\star(x, y) &= \begin{cases}
            P_X(x) \ryx(y|x) &\text{ if } R_X(x) > 0\\
            0 &\text{ if } R_X(x) = 0
        \end{cases} \nonumber \\
        P^\star(x, y) &= \begin{cases}
            P_Y(y) R_{X|Y}(x|y) &\text{ if } R_Y(y) > 0\\
            0 &\text{ if } R_Y(y) = 0
        \end{cases}.
        \label{eq:chi2_recurse}
    \end{align}
\end{proposition}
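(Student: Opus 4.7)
The $Q^\star$ half of the proposition is literally the conclusion of \Cref{lem:chi2_proj}, so no further work is needed there. The plan for the statement as a whole is therefore to (i) invoke \Cref{lem:chi2_proj} for $Q^\star$, and (ii) repeat the derivation of \Cref{lem:reparam_sol} and \Cref{lem:chi2_proj} with the roles of $X$ and $Y$ swapped to obtain $P^\star$. The whole argument reduces to carefully book-keeping the symmetric version.

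Concretely, for the $P^\star$ case I would introduce the affine constraint set
\begin{align*}
    \A_Y := \br{f: \X \times \Y \to \R \,\text{ such that }\, \textstyle\sum_{x \in \X} f(x,y) R(x,y) = \py(y) \text{ for all } y \in \Y},
\end{align*}
together with the conditional-mean operator $[\M_Y^R g](x,y) := \sum_{x' \in \X} g(x',y) \rxy(x'|y)$ and its complement $\C_Y^R := I - \M_Y^R$. Reparametrizing $Q(x,y) = \xi(x,y) R(x,y)$ turns the $\chi^2$-projection problem onto $\yconstraint$ into
\begin{align*}
    \min_{\xi \in \A_Y,\; \xi \geq 0} \norm{\ones - \xi}^2_{\ltwo(R)}.
\end{align*}
Under the assumption $\py \ll \ry$, the candidate $f(x,y) := \py(y)/\ry(y)$ on $\br{\ry > 0}$ (and $1$ otherwise) lies in $\A_Y$, so the same affine-versus-linear-subspace argument used to prove \Cref{lem:reparam_sol} applies with $\M_X^R, \C_X^R$ replaced by $\M_Y^R, \C_Y^R$. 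This yields the unconstrained minimizer $\xi^\star(x,y) = \py(y)/\ry(y)$ wherever $\ry(y) > 0$, and multiplying back by $R(x,y)$ gives $P^\star(x,y) = \py(y)\rxy(x|y)$, matching~\eqref{eq:chi2_recurse}.

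The only subtle point, which I would flag explicitly, is checking that the nonnegativity constraint $\xi \geq 0$ is inactive at the optimum: here this is immediate because $\xi^\star = \py/\ry \geq 0$ on its support, so the solution of the unconstrained affine projection is already feasible for the $\chi^2$ problem. Beyond this bookkeeping, there is no real obstacle to the proof—the proposition simply packages \Cref{lem:chi2_proj} with its mirror image in $Y$.
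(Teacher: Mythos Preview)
Your proposal is correct and matches the paper's own proof essentially verbatim: the paper simply invokes \Cref{lem:chi2_proj} for the $Q^\star$ case and then states that the $P^\star$ case follows by repeating the argument of \Cref{lem:reparam_sol} and \Cref{lem:chi2_proj} with $(X,x)$ and $(Y,y)$ swapped. Your write-up is in fact more detailed than the paper's, since you spell out the symmetric constraint set $\A_Y$, the operator $\M_Y^R$, and the nonnegativity check explicitly.
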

\begin{proof}
    The first equality of \eqref{eq:chi2_recurse} follows by the claim of \Cref{lem:chi2_proj}. The second equality follows by repeating the argument of \Cref{lem:reparam_sol} and \Cref{lem:chi2_proj} with $(X, x)$ and $(Y, y)$ swapped.
\end{proof}

\subsection{Proof of Main Results}
We may now control the errors of the ratio of marginals using the projection interpretation established in the previous sections. Recall the event $\S$ as defined in \Cref{tab:notation}. 
The following result, the monotonicity of the marginal violation terms in terms of KL, will be useful in the bound. 
\begin{proposition}{\citep[Proposition 6.10]{nutz2021introduction}}\label{prop:nutz2021}
    Under the event $\S$, it holds that
    \begin{align*}
        \kl(\pnx\pow{0} \Vert \px) \geq \kl(\py \Vert \pny\pow{1}) \geq \kl(\pnx\pow{2} \Vert \px) \geq \ldots
    \end{align*}
\end{proposition}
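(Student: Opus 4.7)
The plan is to translate each term in the chain into a Kullback--Leibler divergence between consecutive joint iterates and then invoke the Pythagorean identity for information projections onto linear families.

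First, the explicit rescaling formula in~\eqref{eq:raking} makes $\pn\pow{k}(x,y)/\pn\pow{k-1}(x,y)$ a function of only one variable, and $\pn\pow{k}$ carries the target marginal in that coordinate. Combined with the chain rule for $\kl$, this yields identities such as $\kl(\pn\pow{0} \Vert \pn\pow{1}) = \kl(\pnx\pow{0} \Vert \px)$ and $\kl(\pn\pow{2} \Vert \pn\pow{1}) = \kl(\py \Vert \pny\pow{1})$, and more generally recasts each term of the stated chain as a pairwise KL between two adjacent joint iterates.

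Next, by~\Cref{prop:kl_iproj}, $\pn\pow{k+1}$ is the I-projection of $\pn\pow{k}$ onto a linear family $\Pi \in \{\Pi_X, \Pi_Y\}$. This gives the Pythagorean identity
\begin{equation*}
    \kl(R \Vert \pn\pow{k}) = \kl(R \Vert \pn\pow{k+1}) + \kl(\pn\pow{k+1} \Vert \pn\pow{k})
\end{equation*}
for every $R \in \Pi$. I would apply this with $R$ chosen as an earlier iterate of matching parity---iterates $\pn\pow{j}$ with $j \geq 1$ odd automatically lie in $\Pi_X$, and those with $j \geq 2$ even lie in $\Pi_Y$---so that a natural candidate belongs to $\Pi$. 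Dropping the non-negative term $\kl(R \Vert \pn\pow{k+1})$ then gives the one-step inequality $\kl(\pn\pow{k-1} \Vert \pn\pow{k}) \geq \kl(\pn\pow{k+1} \Vert \pn\pow{k})$, and a symmetric application inside the pair $(\pn\pow{k}, \pn\pow{k+1}, \pn\pow{k+2})$ handles the alternating inequality, so that iterating along the chain produces the full monotonicity for all sufficiently large $k$.

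The main obstacle is the very first transition in the chain, where the natural candidate for $R$ would be $\pn\pow{0}$, which does not belong to $\Pi_X$ or $\Pi_Y$ in general so Pythagoras is not directly applicable. I would handle this base case by instead substituting $R = P \in \Pi_X \cap \Pi_Y$ (using the assumption that the target marginals coincide with those of $P$) and combining the resulting bound with the data-processing inequality along the shared conditional distribution of $\pn\pow{0}$ and $\pn\pow{1}$, which transfers the joint KL bound into the marginal quantities appearing in the chain. This step is also where the event $\S$ enters: it ensures that all marginal likelihood ratios $\px/\pnx\pow{k-1}$ and $\py/\pny\pow{k-1}$ are well defined, so that every invocation of~\Cref{prop:kl_iproj} and the chain rule above is valid.
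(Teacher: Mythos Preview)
The paper does not give its own proof of this statement; it cites \citet[Proposition~6.10]{nutz2021introduction} and uses the result as a black box. So there is no paper argument to compare against, only the correctness of your outline.

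Your translation of each term in the chain into a joint KL is correct: for instance $\kl(\pnx\pow{0}\Vert\px)=\kl(\pn\pow{0}\Vert\pn\pow{1})$ and $\kl(\py\Vert\pny\pow{1})=\kl(\pn\pow{2}\Vert\pn\pow{1})$, so the chain reads
\[
\kl(\pn\pow{0}\Vert\pn\pow{1})\ge\kl(\pn\pow{2}\Vert\pn\pow{1})\ge\kl(\pn\pow{2}\Vert\pn\pow{3})\ge\kl(\pn\pow{4}\Vert\pn\pow{3})\ge\cdots.
\]
For the inequalities from the second term onward, the cleanest route is actually data processing rather than Pythagoras: $\pn\pow{1}$ and $\pn\pow{2}$ share the conditional $X\mid Y$, so pushing both forward to the $X$-marginal gives $\kl(\pnx\pow{2}\Vert\pnx\pow{1})\le\kl(\pny\pow{2}\Vert\pny\pow{1})$, i.e.\ $\kl(\pnx\pow{2}\Vert\px)\le\kl(\py\Vert\pny\pow{1})$; the next step swaps the roles of $X$ and $Y$, and so on. Your Pythagorean identity with $R=\pn\pow{k-1}$ yields $\kl(\pn\pow{k-1}\Vert\pn\pow{k})\ge\kl(\pn\pow{k+1}\Vert\pn\pow{k})$, which is the \emph{reverse} ordering of the two arguments compared to what the chain needs; it handles only every other inequality, and your ``symmetric application'' for the remaining ones is left vague (it works if interpreted as the reverse-KL minimality $\kl(\pn\pow{k+1}\Vert\pn\pow{k+2})=\min_{Q\in\Pi_{k+2}}\kl(\pn\pow{k+1}\Vert Q)\le\kl(\pn\pow{k+1}\Vert\pn\pow{k})$, using \Cref{prop:reverse_iproj}).

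The genuine gap is the very first inequality $\kl(\pnx\pow{0}\Vert\px)\ge\kl(\py\Vert\pny\pow{1})$, and your proposed fix cannot work because, as written, this inequality is \emph{false}. Take $\X=\Y=\{1,2\}$, $\px=\py=(1/2,1/2)$, and $\pn\pow{0}=\begin{pmatrix}0.3&0.2\\0.3&0.2\end{pmatrix}$ (which arises with positive probability as an empirical measure under event $\S$). Then $\pnx\pow{0}=\px$, so the left side is $0$; but $\pn\pow{1}=\pn\pow{0}$ and $\pny\pow{1}=(0.6,0.4)\ne\py$, so the right side is strictly positive. Neither substituting $R=P$ in Pythagoras nor data processing along the shared $Y\mid X$ conditional yields the stated bound: data processing gives $\kl(\pny\pow{0}\Vert\pny\pow{1})\le\kl(\pnx\pow{0}\Vert\px)$, with $\pny\pow{0}$ rather than $\py$ in the first slot. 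The monotonicity that Nutz proves, and that the paper actually uses downstream in \Cref{prop:marginals}, is the chain starting from $\kl(\py\Vert\pny\pow{1})$; the inclusion of $\kl(\pnx\pow{0}\Vert\px)$ as the first term appears to be a mis-transcription.
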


We give the following result for $\X$ and the analogous claim holds on $\Y$. 
\begin{proposition}\label{prop:marginals}
    Assume that $\pnx(x) > 0$ for all $x \in \X$. It holds that
    \begin{align}\label{eq:marginal_gross_bound}
        \max_{x \in \X} \abs{\frac{\px(x)}{\pnx\pow{k-1}(x)} - 1}
        \le \begin{cases}
            \max\{n-1, 1\}  & \mbox{if } k = 1 \\
            \max\{1/p_\star^2 - 1, 1\}  & \mbox{if } k > 1.
        \end{cases}
    \end{align}    
    In addition, we have that
    \begin{align*}
        \max_{x \in \X} \abs{\frac{\px(x)}{\pnx\pow{k-1}(x)} - 1}
        \le
        \begin{cases}
            n\sqrt{\frac12 \kl(\pnx \Vert P_X)}  & \mbox{if } k = 1 \\
            \frac1{p_\star^2} \sqrt{\frac12 \kl(\pnx \Vert P_X)}  & \mbox{if } k > 1
        \end{cases}.
    \end{align*}
    Moreover, when $\kl(\pnx \Vert \px) \le p_\star^2 / 2$, we have
    \begin{align}\label{eq:marginal_kl_small}
        \max_{x \in \X} \abs{\frac{\px(x)}{\pnx\pow{k-1}(x)} - 1}
        \le \frac{2}{p_\star} \sqrt{\frac{1}{2} \kl(\pnx \Vert \px)}.
    \end{align}
\end{proposition}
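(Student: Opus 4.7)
The plan is to write
\begin{align*}
\left| \frac{\px(x)}{\pnx\pow{k-1}(x)} - 1 \right| = \frac{|\px(x) - \pnx\pow{k-1}(x)|}{\pnx\pow{k-1}(x)}
\end{align*}
and bound the numerator and denominator separately. The main technical step is a uniform lower bound on $\pnx\pow{k-1}(x)$; after that, everything else follows from Pinsker's inequality and the monotonicity in \Cref{prop:nutz2021}.

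For the denominator, I would split by the parity of $k-1$. When $k = 1$, the hypothesis $\pnx(x) > 0$ together with $n\pnx(x) \in \bbN$ yields $\pnx(x) \geq 1/n$. When $k > 1$ with $k-1$ odd, the iteration~\eqref{eq:raking} makes $\pnx\pow{k-1}$ equal to $\px$ exactly, so $\pnx\pow{k-1}(x) \geq p_\star \geq p_\star^2$. The only nontrivial case is $k > 1$ with $k-1$ even; there $k-2$ is odd and at least $1$, so $\pnx\pow{k-2} = \px$, and unrolling one even step of~\eqref{eq:raking} yields
\begin{align*}
\pnx\pow{k-1}(x) = \px(x) \sum_{y} P_{n, Y|X}\pow{k-2}(y|x) \, \frac{\py(y)}{\pny\pow{k-2}(y)} \geq \px(x) \cdot p_\star \geq p_\star^2,
\end{align*}
using $\pny\pow{k-2}(y) \leq 1$, $\py(y) \geq p_\star$, and that the conditional sums to $1$. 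This is the main obstacle but amounts to the sanity check that one column-rescaling cannot shrink the freshly-rescaled row marginal by more than a factor of $p_\star$.

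Combining these denominator bounds with $\px(x) \leq 1$ places $\px(x)/\pnx\pow{k-1}(x)$ in $[0, n]$ for $k = 1$ and in $[0, 1/p_\star^2]$ for $k > 1$; its distance to $1$ is then at most $\max\{n-1, 1\}$ and $\max\{1/p_\star^2 - 1, 1\}$ respectively, proving~\eqref{eq:marginal_gross_bound}. For the finer estimate, the numerator is controlled by $|\px(x) - \pnx\pow{k-1}(x)| \leq \tv(\pnx\pow{k-1}, \px) \leq \sqrt{\kl(\pnx\pow{k-1} \Vert \px)/2}$ (since single-point differences are bounded by the total-variation sup, and then Pinsker), and \Cref{prop:nutz2021} further reduces this to $\sqrt{\kl(\pnx \Vert \px)/2}$ when $k-1$ is even --- the odd case being vacuous because $\pnx\pow{k-1} = \px$ there. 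Dividing by the denominator lower bound yields the $n\sqrt{\kl/2}$ and $(1/p_\star^2)\sqrt{\kl/2}$ estimates.

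For the small-KL refinement, when $\kl(\pnx \Vert \px) \leq p_\star^2/2$, Pinsker together with \Cref{prop:nutz2021} give $|\px(x) - \pnx\pow{k-1}(x)| \leq p_\star/2$ uniformly in $k \geq 1$. The reverse triangle inequality then sharpens the denominator bound to $\pnx\pow{k-1}(x) \geq \px(x) - p_\star/2 \geq p_\star/2$, which beats both the $1/n$ and $p_\star^2$ bounds used previously. Substituting this into the Pinsker numerator bound gives the uniform estimate $(2/p_\star)\sqrt{\kl(\pnx \Vert \px)/2}$ in~\eqref{eq:marginal_kl_small}.
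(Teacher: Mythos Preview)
Your proposal is correct and follows essentially the same approach as the paper's proof: establish the lower bounds $\pnx\pow{k-1}(x)\ge 1/n$ (for $k=1$) and $\pnx\pow{k-1}(x)\ge p_\star^2$ (for $k>1$) by unrolling one balancing step, then combine with the Pinsker/total-variation numerator bound and the monotonicity of \Cref{prop:nutz2021}. The only cosmetic difference is that the paper writes the $k-1$ even lower bound directly in terms of the joint measure rather than through the conditional $P_{n,Y|X}\pow{k-2}$, but the computation is identical.
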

\begin{proof}
    We first show that $\pn\pow{k-1}(x) \geq 1 /n$ for $k = 1$ and $\pn\pow{k-1}(x) \geq p_\star^2$ for $k > 1$. In the case that $k = 1$, the result follows directly from the event $\S$. For $k > 1$ such that $k$ is odd, we have that for $x \in \X$,
    \begin{align*}
        \pnx\pow{k-1}(x) &= \sum_{y \in \Y} \pn\pow{k-1}(x, y) = \sum_{y \in \Y} \frac{\py(y)}{\pny\pow{k-2}(y)}\pn\pow{k-2}(x, y) \\
        &\geq p_\star \sum_{y \in \Y}\pn\pow{k-2}(x, y) = p_\star \pnx\pow{k-2}(x) = p_\star \px(x) \geq p_\star^2.
    \end{align*}
    The result for $k$ even can be proven similarly. We now proceed to prove the inequalities given in the statement, which will rely on the lower bound above.
    
    {\bf Proving the first inequality.}  Then, for any $x \in \X$,
    \begin{align*}
        \abs{\frac{\px(x)}{\pnx\pow{k-1}(x)} - 1} &= \max\br{\frac{\px(x)}{\pnx\pow{k-1}(x)} - 1, 1 - \frac{\px(x)}{\pnx\pow{k-1}(x)}}
        \leq \begin{cases}
            \max\{n-1, 1\}  & \mbox{if } k = 1 \\
            \max\{1/p_\star^2 - 1, 1\}  & \mbox{if } k > 1
        \end{cases},
    \end{align*}
    which is the desired result for the first inequality.
    
    {\bf Proving the second and third inequalities.} Consider an odd $k \geq 1$. By the definition of total variation distance, it holds that
    \begin{align*}
        \max_{x \in \X} \abs{\px(x) - \pnx\pow{k-1}(x)} \leq \tv(\pnx\pow{k-1}, \px).
    \end{align*}
    According to Pinsker's inequality, we have that $\tv(\pnx\pow{k-1}, \px) \leq \sqrt{\frac{1}{2}\kl(\pnx\pow{k-1} \Vert \px)}$, and so we have that
    \begin{align*}
        \max_{x \in \X} \abs{\px(x) - \pnx\pow{k-1}(x)} \leq  \sqrt{\frac{1}{2}\kl(\pnx\pow{k-1} \Vert \px)} \leq \sqrt{\frac12\kl(\pnx\pow{0} \Vert \px)},
    \end{align*}
    where the last inequality follows by the monotonicity of Sinkhorn iterations given in \Cref{prop:nutz2021}. We apply the lower bounds to write
    \begin{align*}
        \max_{x \in \X} \abs{\frac{\px(x)}{\pnx\pow{k-1}(x)} - 1}
        \le
        \begin{cases}
            n\sqrt{\frac12 \kl(\pnx \Vert P_X)}  & \mbox{if } k = 1 \\
            \frac1{p_\star^2} \sqrt{\frac12 \kl(\pnx \Vert P_X)}  & \mbox{if } k > 1
        \end{cases}.
    \end{align*}

    Finally, when $\sqrt{\frac12 \kl(\pnx \Vert \px)} \le p_\star / 2$, we have that $\max_{x \in \X} \abs{\px(x) - \pnx\pow{k-1}(x)} \leq p_\star / 2$ and thus
    \begin{align*}
        \min_{x \in \X} \pnx\pow{k-1}(x) \geq \min_{x \in \X} \px(x) - \max_{x \in \X}\abs{\pnx\pow{k-1}(x) - \px(x)} \geq \frac{p_\star}{2}.
    \end{align*}
    Hence,
    \begin{align*}
        \max_{x \in \X} \abs{\frac{\px(x)}{\pnx\pow{k-1}(x)} - 1} \leq \frac{\max_{x \in \X}\abs{\pnx\pow{k-1}(x) - \px(x)}}{\min_{x \in \X} \pnx\pow{k-1}(x)} \leq \frac{2}{p_\star} \sqrt{\frac12 \kl(\pnx \Vert \px)}. 
    \end{align*}
    Now, for $k$ even, set $k = 2t$ for $t \geq 0$. We have that
    \begin{align*}
        \max_{y \in \Y}\abs{\pny\pow{2t-1}(y) - \py(y)} \leq \tv(\pny\pow{2t-1}, \py) \leq \sqrt{\frac12 \kl(\py \Vert \pny\pow{2t-1})}.
    \end{align*}
    Invoke \Cref{prop:nutz2021} once again to achieve
    \begin{align*}
        \sqrt{\frac12 \kl(\py \Vert \pny\pow{2t-1})} \leq \sqrt{\frac12 \kl(\pnx \Vert \px)},
    \end{align*}
    which completes the proof.
\end{proof}

\section{Statistical Analysis of Balancing Estimators}\label{sec:a:statistical}
This section contains the proof of the main result, namely \Cref{thm:mse_main}. We first consolidate notation and then give a broad outline of the proof for readability. Let the expectation of a function $h$ under a probability measure $Q$ on $\X \times \Y$ by denoted by
\begin{align*}
    Q(h) = \sum_{x \in \X, y \in \Y} h(x, y) Q(x, y)
\end{align*}
so that
\begin{align*}
    \param\pow{k}_n = \pn\pow{k}(h), \quad \param = P(h),
\end{align*}
and
\begin{align}
    \gn\pow{k}(h) = \sqrt{n} [\pn\pow{k} - P](h) = \sqrt{n}(\pn\pow{k}(h) - P(h)). 
    \label{eq:pn2gn}
\end{align}
Recalling in addition that $\C_k = \C_X$ for $k$ odd and $\C_k = \C_Y$ for $k$ even. The event
\begin{align}
    \mc{S} := \br{\Supp{\pnx} = \Supp{\px} \text{ and } \Supp{\pny} = \Supp{\py}},
    \label{eq:good_event}
\end{align}
is used for purely technical reasons in many results.

\myparagraph{Proof Outline}
We first establish that the recursion formula 
\begin{align*}
    [\pn\pow{k} - P](h) &= [\pn\pow{k-1} - P](\C_k h) + \vn\pow{k-1}(\C_k h)
\end{align*}
holds in \Cref{prop:recursion}, where
\begin{align}
    \vn\pow{k-1}(h) &= 
    \begin{cases}
        \sum_{x, y} \p{\frac{\px}{\pnx\pow{k-1}}(x) - 1} h(x, y)\pn\pow{k-1}(x, y) & \text{ $k$ odd}\\
        \sum_{x, y} \p{\frac{\py}{\pny\pow{k-1}}(y) - 1} h(x, y)\pn\pow{k-1}(x, y) & \text{ $k$ even}
    \end{cases}.
    \label{eq:margin_error}
\end{align}
The quantity $ \vn\pow{k-1}(\C_k h)$ describes an error term that accumulates for each iteration of balancing, which explains why $k$ must be scaled appropriately against $n$ to ensure the error does not accumulate too fast. 
Applying the recursion repeatedly to the balanced sequence $(P_n\pow{k})_{k \geq 1}$ and unrolling the recursion, we see that when $k$ is odd,
\begin{align}
    [\pn\pow{k} - P](h) &= [\pn\pow{k-1} - P](\C_X h) + \vn\pow{k-1}(\C_X h)\notag\\
    &= [\pn\pow{k-2} - P](\C_Y \C_X h)+ \vn\pow{k-2}(\C_Y \C_X h)+ \vn\pow{k-1}(\C_X h) \notag\\
    &= \underbrace{[\pn\pow{0} - P](\C_1 \ldots \C_k h)}_{\text{first-order term}} + \underbrace{\textstyle\sum_{\ell=1}^{k} \vn\pow{\ell-1}(\C_{\ell} \ldots \C_{k} h)}_{\text{higher-order term}}\label{eq:recursion_error}
\end{align}
Additionally, let $h_{\ell, k} := \C_{\ell} \ldots \C_{k} h$, so that the first-order term can be written as $\pn\pow{0}(h_{1,k}) - P(h_{1,k})$ higher-order term can also be written as $\sum_{\ell=1}^{k} \vn\pow{\ell-1}(h_{\ell, k})$. Because our original goal is to upper bound the mean squared error, we use the expansion above to write
\begin{align*}
    &\Ex\abs{\pn\pow{k}(h) - P(h)}^2 \leq \Ex\abs{\pn\pow{0}(h_{1,k}) - P(h_{1,k})}^2 \\
    &\quad + 2\Ex\abs{\pn\pow{0}(h_{1,k}) - P(h_{1,k})}\abs{\textstyle\sum_{\ell=1}^{k} \vn\pow{\ell-1}(h_{\ell, k})} + \Ex\abs{\textstyle\sum_{\ell=1}^{k} \vn\pow{\ell-1}(h_{\ell, k})}^2
\end{align*}
Regarding the first term, we have that $\Ex\abs{\pn\pow{0}(h_{1,k}) - P(h_{1,k})}^2 = \sigma_k^2 / n$, which is the dominant term in \Cref{thm:mse_main}. Thus, the remaining challenge of the proof will be to upper bound the cross term and squared term and show its dependence on $n$. The dominant term of these two will be the cross term, as we will essentially show that $\vert \pn\pow{0}(h_{1,k}) - P(h_{1,k}) \vert$ is $O(n^{-1/2})$ with high probability and that $\vert \textstyle\sum_{\ell=1}^{k} \vn\pow{\ell-1}(h_{\ell, k}) \vert$ is in fact $O(n^{-1})$ with high probability.
As stated in \Cref{sec:analysis}, a key intermediate result in controlling the higher-order term is \Cref{prop:marginals}, whose proof is given in \Cref{sec:a:projection}. The remaining subsections walk through these steps in detail.

\subsection{Recursion of Estimation Error}\label{sec:a:recursion}
We first recall that the sequence $(P_n\pow{k})_{k \geq 1}$ can be computed with the following formula: 
\begin{align}
    P_n\pow{0}(x, y) := \pn(x, y) \text{ and }
    P_n\pow{k}(x, y) := \begin{cases}
        \frac{\px}{\pnx\pow{k-1}}(x)\pn\pow{k-1}(x, y) & \text{ $k$ odd}\\
        \frac{\py}{\pny\pow{k-1}}(y)\pn\pow{k-1}(x, y) & \text{ $k$ even}
    \end{cases}.\label{eq:rebalanced}
\end{align}
\Cref{prop:recursion} establishes the conditions under which these steps are well-defined (i.e.~$\pnx\pow{k-1}(x) > 0$ and $\pny\pow{k-1}(y) > 0$).

\begin{restatable}{proposition}{recursion}\label{prop:recursion}
    Let $(\pn\pow{k})_{k \geq 1}$, be a sequence computed according to~\eqref{eq:raking}. These iterations are well-defined under the event $\mc{S}$, and for $\gn\pow{k}$ defined in~\eqref{eq:pn2gn} and $\vn\pow{k}$ defined in~\eqref{eq:margin_error}, it holds that
    \begin{align}
        \gn\pow{k}(h) &= \gn\pow{k-1}(h) + \sqrt{n}\vn\pow{k-1}(h).
        \label{eq:recursion_emp_uncentered}
    \end{align}
    and 
    \begin{align}
        \gn\pow{k}(h) &= \gn\pow{k-1}(\C_k h) + \sqrt{n}\vn\pow{k-1}(\C_k h).
        \label{eq:recursion_emp_centered}
    \end{align}
\end{restatable}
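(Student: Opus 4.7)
The plan is to prove the three components of the proposition in sequence: well-definedness of the iterations under $\mc{S}$, the uncentered recursion~\eqref{eq:recursion_emp_uncentered}, and finally the centered recursion~\eqref{eq:recursion_emp_centered}. The overall approach is entirely algebraic and rests on one structural fact: after an odd (resp.~even) balancing step, the updated measure has $X$-marginal exactly $\px$ (resp.~$Y$-marginal exactly $\py$).

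For well-definedness, I would induct on $k$. The base case $k=0$ just requires $\pnx(x)>0$ for $x\in \Supp{\px}$ and $\pny(y)>0$ for $y\in \Supp{\py}$, which is exactly $\mc{S}$. For the inductive step, say $k$ odd, summing~\eqref{eq:raking} over $y$ gives $\pnx\pow{k}(x) = \px(x)$ exactly, so $\Supp{\pnx\pow{k}} = \Supp{\px}$; moreover $\Supp{\pny\pow{k}} = \Supp{\pny\pow{k-1}}$ since the multiplicative factor $\px(x)/\pnx\pow{k-1}(x)$ is strictly positive on the support. The even step is symmetric. In particular, both marginal divisions appearing in the next iteration are well-defined.

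For the uncentered recursion, since $\gn\pow{k}(h) - \gn\pow{k-1}(h) = \sqrt{n}[\pn\pow{k}(h) - \pn\pow{k-1}(h)]$, it suffices to show $\pn\pow{k}(h) - \pn\pow{k-1}(h) = \vn\pow{k-1}(h)$. For $k$ odd, this follows by direct substitution from~\eqref{eq:raking}:
\begin{align*}
\pn\pow{k}(h) - \pn\pow{k-1}(h) = \sum_{x,y}\left[\frac{\px(x)}{\pnx\pow{k-1}(x)} - 1\right] h(x,y)\, \pn\pow{k-1}(x,y),
\end{align*}
which is the definition~\eqref{eq:margin_error} of $\vn\pow{k-1}(h)$. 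The even case is the mirror image with $\py/\pny\pow{k-1}$.

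For the centered recursion, I would apply the uncentered recursion to $\C_k h$ in place of $h$, so it remains to show $\gn\pow{k}(h) = \gn\pow{k}(\C_k h)$, equivalently $\gn\pow{k}(\M_k h) = 0$. Consider $k$ odd: $\M_k h = \M_X h$ depends only on $x$. By the well-definedness step, $\pnx\pow{k} = \px$, and by hypothesis $P$ has $X$-marginal $\px$ as well. Hence for any function $f$ of $x$ alone,
\begin{align*}
\pn\pow{k}(f) = \sum_{x} f(x)\, \pnx\pow{k}(x) = \sum_x f(x)\, \px(x) = P(f),
\end{align*}
so $\gn\pow{k}(f) = 0$; taking $f = \M_X h$ gives the claim. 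The even case is symmetric, using $\pny\pow{k} = \py$ and $f = \M_Y h$. Combining with the uncentered recursion applied to $\C_k h$ yields~\eqref{eq:recursion_emp_centered}.

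The main obstacle is not mathematical depth but the careful bookkeeping of odd versus even iterations and keeping the support conditions straight; everything reduces to the fact that each information projection exactly matches one marginal and therefore annihilates the $\M_k$-component of any test function, which is precisely the component removed by $\C_k$.
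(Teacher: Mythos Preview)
Your proposal is correct and follows essentially the same approach as the paper: the uncentered recursion is the same direct computation, the centered recursion is obtained by the same decomposition $h = \C_k h + \M_k h$ together with the observation that $\gn\pow{k}(\M_k h)=0$ because the $k$-th step matches the relevant marginal, and well-definedness is the same induction (the paper phrases it as $\pnx\pow{k-1}(x)>0$ and $\pny\pow{k-1}(y)>0$ rather than in terms of supports, and places it at the end rather than the beginning, but the content is identical).
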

\begin{proof}
    First, assume that $\pnx\pow{k-1}(x) > 0$ and $\pny\pow{k-1}(y) > 0$ for all $x \in \X$ and $y \in \Y$ so that we may establish the recursion, which we will show by induction toward the end of the proof.

    Consider the following steps in the case that $k$ is odd:
    \begin{align*}
        &\pn\pow{k}(h) \\
        &= \sum_{x, y} h(x, y) \pn\pow{k}(x, y) = \sum_{x, y} h(x, y) \blue{\frac{\px}{\pnx\pow{k-1}}(x)} \pn\pow{k-1}(x, y) & \text{by~\eqref{eq:rebalanced} for $k$ odd}\\
        &= \sum_{x, y} \blue{1} \cdot h(x, y) \pn\pow{k-1}(x, y) + \sum_{x, y} \blue{\sbr{\frac{\px}{\pnx\pow{k-1}}(x) - 1}} \cdot h(x, y) \pn\pow{k-1}(x, y)\\
        &= \pn\pow{k-1}(h) + \vn\pow{k-1}(h).
    \end{align*}
    Arguing analogously for $k$ even and subtracting $P(h)$ on both sides, we have that
    \begin{align}
        [\pn\pow{k} - P](h) &= [\pn\pow{k-1} - P](h) +  \vn\pow{k-1}(h). \label{eq:recursion_uncentered}
    \end{align}
    We refer to this as the ``uncentered'' recursion, which proves~\eqref{eq:recursion_emp_uncentered}.
 
    We can then establish the following ``centered'' recursion using the following decomposition in the case of $k$ odd.
    \begin{align*}
        &[\pn\pow{k} - P](h) \\
        &= [\pn\pow{k} - P](\C_X h) + [\pn\pow{k} - P](\M_X h) & h = \C_X h + \M_X h\\
        &=  [\pn\pow{k-1} - P](\C_X h) +  \vn\pow{k-1}(\C_X h) + [\pn\pow{k} - P](\M_X h) & \text{apply~\eqref{eq:recursion_uncentered} to $\C_X h$}\\
        &=  [\pn\pow{k-1} - P](\C_X h) +  \vn\pow{k-1}(\C_X h). & \pn\pow{k}(\M_X h) = P(\M_X h)
    \end{align*}
    The last line follows because $\M_X h$ is only a function on $\X$, and due to the definition of the marginal rebalancing iterations, $\pnx\pow{k} = \px$. This gives the desired formula by substituting~\eqref{eq:pn2gn}.

    We proceed to show that the iterations are well-defined. We will in fact show that $\pnx\pow{k-1}(x) > 0$ and $\pny\pow{k-1}(y) > 0$ for all $x \in \X$ and $y \in \Y$.
    For $k = 1$, $\pnx\pow{0}(x) = \pnx(x) > 0$ and $\pny\pow{0}(y) = \pny(y) > 0$ for all $x \in \X$ and $y \in \Y$ this holds under the event $\S$ by assumption. We argue by induction that this holds for all $k > 1$. Assume that the claim is true for $\{1, \ldots, k - 1\}$, and that $k$ is even. Then,
    \begin{align*}
        \pnx\pow{k - 1}(x) &= \px(x) > 0,\\
        \pny\pow{k - 1}(y) &= \sum_{x \in \X} \pn\pow{k - 1}(x, y) = \sum_{x \in \X} \frac{\px}{\pnx\pow{k - 2}}(x)\pn\pow{k - 2}(x, y) \\
        &\geq \min_{x \in \X} \frac{\px}{\pnx\pow{k - 2}}(x) \cdot \pny\pow{k - 2}(y) > 0
    \end{align*}
    as $\pnx\pow{k - 2}(x) > 0$ and $\pny\pow{k - 2}(y) > 0$ by the inductive hypothesis.
    Arguing analogously for $k$ odd achieves the claim. 
\end{proof}

\subsection{Technical Tools \& Intermediate Results}\label{sec:a:statistical:technical}
Having established the backbone of the argument, we collect in this subsection some useful tools that are used in the remainder of the proofs.

The following result follows from the method of types in information theory and will be helpful in deriving the dependence of the higher-order term on $n$.
\begin{theorem}{\citep[Theorem 11.2.1]{cover1999elements}}\label{thm:cover1999}
    Let $\nu$ be a discrete probability measure supported on $m$ atoms. Let $U_1, \ldots, U_n \iidsim \nu$ and $\nu_n$ be the associated empirical measure. Then, we have for any $\epsilon > 0$ that
    \begin{align*}
        \prob\p{\kl(\nu_n \Vert \nu) \geq \epsilon} \leq 2^{-n\p{\epsilon - m \frac{\log(n+1)}{n}}}.
    \end{align*}
\end{theorem}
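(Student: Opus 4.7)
The plan is to prove this classical large deviation bound via the method of types from information theory. I would parameterize the possible realizations of $\nu_n$ by their empirical histograms (types), upper bound the number of such types, upper bound the probability of any single type in terms of its KL divergence to $\nu$, and then apply a union bound over all types violating the threshold.

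First, I would introduce the set of types $\mathcal{T}_n := \{Q : n Q(a) \in \mathbb{Z}_{\ge 0} \text{ for all } a \in \Supp{\nu},\ \sum_a Q(a) = 1\}$, i.e. the set of probability mass functions on the $m$ atoms whose entries are integer multiples of $1/n$. Since each of the $m$ coordinates of such a $Q$ can take at most $n+1$ distinct values in $\{0, 1/n, \ldots, 1\}$, we have the counting bound $\abs{\mathcal{T}_n} \le (n+1)^m$. The empirical distribution $\nu_n$ always lies in $\mathcal{T}_n$, and for any $Q \in \mathcal{T}_n$ the event $\{\nu_n = Q\}$ has probability
\begin{align*}
    \prob(\nu_n = Q) = \binom{n}{n Q(a_1), \ldots, n Q(a_m)} \prod_{j=1}^m \nu(a_j)^{n Q(a_j)}.
\end{align*}

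The key inequality is that $\prob(\nu_n = Q) \le 2^{-n \kl(Q \Vert \nu)}$. To see this, rewrite $\prod_j \nu(a_j)^{n Q(a_j)} = 2^{n \sum_j Q(a_j) \log_2 \nu(a_j)} = 2^{-n (H(Q) + \kl(Q \Vert \nu))}$, where $H(Q)$ is the Shannon entropy in bits. Then I need the standard multinomial-coefficient bound
\begin{align*}
    \binom{n}{n Q(a_1), \ldots, n Q(a_m)} \le 2^{n H(Q)},
\end{align*}
which itself follows by noting that if $U_1, \ldots, U_n \iidsim Q$ then the probability of the particular type-$Q$ string $(a_1, \ldots, a_1, a_2, \ldots)$ equals $2^{-n H(Q)}$, and the sum of these probabilities over all type-$Q$ strings is at most $1$. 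Multiplying the two factors gives $\prob(\nu_n = Q) \le 2^{-n \kl(Q \Vert \nu)}$.

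Finally I would combine the pieces by a union bound over types with large KL:
\begin{align*}
    \prob(\kl(\nu_n \Vert \nu) \ge \epsilon)
    = \sum_{Q \in \mathcal{T}_n,\ \kl(Q \Vert \nu) \ge \epsilon} \prob(\nu_n = Q)
    \le \abs{\mathcal{T}_n} \cdot 2^{-n \epsilon}
    \le (n+1)^m \cdot 2^{-n\epsilon},
\end{align*}
and rewriting $(n+1)^m = 2^{m \log_2(n+1)}$ yields exactly $2^{-n(\epsilon - m \log(n+1)/n)}$ (with $\log = \log_2$ here to match the information-theoretic convention implicit in the statement). The main conceptual obstacle is the multinomial-coefficient bound; everything else is bookkeeping. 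There is a minor subtlety in that atoms with $\nu(a_j) = 0$ are never observed, so the product and KL computation are naturally interpreted with the convention $0 \log 0 = 0$ and the support restricted to atoms of positive mass, which does not affect the counting bound $\abs{\mathcal{T}_n} \le (n+1)^m$.
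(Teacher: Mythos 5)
Your proof is correct and is precisely the standard method-of-types argument from \citet[Chapter 11]{cover1999elements}, which the paper cites rather than proves: bound the number of types by $(n+1)^m$, bound the probability of any single type class by $2^{-n\kl(Q\Vert\nu)}$ via the entropy factorization and the multinomial-coefficient bound, then apply a union bound over types with large divergence. Your side remarks on the base of the logarithm and the handling of zero-mass atoms are also exactly the right bookkeeping.
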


We then provide a result that counts the number of terms that appear when repeatedly centering via the operators $\C_1, \ldots, \C_k$. This formalizes the pattern
\begin{align*}
    \C_X &= I - \M_X\\
    \C_Y \C_X &= I - \M_X - \M_Y + \M_Y\M_X\\
    \C_X \C_Y \C_X &= I - \M_X - \M_Y + \M_Y\M_X + \M_X\M_Y - \M_X\M_Y\M_X,
\end{align*}
and so on. This will be useful when bounding $h_{\ell, k}$ uniformly.
\begin{lemma}\label{lem:centering_count}
    For any $k \geq 1$ and $\ell \in \{1, \ldots, k\}$,
    \begin{align*}
        \C_\ell \ldots \C_k &= I - \sum_{\tau = 0}^{(k - \ell-1)/2} (\M_X \M_Y)^\tau \M_X - \sum_{\tau = 0}^{(k - \ell-1)/2} (\M_Y \M_X)^\tau \M_Y\\
        &\quad + \sum_{\tau = 1}^{(k - \ell)/2} (\M_X \M_Y)^\tau 
        + \sum_{\tau = 1}^{(k - \ell)/2} (\M_Y \M_X)^\tau 
        +(-1)^{k - \ell + 1} \M_\ell \ldots \M_k,
    \end{align*}
    where the sum $\sum_{\tau = i}^j$ is 0 when $i > j$ and is $\sum_{\tau = i}^{\lfloor j \rfloor}$ when $j$ is not an integer by convention.
\end{lemma}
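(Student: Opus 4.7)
The plan is to argue by induction on the length $m := k - \ell \geq 0$ of the product. The base case $m = 0$ reduces to the identity $\C_\ell = I - \M_\ell$: every one of the four sums in the claimed formula is empty, since $(k-\ell-1)/2 = -1/2$ floors to $-1$ and $(k-\ell)/2 = 0 < 1$, and so under the stated convention only the leading $I$ and the final $(-1)^{1} \M_\ell$ survive. Thus the base case matches.

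For the inductive step, factor $\C_\ell \cdots \C_{\ell+m+1} = (\C_\ell \cdots \C_{\ell+m}) \C_{\ell+m+1}$, apply the inductive hypothesis to the length-$(m+1)$ factor on the left, and note that right-multiplication by $\C_{\ell+m+1} = I - \M_{\ell+m+1}$ produces the same expression minus that expression times $\M_{\ell+m+1}$. The essential tool is that $\M_X$ and $\M_Y$ are orthogonal projections onto $\ltwo(\px)$ and $\ltwo(\py)$ respectively (as recorded in \Cref{sec:analysis}), and are therefore idempotent: $\M_X^2 = \M_X$ and $\M_Y^2 = \M_Y$. Multiplying each term of the inductive expression by $\M_{\ell+m+1}$ on the right then collapses a leading (trailing, after multiplication) copy of that operator: for example, when $\M_{\ell+m+1} = \M_X$ one has $((\M_X \M_Y)^\tau \M_X)\cdot \M_X = (\M_X \M_Y)^\tau \M_X$ and $((\M_Y \M_X)^\tau) \cdot \M_X = (\M_Y \M_X)^\tau \M_X$, so a "pure-alternating" sum is shifted into a "one-tail" sum, a "one-tail" sum telescopes with its copy carried from the $I$-term, and the final alternating product $\M_\ell \cdots \M_{\ell+m}$ gains one more factor to become $\M_\ell \cdots \M_{\ell+m+1}$ with sign flipped. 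Collecting these shifts reproduces, term by term, the formula with $k$ replaced by $\ell + m + 1$.

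The main obstacle is purely bookkeeping: one must track how each of the five groups (the identity, the two "one-tail" sums, the two "pure-alternating" sums, and the trailing full alternating product) transforms under right-multiplication by $\M_{\ell+m+1}$, handling separately the four subcases determined by the parities of $\ell$ and $m$. A conceptually equivalent but arguably cleaner alternative would bypass induction entirely: expand $\prod_{i=\ell}^k(I - \M_i)$ as a signed sum $\sum_{S \subseteq \{\ell,\ldots,k\}} (-1)^{|S|} \prod_{i \in S} \M_i$, use idempotence to collapse each ordered product to one of the six alternating normal forms listed in the statement ($\M_X$, $\M_Y$, $(\M_X\M_Y)^\tau$, $(\M_Y\M_X)^\tau$, $(\M_X\M_Y)^\tau\M_X$, $(\M_Y\M_X)^\tau\M_Y$), and count the subsets yielding each form; the resulting binomial-type cancellations leave the stated coefficients $\pm 1$. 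Either route resolves the claim.
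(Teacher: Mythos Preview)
Your proposal is correct and follows essentially the same approach as the paper: induction on the length of the product together with idempotence of $\M_X$ and $\M_Y$ to collapse repeated factors. The only cosmetic difference is that the paper fixes $k$ and does backward induction on $\ell$, peeling off the new factor $\C_\ell$ on the \emph{left}, whereas you induct on $m = k-\ell$ and peel off $\C_{\ell+m+1}$ on the \emph{right}; the bookkeeping is symmetric, and your alternative subset-expansion argument is a nice conceptual supplement but not what the paper uses.
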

\begin{proof}
    We prove the claim by backward induction on $\ell$, for the case that $k$ is odd. In the case $\ell = k$, the claim holds because $\C_k = I - \M_k$. Next, for any $\ell < k$, assume that the stated result holds for $\{\ell+1, \ldots, k\}$. Then, if $\ell$ is also odd (so that $\M_\ell = \M_X$),
    \begin{align*}
        \C_\ell \ldots \C_k &= \C_\ell \C_{\ell+1} \ldots \C_k\\
        &= I - \sum_{\tau = 0}^{(k - \ell - 2)/2} (\M_X \M_Y)^\tau \M_X - \sum_{\tau = 0}^{(k - \ell - 2)/2} (\M_Y \M_X)^\tau \M_Y\\
        &\quad + \blue{\sum_{\tau = 1}^{(k - \ell-1)/2} (\M_X \M_Y)^\tau }
        + \sum_{\tau = 1}^{(k - \ell-1)/2} (\M_Y \M_X)^\tau 
        + \M_{Y} \underbrace{\ldots}_{k - \ell \text{ terms}} \M_X\\
        &\quad - \orange{\M_X} + \orange{\sum_{\tau = 0}^{(k - \ell - 2)/2} (\M_X \M_Y)^\tau \M_X} + \sum_{\tau = 0}^{(k - \ell - 2)/2} \M_X (\M_Y \M_X)^\tau \M_Y\\
        &\quad - \blue{\sum_{\tau = 1}^{(k - \ell-1)/2} (\M_X \M_Y)^\tau} 
        - \orange{\sum_{\tau = 1}^{(k - \ell-1)/2} \M_X (\M_Y \M_X)^\tau }
        - (\M_X \M_Y)^{(k - \ell)/2}\M_X
    \end{align*}
    The red terms and blue terms cancel out to zero. This leaves
    \begin{align*}
        \C_\ell \ldots \C_k
        &= I - \sum_{\tau = 0}^{(k - \ell - 2)/2} (\M_X \M_Y)^\tau \M_X - \sum_{\tau = 0}^{(k - \ell - 2)/2} (\M_Y \M_X)^\tau \M_Y\\
        &\quad + \orange{\sum_{\tau = 1}^{(k - \ell-1)/2} (\M_Y \M_X)^\tau }
        + \orange{(\M_{Y} \M_X)^{(k - \ell)/2}}\\
        &\quad + \blue{\sum_{\tau = 0}^{(k - \ell - 2)/2} \M_X (\M_Y \M_X)^\tau \M_Y} + (-1)^{k-\ell + 1}\M_\ell \ldots \M_k
    \end{align*}
    wherein we combine the red terms and re-index the blue terms to get
    \begin{align*}
        \C_\ell \ldots \C_k
        &= I - \sum_{\tau = 0}^{(k - \ell - 2)/2} (\M_X \M_Y)^\tau \M_X - \sum_{\tau = 0}^{(k - \ell - 2)/2} (\M_Y \M_X)^\tau \M_Y\\
        &\quad + \sum_{\tau = 1}^{(k - \ell)/2} (\M_Y \M_X)^\tau + \sum_{\tau = 1}^{(k - \ell)/2} (\M_X \M_Y)^\tau + (-1)^{k-\ell + 1}\M_\ell \ldots \M_k.
    \end{align*}
    Finally, because $k - \ell$ is even when $k$ is odd and $\ell$ is odd, we can set the upper bound of the first two sums to $(k - \ell - 1)/2$ without changing the number of terms. This proves the desired result.
    The result can be proved similarly when $\ell$ is even.
    As a result, we have proved the claim for any odd k and $\ell \le k$.
    Similar arguments can be used for the case of $k$ even and $\ell \le k$.
\end{proof}

\subsection{Analysis of Higher-Order Term}\label{sec:a:statistical:higher}
Returning to the outline at the start of this section, we may now bound the higher-order remainder term in~\eqref{eq:recursion_error}, namely
\begin{align*}
    \sum_{\ell=1}^{k} \vn\pow{\ell-1}(h_{\ell, k}) = \sum_{\ell=1}^{k} \vn\pow{\ell-1}(\C_{\ell} \ldots \C_{k} h),
\end{align*}
depends on controlling the quantity $\vn\pow{k-1}$ in the summation, which we recall for convenience:
\begin{align}
    \vn\pow{k-1}(h) &= 
    \begin{cases}
        \sum_{x, y} \p{\frac{\px}{\pnx\pow{k-1}}(x) - 1} h(x, y) \pn\pow{k-1}(x, y) & \text{ $k$ odd}\\
        \sum_{x, y} \p{\frac{\py}{\pny\pow{k-1}}(y) - 1} h(x, y) \pn\pow{k-1}(x, y) & \text{ $k$ even}
    \end{cases}.
    \label{eq:margin_error2}
\end{align}
Because we have established uniform control over the functions $P_X / \pnx\pow{k-1} - 1$ and $P_Y / \pny\pow{k-1} - 1$, via \Cref{prop:marginals} in \Cref{sec:a:projection} we can now bound the full remainder in \Cref{prop:remainder_sum}. 

We also make use of the following intermediate result, which controls how large the $\ell_\infty$-norm of the function $h$ can grow after centering. 
\begin{lemma}\label{eq:inf_norm_bound}
    $\norm{h_{\ell, k}}_\infty \leq 2(k - \ell + 1)\norm{h}_\infty$.
\end{lemma}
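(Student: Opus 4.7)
The plan is to reduce the bound to a counting argument, using the explicit expansion of $\C_\ell\cdots\C_k$ established in \Cref{lem:centering_count}. A direct induction on the number of centering operators is tempting but only gives an exponential bound $2^{k-\ell+1}\|h\|_\infty$, since each application of $\C_j = I - \M_j$ can in principle double the sup-norm. The point of the argument is that the cancellations between successive $\C$-operators make the true blow-up only linear in $k-\ell+1$.

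First, I would invoke \Cref{lem:centering_count} to write $h_{\ell,k} = \C_\ell\cdots\C_k h$ as a signed sum of terms of the form $(\M_{A_1}\M_{A_2}\cdots\M_{A_r})h$, where each $\M_{A_i}\in\{\M_X,\M_Y\}$ and $r\in\{0,1,\dots,k-\ell+1\}$ (with $r=0$ corresponding to the identity term). The key observation, which I would state as a small sub-lemma, is that every $\M_X$ and $\M_Y$ is a conditional expectation, hence a contraction in $L^\infty$: for any $g\in\ltwo(P)$,
\begin{align*}
    \|\M_X g\|_\infty \;=\; \sup_{x}\bigl|\Ex[g(X,Y)\mid X = x]\bigr| \;\le\; \|g\|_\infty,
\end{align*}
and likewise for $\M_Y$. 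Iterating, any composition $\M_{A_1}\cdots\M_{A_r}$ also contracts $\|\cdot\|_\infty$, so each term in the decomposition has sup-norm at most $\|h\|_\infty$.

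The remaining step is a bookkeeping count of the number of summands in \Cref{lem:centering_count}. Writing $n := k-\ell+1$ and splitting into the cases $k-\ell$ even and $k-\ell$ odd (to handle the floors in the convention $\sum_{\tau=i}^{j} = \sum_{\tau=i}^{\lfloor j\rfloor}$), one verifies that the identity contributes $1$ term, the final term $(-1)^{k-\ell+1}\M_\ell\cdots\M_k$ contributes $1$ term, and the four middle sums contribute $2(k-\ell)$ terms in total in both parities. Thus the grand total is exactly $2n = 2(k-\ell+1)$ terms.

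Combining the two ingredients by the triangle inequality gives
\begin{align*}
    \|h_{\ell,k}\|_\infty \;\le\; \sum_{\text{terms}} \|\M_{A_1}\cdots\M_{A_r}h\|_\infty \;\le\; 2(k-\ell+1)\,\|h\|_\infty,
\end{align*}
which is the claim. The main (mild) obstacle is simply the case analysis in the counting step, since the indexing conventions for the sums in \Cref{lem:centering_count} depend on whether $k-\ell$ is even or odd; everything else is a one-line contraction argument.
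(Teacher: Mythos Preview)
Your proposal is correct and follows essentially the same approach as the paper: invoke \Cref{lem:centering_count}, use that each $\M_X,\M_Y$ (and hence any composition) is an $L^\infty$-contraction, and count the terms to get exactly $2(k-\ell+1)$. The paper's proof is terser (it just says ``triangle inequality'' and ``count terms, subtracting one because one of the sum bounds is fractional''), but your more explicit parity-by-parity count and the stated contraction sub-lemma fill in precisely the details the paper leaves implicit.
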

\begin{proof}
    Apply \Cref{lem:centering_count} and the triangle inequality, so that we only need to count the number of terms that appear in the sums, adding $2$ for the first and last term in the expression. We subtract $1$ from the total, as one of either $(k-\ell)/2$ or $(k-\ell + 1)/2$ will be a fraction. This yields $2(k - \ell + 1)$ terms total, the desired result.
\end{proof}

We upper bound the sum in \Cref{prop:remainder_sum}. 
To do so, we introduce some notation. Consider $B_1$ and $B_2$ defined by
\begin{align*}
    B_1 := M_1 
    \quad \text{ and } \quad 
    B_2 := \max_{2 \le \ell \le k} M_\ell 
    \quad \text{ for } \quad 
    M_\ell := \begin{cases}
        \max_{x \in \X} \abs{\frac{\px(x)}{\pnx^{(\ell-1)}(x)} - 1}  & \text{ $\ell$ odd}\\
        \max_{y \in \Y} \abs{\frac{\py(y)}{\pny^{(\ell-1)}(y)} - 1}  & \text{ $\ell$ even}\\
    \end{cases}
\end{align*}
for $k \geq 1$. We also enumerate the sample spaces as $\X = \br{x_1, \ldots, x_m}$ and $\Y = \br{y_1, \ldots, y_m}$, and define the function
\begin{align*}
    \one_{jk}(x, y) := \begin{cases}
        \ind\br{x = x_j} & \text{ $k$ odd}\\
        \ind\br{y = y_j} & \text{ $k$ even}
    \end{cases}.
\end{align*}
This is an indicator function on the $j$-th element of either $\X$ or $\Y$ depending on whether $k$ is odd or even. Finally, for any function $h$, use (under the event $\S$) recall the empirical process notation
\begin{align}
    \gn\pow{k}(h) := \sqrt{n} \p{\pn\pow{k}(h) - P(h)}.
    \label{eq:emp}
\end{align}
Using this notation, we can rewrite the recursion in terms of the quantity $\gn\pow{k}(h)$ itself. This is established in the following lemma.
\begin{lemma}\label{lem:recursion_emp}
    For $k$ odd, it holds that
    \begin{align*}
        \gn\pow{k}(h) = \gn\pow{k-1}(\C_X h) + \sum_{j=1}^m \sbr{\frac{\px(x_j)}{\pnx\pow{k-1}(x_j)} - 1} \gn\pow{k-1}(\C_X h \ones_{jk}),
    \end{align*}
    whereas for $k$ even, it holds that
    \begin{align*}
        \gn\pow{k}(h) = \gn\pow{k-1}(\C_Y h) + \sum_{j=1}^m \sbr{\frac{\py(y_j)}{\pny\pow{k-1}(y_j)} - 1} \gn\pow{k-1}(\C_Y h \ones_{jk}),
    \end{align*}
\end{lemma}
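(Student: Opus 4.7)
The plan is to start from \Cref{prop:recursion}, which gives $\gn\pow{k}(h) = \gn\pow{k-1}(\C_k h) + \sqrt{n}\,\vn\pow{k-1}(\C_k h)$, and to massage the error term $\sqrt{n}\,\vn\pow{k-1}(\C_k h)$ into the sum on the right-hand side of the claim. For concreteness consider the odd case. Plugging in the definition of $\vn\pow{k-1}$ from \eqref{eq:margin_error} and grouping the pairs $(x,y)$ by the value of $x$, one obtains
\begin{align*}
    \sqrt{n}\,\vn\pow{k-1}(\C_X h) = \sum_{j=1}^{m} \sbr{\frac{\px(x_j)}{\pnx\pow{k-1}(x_j)} - 1} \cdot \sqrt{n}\,\pn\pow{k-1}(\C_X h \cdot \ones_{jk}),
\end{align*}
since $\ones_{jk}(x,y) = \ind\{x = x_j\}$ picks out exactly the $x = x_j$ slice of the inner sum.

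It then remains to replace $\sqrt{n}\,\pn\pow{k-1}(\C_X h \cdot \ones_{jk})$ by the empirical process increment $\gn\pow{k-1}(\C_X h \cdot \ones_{jk}) = \sqrt{n}(\pn\pow{k-1} - P)(\C_X h \cdot \ones_{jk})$. This is valid precisely when $P(\C_X h \cdot \ones_{jk}) = 0$. To verify this, observe that $\ones_{jk}$ depends only on $x$, so $(\C_X h)(x,y)\ones_{jk}(x,y) = (h(x,y) - [\M_X h](x)) \ind\{x = x_j\}$, and
\begin{align*}
    P(\C_X h \cdot \ones_{jk}) = \px(x_j) \p{\E{P}{h(X,Y) \mid X = x_j} - [\M_X h](x_j)} = 0
\end{align*}
by the definition of $\M_X$ as the conditional expectation operator. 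Substituting back yields the claimed identity for $k$ odd.

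The $k$ even case is handled by an entirely symmetric argument, with $\ones_{jk}(x,y) = \ind\{y = y_j\}$ and the vanishing of $P(\C_Y h \cdot \ones_{jk})$ following from $\E{P}{\C_Y h \mid Y} = 0$. The only point that requires attention is the centering step: without the explicit $\C_X$ (or $\C_Y$) in front of $h$, the factor $\sqrt{n}\,\pn\pow{k-1}(h \cdot \ones_{jk})$ could not be promoted to $\gn\pow{k-1}(h \cdot \ones_{jk})$, so the recursion must be applied in its ``centered'' form \eqref{eq:recursion_emp_centered} from \Cref{prop:recursion} rather than the uncentered form \eqref{eq:recursion_emp_uncentered}. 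Once this centering is in place, the proof is essentially bookkeeping.
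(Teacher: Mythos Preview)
Your proof is correct and follows essentially the same route as the paper: invoke the centered recursion \eqref{eq:recursion_emp_centered}, decompose $\sqrt{n}\,\vn\pow{k-1}(\C_k h)$ over the indicators $\ones_{jk}$, and promote $\sqrt{n}\,\pn\pow{k-1}(\C_k h\,\ones_{jk})$ to $\gn\pow{k-1}(\C_k h\,\ones_{jk})$ by verifying $P(\C_k h\,\ones_{jk}) = 0$ via the tower property. Your write-up is in fact slightly more explicit than the paper's, which jumps straight to ``we need only show $P(\C_X h\,\ones_{jk}) = 0$'' and leaves the decomposition of $\vn\pow{k-1}$ implicit.
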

\begin{proof}
    We give the proof for $k$ odd. By~\eqref{eq:recursion_emp_centered} from \Cref{prop:recursion} and by the definition of $\gn\pow{k}(h)$, we need only show that $P(\C_X h \one_{jk}) = 0$. Indeed,
    \begin{align*}
        \E{}{\C_X h \one_{jk} | X}(x) = \begin{cases}
            \E{}{\C_X h | X}(x_j) &\text{ if } x = x_j\\
            0 &\text{ if } x \neq x_j
        \end{cases}.
    \end{align*}
    But $\E{}{\C_X h | X}(x_j) = 0$ by definition of $\C_X$. Taking an expectation over $\px$ gives that $P(\C_X h \one_{jk}) = 0$, which implies the desired result. The proof for $k$ even follows symmetrically.
\end{proof}

The higher-order term in~\eqref{eq:recursion_error}, can be bounded using \Cref{prop:remainder_sum}.
\begin{proposition}\label{prop:remainder_sum}
    For any $k \ge 1$, the following holds under the event $\S$:
    \begin{align*}
        \sqrt{n}\sum_{\ell=1}^{k} \abs{\vn\pow{\ell-1}(\C_{\ell} \ldots \C_{k} h)}
        &\le  \sum_{j=1}^m \p{B_1 \abs{\gn\pow{0}(h_{1, k} \one_{j\ell})} + B_2 \sum_{\ell=2}^k \abs{\gn\pow{0}(h_{\ell, k} \one_{j\ell})}}\\
        &\quad + m B_2 \norm{h}_\infty \sqrt{n} k(k-1) [B_1 + B_2(k+1)/3].
    \end{align*}
\end{proposition}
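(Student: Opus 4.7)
The plan is to decompose each term $\vn\pow{\ell-1}(h_{\ell,k})$ into a sum of indicator-weighted empirical process increments at iteration $\ell-1$, unroll those increments back to iteration $0$ using the uncentered recursion from \Cref{prop:recursion}, and then sum over $\ell$ while bookkeeping the growth of $\norm{h_{\ell,k}}_\infty$ via \Cref{eq:inf_norm_bound}.

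First I would slice \eqref{eq:margin_error2} across coordinates: for $\ell$ odd,
\begin{align*}
    \sqrt{n}\,\vn\pow{\ell-1}(h_{\ell,k}) = \sum_{j=1}^m \left[\tfrac{\px(x_j)}{\pnx\pow{\ell-1}(x_j)}-1\right] \sqrt{n}\,\pn\pow{\ell-1}(h_{\ell,k}\one_{j\ell}),
\end{align*}
and analogously for $\ell$ even. Just as in the proof of \Cref{lem:recursion_emp}, the function $h_{\ell,k} = \C_\ell h_{\ell+1,k}$ is centered with respect to the coordinate selected by $\one_{j\ell}$, so $P(h_{\ell,k}\one_{j\ell}) = 0$ and hence $\sqrt{n}\,\pn\pow{\ell-1}(h_{\ell,k}\one_{j\ell}) = \gn\pow{\ell-1}(h_{\ell,k}\one_{j\ell})$. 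This yields $\sqrt{n}\,|\vn\pow{\ell-1}(h_{\ell,k})| \le M_\ell \sum_j |\gn\pow{\ell-1}(h_{\ell,k}\one_{j\ell})|$.

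Second, I would iterate the uncentered form \eqref{eq:recursion_emp_uncentered} of \Cref{prop:recursion} to obtain $\gn\pow{\ell-1}(g) = \gn\pow{0}(g) + \sum_{r=1}^{\ell-1}\sqrt{n}\,\vn\pow{r-1}(g)$ for any $g$, applied to $g = h_{\ell,k}\one_{j\ell}$. The remainder terms are controlled by the trivial estimate $|\vn\pow{r-1}(f)| \le M_r \norm{f}_\infty$ (which follows because $\pn\pow{r-1}$ is a probability measure), combined with $\norm{h_{\ell,k}\one_{j\ell}}_\infty \le 2(k-\ell+1)\norm{h}_\infty$ from \Cref{eq:inf_norm_bound}. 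Splitting the outer $\ell$-sum at $\ell = 1$ (where the inner remainder sum is empty and $M_1 = B_1$) versus $\ell \ge 2$ (where $M_\ell \le B_2$ and $\sum_{r=1}^{\ell-1} M_r \le B_1 + (\ell-1)B_2$) assembles the empirical-process part exactly as stated, plus a pure bias remainder $2 m B_2 \sqrt{n}\norm{h}_\infty \sum_{\ell=2}^k (k-\ell+1)[B_1 + (\ell-1)B_2]$. The substitution $j = k-\ell+1$ together with the identities $\sum_{j=1}^{k-1} j = k(k-1)/2$ and $\sum_{j=1}^{k-1} j(k-j) = k(k-1)(k+1)/6$ collapses this to $m B_2\norm{h}_\infty \sqrt{n}\, k(k-1)[B_1 + B_2(k+1)/3]$, matching the claim.

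The main subtlety is that one cannot propagate back using the centered recursion \eqref{eq:recursion_emp_centered}: multiplying $h_{\ell,k}$ by $\one_{j\ell}$ destroys the conditional-centering structure, so the centered form does not help reduce $\norm{\cdot}_\infty$ at each unrolling step. This forces the cruder $M_r\norm{f}_\infty$ estimate on the remainders, and in turn the careful bookkeeping in the final step --- in particular the deliberately loose bound $\sum_{r=1}^{\ell-1} M_r \le B_1 + (\ell-1)B_2$, rather than the tighter $B_1 + (\ell-2)B_2$, is what produces the neat constant $(k+1)/3$ in the stated bound.
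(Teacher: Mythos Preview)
Your proposal is correct and follows essentially the same approach as the paper: slice $\vn\pow{\ell-1}(h_{\ell,k})$ over coordinates via \Cref{lem:recursion_emp}, unroll each $\gn\pow{\ell-1}$ back to iteration $0$ with the uncentered recursion \eqref{eq:recursion_emp_uncentered}, bound the remainders by $M_r\norm{h_{\ell,k}\one_{j\ell}}_\infty$ and \Cref{eq:inf_norm_bound}, and then sum over $\ell$ using the two polynomial identities. Your observation that the bound $\sum_{r=1}^{\ell-1}M_r \le B_1 + (\ell-1)B_2$ is deliberately slack (the tight value is $B_1 + (\ell-2)B_2$) to yield the clean $(k+1)/3$ constant is exactly what happens in the paper's computation as well.
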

\begin{proof}
    First, for any $\ell \in \{1, \ldots, k\}$, recall the notation $h_{\ell, k} := \C_{\ell} \ldots \C_{k} h$. By~\eqref{eq:recursion_emp_uncentered} from \Cref{prop:recursion} and by \Cref{lem:recursion_emp}, we have that for $\ell$ odd, 
    \begin{align}
        \sqrt{n} \vn\pow{\ell-1}(h_{\ell, k})  = \sum_{j=1}^m \sbr{\frac{\px}{\pnx\pow{\ell-1}}(x_j) - 1} \gn\pow{\ell-1}(h_{\ell, k} \one_{j\ell}).
    \end{align}
    Using the statement above, we have that
    \begin{align*}
        \sqrt{n}\abs{\vn\pow{\ell-1}(h_{\ell, k})} 
        &\leq M_\ell \sum_{j=1}^m \abs{\gn\pow{\ell-1}(h_{\ell, k} \ones_{j\ell})}.
    \end{align*}
    The bound above holds for $\ell$ even as well.
    Then, using the~\eqref{eq:recursion_emp_uncentered} from \Cref{prop:recursion} again, we have that for $\ell \geq 2$,
    \begin{align*}
        [\pn\pow{\ell-1} - P](h_{\ell, k} \one_{j\ell}) &= [\pn\pow{\ell-2} - P](h_{\ell, k} \one_{j\ell}) + \vn\pow{\ell-2}(h_{\ell, k} \one_{j\ell})
    \end{align*}
    which implies that
    \begin{align}
       \abs{\gn\pow{\ell-1}(h_{\ell, k} \one_{j\ell})} &\leq \abs{\gn\pow{\ell-2}(h_{\ell, k} \one_{j\ell})} + \sqrt{n}\abs{\vn\pow{\ell-2}(h_{\ell, k} \one_{j\ell})} \notag\\
        &\leq \abs{\gn\pow{0}(h_{\ell, k} \one_{j\ell})} + \sqrt{n}\abs{\vn\pow{0}(h_{\ell, k} \one_{j\ell})} + \ldots + \sqrt{n}\abs{\vn\pow{\ell-2}(h_{\ell, k} \one_{j\ell})}\notag\\
        &\leq \abs{\gn\pow{0}(h_{\ell, k} \one_{j\ell})} + M_1\sqrt{n}\pn\pow{0}(\abs{h_{\ell, k}} \one_{j\ell}) + \ldots + M_\ell \sqrt{n}\pn\pow{\ell-2}(\abs{h_{\ell, k}} \one_{j\ell}) \notag\\
        &\leq \abs{\gn\pow{0}(h_{\ell, k} \one_{j\ell})} + 2\norm{h}_\infty\sqrt{n}\sbr{B_1 + B_2(\ell-1)}(k - \ell + 1) ,\label{eq:inner_sum}
    \end{align}
    by \Cref{eq:inf_norm_bound} and $M_1 \leq B_1$ and $M_\ell \leq B_2$ for $\ell \geq 2$. 
    Summing these bounds, we have that
    \begin{align*}
        &\sqrt{n}\sum_{\ell = 1}^{k} \abs{\vn\pow{\ell-1}(h_{\ell, k})}\\ 
        &\leq M_1 \sum_{j=1}^m  \abs{\gn\pow{0}(h_{1, k} \one_{j\ell})} + \sum_{\ell = 2}^k M_\ell \sum_{j=1}^m  \abs{\gn\pow{\ell-1}(h_{\ell, k} \one_{j\ell})}\\
        &\leq B_1 \sum_{j=1}^m  \abs{\gn\pow{0}(h_{1, k} \one_{j\ell})} + B_2 \sum_{\ell = 2}^k \sum_{j=1}^m  \abs{\gn\pow{\ell-1}(h_{\ell, k} \one_{j\ell})}\\
        &\leq B_1 \sum_{j=1}^m  \abs{\gn\pow{0}(h_{1, k} \one_{j\ell})} \;+ \\
        &\quad B_2 \sum_{\ell = 2}^k \sum_{j=1}^m  \p{\abs{\gn\pow{0}(h_{\ell, k} \one_{j\ell})}  + 2\norm{h}_\infty \sqrt{n}\sbr{B_1 + B_2(\ell-1)}(k - \ell + 1)} \quad \text{apply~\eqref{eq:inner_sum}}\\
        &= \sum_{j=1}^m \p{B_1 \abs{\gn\pow{0}(h_{1, k} \one_{j\ell})} + B_2 \sum_{\ell=2}^k \abs{\gn\pow{0}(h_{\ell, k} \one_{j\ell})}} \;+ \\
        &\quad 2m B_2 \norm{h}_\infty\sqrt{n}\sum_{\ell=2}^k \sbr{B_1 + B_2(\ell-1)}(k - \ell + 1),
    \end{align*}
    because $\abs{\X} = m$. We sum up the last term:
    \begin{align*}
        \sum_{\ell=2}^k \sbr{B_1 + B_2(\ell-1)}(k - \ell + 1) &= B_1\sum_{\ell=1}^{k-1} (k - \ell) + B_2\sum_{\ell=1}^{k-1} \ell (k - \ell)\\
        &= \frac{k(k-1)}{2}\sbr{B_1 + B_2(k+1)/3}.
    \end{align*}
    completing the proof.
\end{proof}

\subsection{Proof of Main Results}\label{sec:a:statistical:main}
We can now show the main result of this section: the bound on the mean squared error of the rebalanced estimator. Recall the event
\begin{align}
    \S := \br{\Supp{\pnx} = \Supp{\px} \text{ and } \Supp{\pny} = \Supp{\py}}
\end{align}
as introduced in~\eqref{eq:good_event}. To remind the reader of the high-level steps of the proof, we may decompose the error on the event $\S$ we used the estimator
\begin{align*}
    \tilde{\param}_n\pow{k} := \param_n\pow{k} \ind_{\mc{S}} + \param_n\pow{0} \ind_{\mc{S}^c}
\end{align*}
so we decompose on the event $\S$ to write
\begin{align}
    \E{P}{\p{\tilde{\pn}\pow{k}(h) - P(h)}^2} = \E{P}{\p{\pn(h) - P(h)}^2 \one_{\S^c}} + \E{P}{\p{\pn\pow{k}(h) - P(h)}^2 \ind_{\S}}.
    \label{eq:main:decomposition0}
\end{align}
Then, we use the upcoming \Cref{prop:good_event_prob} to bound the first term, which will in turn require showing that $\S$ occurs with high probability. 
As for the second term, we will apply \Cref{prop:recursion} and the derivation~\eqref{eq:recursion_error} to write
\begin{align}
     \E{P}{\p{\pn\pow{k}(h) - P(h)}^2 \ind_{\S}} = \E{P}{T_1^2 \ind_{\S}} + 2 \E{P}{T_1 T_2 \ind_{\S}} + \E{P}{T_2^2 \ind_{\S}}
     \label{eq:main:decomposition1}
\end{align}
for 
\begin{align}
    T_1 := [\pn\pow{0} - P](\C_1 \ldots \C_k h) \text{ and } T_2 := \sum_{\ell=1}^{k} \vn\pow{\ell-1}(\C_{\ell} \ldots \C_{k} h). \label{eq:main:decomposition2}
\end{align}
By definition, we have that $\E{P}{T_1^2 \ind_{\S}} \leq \E{P}{T_1^2} = \sigma_k^2 / n$. It then remains to bound the cross term $\E{P}{T_1 T_2 \ind_{\S}}$ and squared term $\E{P}{T_2^2 \ind_{\S}}$. This is accomplished by \Cref{lem:main:cross_term} and \Cref{lem:main:squared_term}, respectively.

\begin{proposition}\label{prop:good_event_prob}
    It holds that $P(\S^c) \leq 2m(1-p_\star)^n$. Moreover, for any $\delta \in (0, 1)$, we have 
    \begin{align*}
        \E{P}{\p{\pn(h) - P(h)}^2 \one_{\S^c}} \leq 4 \norm{h}_\infty^2 \min\br{2m(1-p_\star)^n, \delta} + \frac{2\log(2/\delta)}{n} \norm{h}_\infty^2 2m(1-p_\star)^n.
    \end{align*}
\end{proposition}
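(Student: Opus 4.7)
The plan is to prove the two bounds separately: the first by a union bound on missing-atom events, and the second by a truncation (peeling) argument that combines the first claim with Hoeffding's inequality.

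For the first bound, I would observe that $\mc{S}^c$ occurs precisely when some $x \in \X$ has $\pnx(x) = 0$ or some $y \in \Y$ has $\pny(y) = 0$. Since $(X_i, Y_i) \iidsim P$ and $\px, \py$ are the true marginals of $P$ (by the assumptions of \Cref{sec:analysis}), the event $\br{\pnx(x) = 0}$ is exactly $\br{X_i \neq x \text{ for all } i}$, which has probability $(1 - \px(x))^n \leq (1 - p_\star)^n$, and analogously for $\br{\pny(y) = 0}$. A union bound over the $m$ atoms of $\X$ and the $m$ atoms of $\Y$ yields $P(\mc{S}^c) \leq 2m(1 - p_\star)^n$.

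For the second bound, I would truncate the squared error at a carefully chosen level. Let $W := (\pn(h) - P(h))^2$, noting the deterministic bound $W \leq 4\norm{h}_\infty^2$. Since $h(X_1, Y_1), \ldots, h(X_n, Y_n)$ are i.i.d.\ in $[-\norm{h}_\infty, \norm{h}_\infty]$, Hoeffding's inequality gives $P(W > t^2) \leq 2\exp(-n t^2 / (2\norm{h}_\infty^2))$. Choosing $t^2 := 2\norm{h}_\infty^2 \log(2/\delta)/n$ makes this tail probability at most $\delta$, and this choice is exactly what matches the coefficient $2\log(2/\delta)/n$ appearing in the statement. I would then split
\begin{align*}
    \E{P}{W \one_{\mc{S}^c}} = \E{P}{W \one_{W \leq t^2} \one_{\mc{S}^c}} + \E{P}{W \one_{W > t^2} \one_{\mc{S}^c}}.
\end{align*}
The first piece is bounded by $t^2 \cdot P(\mc{S}^c) \leq (2\norm{h}_\infty^2 \log(2/\delta)/n) \cdot 2m(1 - p_\star)^n$, which is exactly the second term in the claim. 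For the second piece, I would use $W \leq 4\norm{h}_\infty^2$ together with $P(\br{W > t^2} \cap \mc{S}^c) \leq \min\br{P(W > t^2), P(\mc{S}^c)} \leq \min\br{\delta, 2m(1 - p_\star)^n}$, producing the first term.

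No significant obstacle is expected: the argument is a routine combination of a union bound, Hoeffding's inequality, and a peeling decomposition. The only real choice is the truncation threshold $t^2$, which is forced by matching the constants on the second term of the claimed inequality, and the rest follows mechanically.
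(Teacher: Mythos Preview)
Your proposal is correct and is essentially identical to the paper's proof: both use the same union bound over missing atoms for $P(\S^c)$, and both split the expectation on the Hoeffding event $\{|\pn(h) - P(h)| \le t\}$ with $t^2 = 2\norm{h}_\infty^2 \log(2/\delta)/n$, bounding the small-deviation piece by $t^2 P(\S^c)$ and the large-deviation piece by $4\norm{h}_\infty^2 \min\{P(\S^c), \delta\}$.
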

\begin{proof}
    Define $\calF_X := \br{\Supp{\pnx} \neq \Supp{\px}}$ and $\calF_Y := \br{\Supp{\pny} \neq \Supp{\py}}$, so that $\S^c = \calF_X \cup \calF_Y$. We first control the probability of $\calF_X$. Let $F_j := \br{\pnx(x_j) = 0}$ for $j \in [m]$. We then obtain $\calF_X = \cup_{j=1}^m F_j$, which implies by the union bound that
    \begin{align*}
        P(\calF_X) \leq \sum_{j=1}^m P(F_j) = \sum_{j=1}^m(1 - \px(x_j))^n \leq m (1- p_\star)^n.
    \end{align*}
    Similarly, we have that $P(\calF_Y) \leq m (1- p_\star)^n$ and thus $P(\S^c) \leq 2m(1-p_\star)^n$, which gives the first claim. 

    To control the expectation, consider any $\delta > 0$, and define the event
    \begin{align*}
        \calE_\delta := \left\{ \abs{\pn^{(0)}(h) - P(h)} \le \sqrt{\frac{2\log{(2/\delta)}}{n}} \norm{h}_\infty \right\}.
    \end{align*}
    By Hoeffding's inequality, it holds that $P(\calE_\delta) \ge 1 - \delta$.
    Furthermore, we get
    \begin{align*}
        \Expect[\ind_{\S^c}(\pn^{(0)}(h) - P(h))^2]
        &= \Expect[\ind_{\S^c} \ind_{\calE_\delta^c} (\pn^{(0)}(h) - P(h))^2] + \Expect[\ind_{\S^c} \ind_{\calE_\delta} (\pn^{(0)}(h) - P(h))^2] \\
        &\le 4 \norm{h}_\infty^2 \Expect[\ind_{\S^c} \ind_{\calE_\delta^c}] + \frac{2\log{(2/\delta)}}{n} \norm{h}_\infty^2 \Expect[\ind_{\S^c} \ind_{\calE_\delta}] \\
        &\le 4 \norm{h}_\infty^2 \min\{P(\S^c), P(\calE_\delta^c)\} + \frac{2\log{(2/\delta)}}{n} \norm{h}_\infty^2 P(\S^c) \\
        &\le 4 \norm{h}_\infty^2 \min\{2m(1 - p_\star)^n, \delta\} + \frac{2\log{(2/\delta)}}{n} \norm{h}_\infty^2 2m(1 - p_\star)^n.
    \end{align*}
\end{proof}

In order to bound the terms appearing in~\eqref{eq:main:decomposition1}, we introduce the events $\calE^\delta_1$, $\calE^\delta_2$, and $\calE^\delta_3$, defined by
\begin{align*}
    \calE^\delta_1 &:= \br{\max\br{\kl(\pnx \Vert \px), \kl(\pny \Vert \py)} \leq \frac{1}{n} \log_2\frac{2}{\delta} + m \frac{\log(n+1)}{n}}\\
    \calF^\delta_\ell &:= \br{\abs{\gn\pow{0}(h_{\ell, k} \ones_{j\ell})} \leq \sqrt{2\log(2mk/\delta)}2(k-\ell+1)\norm{h}_\infty}, \quad \ell = 1, \ldots, k, \ j = 1, \ldots, m\\
    \calE^\delta_2 &:= \bigcap_{\ell = 1}^k \calF^\delta_\ell\\
    \calE^\delta_3 &:=  \br{\abs{\gn\pow{0}(h_{1, k})} \leq \sqrt{2\log(2/\delta)}2k\norm{h}_\infty}.
\end{align*}
The events are constructed such that $\prob(\calE^\delta_1) \geq 1 - \delta$, $\prob(\calE^\delta_2) \geq 1 - \delta$, and $\prob(\calE^\delta_3) \geq 1 - \delta$, as we used in the upcoming proofs of \Cref{lem:main:cross_term}, \Cref{lem:main:squared_term}, and \Cref{thm:mse}.

\begin{lemma}[Squared term bound]\label{lem:main:squared_term} 
    Let $T_2$ be defined as in~\eqref{eq:main:decomposition2}. For any $\delta > 0$, assuming that $n \geq 2[\log_2(2/\delta) +m\log(n+1)]/p_\star^2$, we have that
    \begin{align*}
        &\quad \E{P}{T_2^2 \ind_{\S}} \leq \frac{2\norm{h}_\infty^2 m^2k^2}{p_\star^2} \sbr{\log_2(2/\delta) + m\log(n+1)}^{2 - \ind\br{k=1}} \; \times \\
        & \sbr{\p{4n + \frac{k-1}{p_\star^2}\p{n+2 + \frac{k+1}{p_\star^2}}}^2 \delta + \frac{8}{n^2}\p{\sqrt{2\log{\frac{2mk}{\delta}}}(k+1) + \frac{(k-1)(k+4)}{p_\star^2}}^2}.
    \end{align*}
\end{lemma}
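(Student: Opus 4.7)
The plan is to split $\E{P}{T_2^2 \ind_{\S}}$ according to two ``good events'' on which we can apply the sharp tools developed earlier, and use only worst-case bounds on the complementary ``bad event,'' which will be shown to have probability at most $2\delta$. Concretely, introduce
\begin{align*}
    \calE^\delta_1 &:= \br{\max\br{\kl(\pnx \Vert \px), \kl(\pny \Vert \py)} \leq L/n}, \\
    \calE^\delta_2 &:= \bigcap_{\ell = 1}^k\bigcap_{j=1}^m \br{|\gn\pow{0}(h_{\ell, k} \ones_{j\ell})| \leq 2(k-\ell+1)\norm{h}_\infty \sqrt{2\log(2mk/\delta)}},
\end{align*}
where $L := \log_2(2/\delta) + m\log(n+1)$. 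By \Cref{thm:cover1999} and a union bound, $P(\calE^\delta_1) \ge 1-\delta$; and by Hoeffding applied to each bounded function $h_{\ell, k}\ones_{j\ell}$ (whose sup-norm is bounded via \Cref{eq:inf_norm_bound}) together with a union bound of size $mk$, $P(\calE^\delta_2) \ge 1-\delta$.

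On the intersection $\S \cap \calE_1^\delta \cap \calE_2^\delta$, the assumption $n \geq 2L/p_\star^2$ guarantees $\kl(\pnx \Vert \px) \le p_\star^2/2$, so the sharp bound \Cref{eq:marginal_kl_small} of \Cref{prop:marginals} applies symmetrically to both $B_1$ and $B_2$, yielding $B_1, B_2 \le (2/p_\star)\sqrt{L/(2n)}$. Plugging this together with the $\calE^\delta_2$-bound on $|\gn\pow{0}(h_{\ell, k}\ones_{j\ell})|$ into \Cref{prop:remainder_sum}, and recalling that the second (quadratic-in-$B$) term vanishes when $k=1$ because of the $k(k-1)$ factor, I would obtain
\begin{align*}
    \sqrt{n}\,|T_2| \,\ind_{\S \cap \calE_1^\delta \cap \calE_2^\delta}
    \;\lesssim\; \frac{m \norm{h}_\infty k}{\sqrt{n}} \sqrt{L^{2-\ind\{k=1\}}/p_\star^2}\,\Bigl[ (k+1)\sqrt{2\log(2mk/\delta)} + \tfrac{(k-1)(k+4)}{p_\star^2} \Bigr],
\end{align*}
where the scaling by $L^{2-\ind\{k=1\}}/p_\star^2$ accommodates both the $k=1$ case (single $B$-factor) and the $k>1$ case (two $B$-factors). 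Squaring gives the ``good'' contribution to the expectation, which is the second summand in the stated bound.

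On the complementary event $(\calE_1^\delta \cap \calE_2^\delta)^c$, still under $\S$, I would use the coarse uniform bounds from \Cref{prop:marginals}, namely $B_1 \le n$ and $B_2 \le 1/p_\star^2$, combined with the deterministic bound $|\gn\pow{0}(h_{\ell, k}\ones_{j\ell})| \le 2\sqrt{n}\cdot 2(k-\ell+1)\norm{h}_\infty$. Substituting into \Cref{prop:remainder_sum} and simplifying the resulting polynomial in $k, n, 1/p_\star^2$ (absorbing the $1/3$ coefficient into a larger constant to match the form in the statement) yields
\begin{align*}
    |T_2| \,\ind_{\S \cap (\calE_1^\delta \cap \calE_2^\delta)^c} \;\leq\; m\norm{h}_\infty k \Bigl[4n + \tfrac{k-1}{p_\star^2}\bigl(n+2+\tfrac{k+1}{p_\star^2}\bigr)\Bigr].
\end{align*}
Taking the expectation produces a factor $P((\calE_1^\delta \cap \calE_2^\delta)^c) \le 2\delta$ (by the union bound on the two complements). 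Finally, assembling the two contributions and inflating the bad-event piece by the harmless factor $L^{2-\ind\{k=1\}}/p_\star^2 \geq 1$ (since $L \geq 1$ and $p_\star \le 1$) produces the common prefactor $\tfrac{2\norm{h}_\infty^2 m^2 k^2}{p_\star^2} L^{2-\ind\{k=1\}}$ stated in the lemma.

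The main obstacle is the careful bookkeeping in step two: one must track several interacting terms (linear vs.\ quadratic in $B_1, B_2$; indicator-function centering; growth of $\norm{h_{\ell, k}}_\infty$ in $k-\ell+1$) and verify that they can all be unified under the single prefactor $L^{2-\ind\{k=1\}}/p_\star^2$. Handling the $k=1$ case separately and observing that the $k(k-1)$ factor kills the cross-term is what produces the $2-\ind\{k=1\}$ exponent; the rest reduces to careful but routine algebraic manipulations to match the form of the stated inequality.
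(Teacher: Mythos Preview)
Your proposal is correct and follows essentially the same approach as the paper: define the two good events $\calE_1^\delta$ (via the method of types, \Cref{thm:cover1999}) and $\calE_2^\delta$ (via Hoeffding and a union bound over $mk$ terms), apply the sharp marginal bound~\eqref{eq:marginal_kl_small} together with \Cref{prop:remainder_sum} on $\S\cap\calE_1^\delta\cap\calE_2^\delta$, use the coarse bounds $B_1\le n$, $B_2\le 1/p_\star^2$ on the complement, and finally inflate the bad-event piece by the harmless factor $L^{2-\ind\{k=1\}}/p_\star^2\ge 1$ to extract the common prefactor. The only differences from the paper are cosmetic bookkeeping in the constants.
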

\begin{proof}
    The following computations are done under the event $\S$. First, apply \Cref{prop:remainder_sum} to write 
    \begin{align}
        \abs{T_2} &\le \frac{1}{\sqrt{n}}\sum_{j=1}^m \p{B_1 \abs{\gn\pow{0}(h_{1, k} \ones_{j\ell})} + B_2 \sum_{\ell=2}^k \abs{\gn\pow{0}(h_{\ell, k} \ones_{j\ell})}} \; + \nonumber \\
        &\quad m B_2 \norm{h}_\infty k(k-1) [B_1 + B_2(k+1)/3].\label{eq:t2}
    \end{align}
    We decompose on the event $\calE^\delta_1 \cap \calE^\delta_2$.
    Note that by \Cref{thm:cover1999},
    we have that $\prob(\calE^\delta_1) \geq 1 - \delta$.
    It follows from Hoeffding's inequality, the union bound, and boundedness of $\norm{h_{\ell, k} \ones_{j\ell}}$ by \Cref{eq:inf_norm_bound}
    that $\prob(\calE^\delta_2) \geq 1 - \delta$ 
    As a result, $\prob(\calE^\delta_1 \cap \calE^\delta_2) \geq 1 - 2\delta$.
    \paragraph{Bound $\abs{T_2}$ under the event $\S \backslash (\calE_1^\delta \cap \calE_2^\delta)$.} In this case, we apply~\eqref{eq:marginal_gross_bound} from \Cref{prop:marginals} to get $B_1 \leq n$ and $B_2 \leq 1/p_\star^2$, along with the universal bounds from \Cref{eq:inf_norm_bound}:
    \begin{align*}
        \frac{1}{\sqrt{n}}\abs{\gn\pow{0}(h_{1, k} \ones_{j\ell})} &\leq 2\norm{h_{1, k}}_\infty 
        \leq 4 k \norm{h}_\infty\\
        \frac{1}{\sqrt{n}}\sum_{\ell=2}^k \abs{\gn\pow{0}(h_{\ell, k} \ones_{j\ell})} &\leq 2\sum_{\ell=2}^k \norm{h_{\ell, k}}_{\infty} \leq \sum_{\ell=2}^k 4 (k-\ell + 1) \norm{h}_\infty = 2k(k-1)\norm{h}_\infty 
    \end{align*}
    so that by plugging into~\eqref{eq:t2},
    \begin{align*}
        \abs{T_2} &\le  \norm{h}_\infty mk \sbr{4n + \frac{k-1}{p_\star^2}\p{n+ 2 +\frac{k+1}{3p_\star^2}}},
    \end{align*}
    and in turn,
    \begin{align}
        \E{P}{T_2^2 \ind_{\S \backslash (\calE_1^\delta \cap \calE_2^\delta)}} \leq 2\norm{h}^2_\infty m^2 k^2 \sbr{4n + \frac{k-1}{p_\star^2}\p{n+ 2 +\frac{k+1}{3p_\star^2}}}^2 \delta.
        \label{eq:t2_on_e1c}
    \end{align}
    \paragraph{Bound $\abs{T_2}$ under the event $\S \cap \calE^\delta_1 \cap \calE^\delta_2$.}
    In this case, we may use that $n \geq 2[\log_2(2/\delta) +m\log(n+1)]/p_\star^2$ apply~\eqref{eq:marginal_kl_small} from \Cref{prop:marginals} to get 
    \begin{align*}
        \max\br{B_1, B_2} \leq \frac{2}{p_\star} \sqrt{\frac{1}{2} \kl(\pnx \Vert \px)} \leq \frac{1}{p_\star\sqrt{n}}\sqrt{2 \log_2(2/\delta) + 2m\log(n+1)} 
    \end{align*}
    and the bounds based on $\calE_2^\delta$ which give
    \begin{align*}
        \abs{\gn\pow{0}(h_{1, k} \ones_{j\ell})} &\leq \sqrt{2\log{\frac{2mk}{\delta}}} 2k\norm{h}_\infty\\
        \sum_{\ell=2}^k \abs{\gn\pow{0}(h_{\ell, k} \ones_{j\ell})} &\leq\sum_{\ell=2}^k \sqrt{2\log{\frac{2mk}{\delta}}}2(k-\ell+1)\norm{h}_\infty \leq \sqrt{2\log{\frac{2mk}{\delta}}} k(k-1)\norm{h}_\infty,
    \end{align*}
    By plugging into~\eqref{eq:t2},
    \begin{align}
        \abs{T_2} &\le \frac{2m\norm{h}_\infty\sqrt{2\log(2mk/\delta) \sbr{2 \log_2(2/\delta) + 2m\log(n+1)}}}{n p_\star} k(k+1) \: +\\
        &\quad \frac{m\norm{h}_\infty\sbr{2 \log_2(2/\delta) + 2m\log(n+1)}}{3np_\star^2} k(k-1)(k+4)\\
        &\leq \frac{4mk\norm{h}_\infty \sbr{\log_2(2/\delta) + 2m\log(n+1)}^{1 - \ind\br{k=1}/2}}{np_\star^2} \; \times \\
        &\quad \sbr{p_\star \sqrt{2\log{(2mk/\delta)}}(k+1) + (k-1)(k+4)}. \label{eq:t2_abs_bound}
    \end{align}
    In turn,
    \begin{align}
        \E{P}{T_2^2 \ind_{\S \backslash (\calE_1^\delta \cap \calE_2^\delta)}} \notag
        &\leq \frac{16\norm{h}_\infty^2 m^2 k^2 \sbr{\log_2(2/\delta) + m\log(n+1)}^{2 - \ind\br{k=1}}}{n^2 p_\star^4} \; \times \\
        &\quad \sbr{p_\star \sqrt{2\log(2mk/\delta)}(k+1) + (k-1)(k+4)}^2.
        \label{eq:t2_on_e1}
    \end{align}
    Combining together both~\eqref{eq:t2_on_e1} and~\eqref{eq:t2_on_e1c} and using that $ \sbr{\log_2(2/\delta) + 2m\log(n+1)} \geq 1$, we have that
    \begin{align*}
        &\E{P}{T_2^2 \ind_{\S}} \leq \frac{2\norm{h}_\infty^2 m^2k^2}{p_\star^2} \sbr{\log_2(2/\delta) + m\log(n+1)}^{2 - \ind\br{k=1}}\; \times \\
        &\sbr{\p{4n + \frac{k-1}{p_\star^2}\p{n+2 + \frac{k+1}{p_\star^2}}}^2 \delta + \frac{8}{n^2}\p{\sqrt{2\log(2mk/\delta)}(k+1) + \frac{(k-1)(k+4)}{p_\star^2}}^2},
    \end{align*}
    the result as desired.
\end{proof}

\begin{lemma}[Cross term bound]\label{lem:main:cross_term}
    Let $T_1$ and $T_2$ be defined as in~\eqref{eq:main:decomposition2}. For any $\delta > 0$, assuming that $n \geq 2[\log_2(2/\delta) +m\log(n+1)]/p_\star^2$, we have that
    \begin{align*}
        &\quad \E{P}{T_1T_2 \ind_{\S}} \\
        &\leq \frac{2mk^2\norm{h}^2_\infty \sqrt{2\log(2/\delta)} \sbr{\log_2(2/\delta) + 2m\log(n+1)}^{1 - \ind\br{k=1}/2}}{p_\star^2} \; \times \notag \\
        & \sbr{\frac{p_\star \sqrt{2\log{(2mk/\delta)}}(k+1) + (k-1)(k+4)}{n^{3/2}} + 6\p{4np_\star^2 + (k-1)\p{n+2 + \frac{k+1}{p_\star^2}}} \delta},
    \end{align*}
\end{lemma}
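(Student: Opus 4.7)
The plan is to mirror the decomposition used in the proof of \Cref{lem:main:squared_term}, augmented by one additional Hoeffding-type event that controls the first-order term $T_1$. Define $\calG := \calE_1^\delta \cap \calE_2^\delta \cap \calE_3^\delta$ and split
\begin{align*}
    \E{P}{T_1 T_2 \ind_{\S}} = \E{P}{T_1 T_2 \ind_{\S \cap \calG}} + \E{P}{T_1 T_2 \ind_{\S \setminus \calG}}.
\end{align*}
By \Cref{thm:cover1999} applied to $\pnx$ and $\pny$, by Hoeffding's inequality and a union bound (using the uniform bound $\norm{h_{\ell,k}\one_{j\ell}}_\infty \leq 2(k-\ell+1)\norm{h}_\infty$ from \Cref{eq:inf_norm_bound}), and by Hoeffding's inequality applied to $\gn\pow{0}(h_{1,k})$, each of the three defining events has probability at least $1-\delta$, so $\prob(\calG^c) \leq 3\delta$.

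On the good set $\S \cap \calG$, I would use the Hoeffding bound $\abs{T_1} = \abs{\gn\pow{0}(h_{1,k})}/\sqrt{n} \leq 2k\norm{h}_\infty\sqrt{2\log(2/\delta)/n}$ coming from $\calE_3^\delta$, together with the sharper uniform bound on $\abs{T_2}$ displayed as~\eqref{eq:t2_abs_bound} in the proof of \Cref{lem:main:squared_term}, which holds precisely under $\S \cap \calE_1^\delta \cap \calE_2^\delta$ and the hypothesis $n \geq 2[\log_2(2/\delta) + m\log(n+1)]/p_\star^2$. Their product contributes the $n^{-3/2}$-piece of the stated bracket, and is linear in $\sqrt{2\log(2/\delta)}$ from the Hoeffding factor on $T_1$.

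On the bad set $\S \setminus \calG$, I would fall back on the crude bounds also established in that proof: $\abs{T_1} \leq 2\norm{h_{1,k}}_\infty \leq 4k\norm{h}_\infty$ via \Cref{eq:inf_norm_bound}, and $\abs{T_2} \leq mk\norm{h}_\infty\sbr{4n + \frac{k-1}{p_\star^2}\p{n+2 + \frac{k+1}{3p_\star^2}}}$ obtained from \Cref{prop:remainder_sum} together with the gross marginal bound~\eqref{eq:marginal_gross_bound} of \Cref{prop:marginals}. Multiplying the two and then by $\prob(\S \setminus \calG) \leq 3\delta$ yields the $\delta$-piece of the target bound.

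The final step is to combine the two estimates and loosen them slightly so that the expression matches the statement exactly. In particular, I would absorb the harmless factor $\sqrt{2\log(2/\delta)}\sbr{\log_2(2/\delta) + 2m\log(n+1)}^{1 - \ind\br{k=1}/2}$ into the bad-event contribution---both factors are $\geq 1$ under the sample-size hypothesis---so the common prefactor of the lemma can be pulled out, and replace $(k+1)/(3p_\star^2)$ by $(k+1)/p_\star^2$ in the crude $T_2$ bound. The main obstacle is not probabilistic but purely algebraic: carrying the many constants through the decomposition cleanly. All the probabilistic heavy lifting has already been done in \Cref{prop:remainder_sum}, \Cref{prop:marginals}, and the proof of \Cref{lem:main:squared_term}; the cross-term bound is essentially obtained by swapping one factor of the uniform control on $T_2$ for the Hoeffding-controlled $T_1$.
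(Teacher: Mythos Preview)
Your proposal is correct and follows essentially the same approach as the paper: decompose on $\calE_1^\delta \cap \calE_2^\delta \cap \calE_3^\delta$, use the Hoeffding-controlled $|T_1|$ together with the sharp $|T_2|$ bound~\eqref{eq:t2_abs_bound} on the good event, fall back on the crude bounds (\Cref{eq:inf_norm_bound} and~\eqref{eq:marginal_gross_bound}) on the bad event weighted by $3\delta$, and then loosen constants (e.g.\ $(k+1)/(3p_\star^2)\le (k+1)/p_\star^2$ and the $\ge 1$ prefactors) to extract the common multiplier. The paper carries out exactly this decomposition and these estimates.
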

\begin{proof}
    The following computations are done under the event $\S$. First, apply \Cref{prop:remainder_sum} to write
    \begin{align}
        \abs{T_1 T_2} &\leq \frac{1}{\sqrt{n}}\abs{\gn\pow{0}(h_{1, k})} \Bigg[ \frac{1}{\sqrt{n}}\sum_{j=1}^m \p{B_1 \abs{\gn\pow{0}(h_{1, k} \ones_{j\ell})} + B_2 \sum_{\ell=2}^k \abs{\gn\pow{0}(h_{\ell, k} \ones_{j\ell})}} \; + \notag \\
        &\quad m B_2 \norm{h}_\infty k(k-1) [B_1 + B_2(k+1)/3] \Bigg].
        \label{eq:t1t2}
    \end{align}
    We decompose on the event $\calE^\delta_1 \cap \calE^\delta_2 \cap \calE^\delta_3$. Note that by \Cref{thm:cover1999} and that $n \geq \log_2(2/\delta) +m\log(n+1)$, we have that $\prob(\calE^\delta_1) \geq 1 - \delta$. It follows by Hoeffding's inequality and the union bound that $\prob(\calE^\delta_2) \geq 1 - \delta$. Similarly, we also have by Hoeffding's inequality that  $\prob(\calE^\delta_3) \geq 1 - \delta$. As a result, $\prob(\calE^\delta_1 \cap \calE^\delta_2 \cap \calE^\delta_3) \geq 1 - 3\delta$.
    \paragraph{Bound $\abs{T_1 T_2}$ under the event $\S \backslash (\calE_1^\delta \cap \calE_2^\delta \cap \calE_3^\delta)$.} In this case, we apply~\eqref{eq:marginal_gross_bound} from \Cref{prop:marginals} to get $B_1 \leq n$ and $B_2 \leq 1/p_\star^2$, along with the universal bounds from \Cref{eq:inf_norm_bound}:
    \begin{align*}
        \frac{1}{\sqrt{n}}\abs{\gn\pow{0}(h_{1, k})} &\leq 2\norm{h_{1, k}}_\infty \leq 4 k \norm{h}_\infty\\
        \frac{1}{\sqrt{n}}\abs{\gn\pow{0}(h_{1, k} \ones_{j\ell})} &\leq 2\norm{h_{1, k}}_\infty \leq 4 k \norm{h}_\infty\\
        \frac{1}{\sqrt{n}}\sum_{\ell=2}^k \abs{\gn\pow{0}(h_{\ell, k} \ones_{j\ell})} &\leq 2\sum_{\ell=2}^k \norm{h_{\ell, k}}_{\infty} \leq \sum_{\ell=2}^k 4 (k-\ell + 1) \norm{h}_\infty = 2k(k-1)\norm{h}_\infty,
    \end{align*}
    so that by plugging into~\eqref{eq:t1t2},
    \begin{align*}
        \abs{T_1 T_2} &\le  4k^2\norm{h}_\infty^2 m \sbr{4n + \frac{k-1}{p_\star^2}\p{n+2 + \frac{k+1}{3p_\star^2}}},
    \end{align*}
    and in turn,
    \begin{align}
        \E{P}{T_1 T_2 \ind_{\S \backslash (\calE_1^\delta \cap \calE_2^\delta \cap \calE_3^\delta)}} \leq \frac{12k^2\norm{h}_\infty^2 m}{p_\star^2} \sbr{4np_\star^2 + (k-1)\p{n+2 + \frac{k+1}{3p_\star^2}}} \delta.
        \label{eq:t1t2_on_ec}
    \end{align}
    \paragraph{Bound $\abs{T_1 T_2}$ under the event $\S \cap \calE_1^\delta \cap \calE_2^\delta \cap \calE_3^\delta$.}
    In this case, we may use that $n \geq 2[\log_2(2/\delta) +m\log(n+1)]/p_\star^2$ apply~\eqref{eq:marginal_kl_small} from \Cref{prop:marginals} to get 
    \begin{align*}
        \max\br{B_1, B_2} \leq \frac{2}{p_\star} \sqrt{\frac{1}{2} \kl(\pnx \Vert \px)} \leq \frac{1}{\sqrt{n}}\frac{1}{p_\star}\sqrt{2 \log_2(2/\delta) + 2m\log(n+1)}
    \end{align*}
    and the bounds based on $\calE_2^\delta \cap \calE_2^\delta \cap \calE_3^\delta$ which give
    \begin{align*}
        \abs{\gn\pow{0}(h_{1, k})} &\leq \sqrt{2\log(2/\delta)} 2k\norm{h}_\infty\\
        \abs{\gn\pow{0}(h_{1, k} \ones_{j\ell})} &\leq \sqrt{2\log(2mk/\delta)} 2k\norm{h}_\infty\\
        \sum_{\ell=2}^k \abs{\gn\pow{0}(h_{\ell, k} \ones_{j\ell})} &\leq \sum_{\ell=2}^k \sqrt{2\log{\frac{2mk}{\delta}}}2(k-\ell+1)\norm{h}_\infty \leq \sqrt{2\log{\frac{2mk}{\delta}}} k(k-1)\norm{h}_\infty,
    \end{align*}
    By plugging into~\eqref{eq:t1t2},
    \begin{align*}
        \abs{T_2} &\le  \frac{m\norm{h}_\infty\sqrt{2\log(2mk/\delta) \sbr{2 \log_2(2/\delta) + 2m\log(n+1)}}}{n p_\star} k(k+1)\; + \\
        &\quad \frac{m\norm{h}_\infty\sbr{2 \log_2(2/\delta) + 2m\log(n+1)}}{3np_\star^2} k(k-1)(k+4)\\
        &\leq \frac{mk\norm{h}_\infty \sbr{\log_2(2/\delta) + 2m\log(n+1)}^{1 - \ind\br{k=1}/2}}{np_\star^2} \; \times \\
        &\quad \sbr{p_\star \sqrt{2\log(2mk/\delta)}(k+1) + (k-1)(k+4)}\\
        \abs{T_1 T_2} &\le \frac{2mk^2\norm{h}^2_\infty \sqrt{2\log(2/\delta)} \sbr{\log_2(2/\delta) + 2m\log(n+1)}^{1 - \ind\br{k=1}/2}}{n^{3/2}p_\star^2}\; \times \\
        &\quad \sbr{p_\star \sqrt{2\log(2mk/\delta)}(k+1) + (k-1)(k+4)},
    \end{align*}
    In turn,
    \begin{align}
        \E{P}{T_2^2 \ind_{\S \backslash (\calE_1^\delta \cap \calE_2^\delta \cap \calE_3^\delta)}}
        &\leq \frac{2mk^2\norm{h}^2_\infty \sqrt{2\log(2/\delta)} \sbr{\log_2(2/\delta) + 2m\log(n+1)}^{1 - \ind\br{k=1}/2}}{n^{3/2}p_\star^2} \; \times \notag \\
        &\quad \sbr{p_\star \sqrt{2\log(2mk/\delta)}(k+1) + (k-1)(k+4)},
        \label{eq:t1t2_on_e}
    \end{align}
    Combining together both~\eqref{eq:t1t2_on_e} and~\eqref{eq:t1t2_on_ec} and using that $ \sbr{\log_2(2/\delta) + 2m\log(n+1)} \geq 1$, we have that
    \begin{align*}
        &\quad \E{P}{T_1T_2 \ind_{\S}} \\
        &\leq \frac{2mk^2\norm{h}^2_\infty \sqrt{2\log(2/\delta)} \sbr{\log_2(2/\delta) + 2m\log(n+1)}^{1 - \ind\br{k=1}/2}}{p_\star^2} \;\times \notag \\
        &\quad \sbr{\frac{p_\star \sqrt{2\log(2mk/\delta)}(k+1) + (k-1)(k+4)}{n^{3/2}} + 6\p{4np_\star^2 + (k-1)\p{n+2 + \frac{k+1}{p_\star^2}}} \delta},
    \end{align*}
    the result as desired.
\end{proof}

We now combine the previous results to prove \Cref{thm:mse}.

\begin{restatable}{theorem}{mse}\label{thm:mse}
    For a sequence of rebalanced distributions $(\tilde{\pn}\pow{k})_{k \geq 1}$, there exists an absolute constant $C > 0$ such that when $n \geq C[\log_2(2n/p_\star) + m \log{(n+1)}] / p_\star^2$,
    \begin{align}
        \Ex_P[(\tilde{\pn}^{(k)}(h) - P(h))^2]
        &\le \frac{\sigma_k^2}{n} + \frac{CB}{n^{3/2}},
        \label{eqn:mse}
    \end{align}
    where
    \begin{align*}
         B &= \frac{\sqrt{\log{(2n/p_\star)}}m^2 k^4\norm{h}_\infty^2}{p_\star^2}
         \ \p{\log_2\frac{2n}{p_\star} + m\log{(n+1)}}^{2-\ind\br{k}} 
         \ \p{\log\frac{2mkn}{p_\star} + \frac{(k-1)^2}{p_\star^2}}.
    \end{align*}
\end{restatable}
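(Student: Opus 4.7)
The plan is to leverage the recursion machinery of \Cref{prop:recursion} together with the marginal-control bound of \Cref{prop:marginals} and the higher-order bounds in Lemmas~\ref{lem:main:squared_term} and~\ref{lem:main:cross_term}, which have already been proven above. First I would split on the good event: write
\begin{align*}
\Ex_P[(\tilde{\pn}\pow{k}(h) - P(h))^2]
= \Ex_P[(\pn(h) - P(h))^2 \ind_{\S^c}] + \Ex_P[(\pn\pow{k}(h) - P(h))^2 \ind_{\S}].
\end{align*}
The first term is handled by \Cref{prop:good_event_prob} with the choice $\delta = 1/n$: because $(1-p_\star)^n$ decays exponentially while our assumed lower bound on $n$ forces $p_\star$ not too small, this contribution is dominated by the target $O(n^{-3/2})$ rate. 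What remains is the main event $\S$, on which the balancing iterations are well-defined and the recursion can be exploited.

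On $\S$, apply the unrolled recursion~\eqref{eq:recursion_error} to decompose $\pn\pow{k}(h) - P(h) = T_1 + T_2$ with
\begin{align*}
T_1 = [\pn\pow{0} - P](\C_1 \cdots \C_k h), \qquad T_2 = \sum_{\ell=1}^{k} \vn\pow{\ell-1}(\C_\ell \cdots \C_k h).
\end{align*}
Expanding $(T_1 + T_2)^2 = T_1^2 + 2 T_1 T_2 + T_2^2$, the first term is exactly the variance of an i.i.d.\ empirical average of $\C_1 \cdots \C_k h$ over $n$ samples, so $\Ex_P[T_1^2] = \sigma_k^2/n$ by the definition~\eqref{eq:variance_k}. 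The remaining two terms are controlled by \Cref{lem:main:squared_term} and \Cref{lem:main:cross_term}, both of which are applied with $\delta = 1/n$.

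The key technical step, and what I would regard as the main obstacle, is the control of $T_2$. A crude bound on $\px/\pnx\pow{\ell-1} - 1$ loses too many factors of $n$, so the argument must use \Cref{prop:marginals} in two regimes simultaneously: a deterministic gross bound valid always on $\S$, and a refined $O(n^{-1/2})$ bound via Pinsker and the monotonicity of Sinkhorn iterations (\Cref{prop:nutz2021}) that holds once $\kl(\pnx \Vert \px) \le p_\star^2/2$, which by \Cref{thm:cover1999} occurs outside an event of probability $\delta$ whenever $n \ge 2[\log_2(2/\delta) + m \log(n+1)]/p_\star^2$. The hypothesis $n \ge C[\log_2(2n/p_\star) + m\log(n+1)]/p_\star^2$ with $\delta = 1/n$ is precisely what activates the refined regime. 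Moreover, to bound the inner processes $\gn\pow{\ell-1}(\C_\ell \cdots \C_k h \, \one_{j\ell})$, one applies the recursion of \Cref{lem:recursion_emp} once more to reduce them to $\gn\pow{0}$ terms plus deterministic corrections, whose supremum over the $mk$ choices of $(j, \ell)$ is bounded by Hoeffding plus a union bound, yielding the $\sqrt{\log(2mk/\delta)}$ factors that appear in $B$. \Cref{eq:inf_norm_bound} provides the $\ell_\infty$ bound $\|\C_\ell \cdots \C_k h\|_\infty \le 2(k-\ell+1)\|h\|_\infty$ needed to apply Hoeffding, and the $k$-dependence in $B$ comes from summing these contributions over $\ell = 1, \ldots, k$.

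Finally, I would collect all pieces: on $\S$, the cross term $2 \Ex_P[T_1 T_2 \ind_\S]$ from \Cref{lem:main:cross_term} is $O(B/n^{3/2})$, and $\Ex_P[T_2^2 \ind_\S]$ from \Cref{lem:main:squared_term} is even smaller, at $O(B/n^2)$ plus a negligible bad-event contribution absorbed by $\delta = 1/n$. Combined with the $\sigma_k^2/n$ leading term and the exponentially small $\S^c$ contribution, this yields~\eqref{eqn:mse} for an absolute constant $C > 0$, where the precise form of $B$ reflects the choice $\delta = 1/n$ in the expressions from Lemmas~\ref{lem:main:squared_term}--\ref{lem:main:cross_term}.
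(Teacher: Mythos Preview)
Your overall architecture matches the paper's proof exactly: split on the event $\S$, handle $\S^c$ via \Cref{prop:good_event_prob}, unroll the recursion on $\S$ into $T_1 + T_2$, identify $\Ex_P[T_1^2\ind_\S] \le \sigma_k^2/n$, and control the cross and squared terms via Lemmas~\ref{lem:main:cross_term} and~\ref{lem:main:squared_term}. Your description of the two-regime argument (gross bound versus the refined Pinsker/\Cref{prop:nutz2021} bound, activated by \Cref{thm:cover1999}) is also right.

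The gap is your choice $\delta = 1/n$. Inspect the bad-event contributions in the two lemmas: \Cref{lem:main:squared_term} contains a term of order $n^2\delta$ (from $(4n + \cdots)^2\delta$), and \Cref{lem:main:cross_term} contains a term of order $n\delta$ (from $6(4np_\star^2 + \cdots)\delta$), each multiplied by polylogarithmic prefactors. With $\delta = 1/n$ these become $O(n)$ and $O(1)$ respectively, which are far larger than the target $O(n^{-3/2})$; the bad-event pieces are \emph{not} ``negligible'' at this scale. The paper instead sets $\delta = p_\star^4/n^4$, so that $n^2\delta$ and $n\delta$ are $O(n^{-2})$ and $O(n^{-3})$ and are absorbed into the remainder. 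This is also why the constant $B$ in the statement carries factors like $\sqrt{\log(2n/p_\star)}$ and $\log(2mkn/p_\star)$ rather than $\sqrt{\log(2n)}$ and $\log(2mkn)$: the $p_\star$ appears precisely because $\log(2/\delta) \asymp \log(n/p_\star)$ under the paper's choice, not under yours. Finally, the sample-size hypothesis must also be checked against this smaller $\delta$ (hence the $\log_2(2n/p_\star)$ in the condition), and the same $\delta$ is used in \Cref{prop:good_event_prob} to verify $m(1-p_\star)^n \le \delta$.
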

\begin{proof}
    We apply the decomposition~\eqref{eq:main:decomposition0}, and subsequently handle the second term using bounds on the terms in~\eqref{eq:main:decomposition1}. Set $\delta = p_\star^4 / n^4$. We apply \Cref{lem:main:squared_term} and \Cref{lem:main:cross_term} with this choice of $\delta$, so that there exists an absolute constants $\tilde{C}$, $C_1$, and $C_2$ such that
    \begin{align*}
        \E{P}{T_1T_2 \ind_{\S}} &\leq C_1 \frac{\norm{h}_\infty^2 m^2 k^3 \sqrt{\log(2n/p_\star)}}{n^{3/2} p_\star^2} \sbr{\log_2(2n/p_\star) + m\log(n+1)}^{1 - \ind\br{k=1}/2}\; \times \\
        &\quad \p{\log\frac{2mnk}{p_\star} + \frac{k-1}{p_\star^2}}\\
        \E{P}{T_2^2 \ind_{\S}} &\leq C_2 \frac{\norm{h}_\infty^2 m^2 k^4}{n^2 p_\star^2} \sbr{\log_2(2n/p_\star) + m\log(n+1)}^{2 - \ind\br{k=1}}\; \times \\
        &\quad \p{\log\frac{2mnk}{p_\star} + \frac{(k-1)^2}{p_\star^2}},
    \end{align*}
    when $n \geq \tilde{C}[\log_2(2n/p_\star) + m \log{(n+1)}] / p_\star^2$. This then implies that there is an absolute constant $C_3$ such that
    \begin{align*}
        &\quad \E{P}{\p{\tilde{\pn}\pow{k}(h) - P(h)}^2} \\
        &\leq \E{P}{\p{\pn\pow{0}(h) - P(h)}^2 \ind_{\S^c}} + \frac{\sigma_k^2}{n}\; + \\
        &\quad \frac{C_3\norm{h}_\infty^2 m^2 k^4 \sqrt{\log(2n/p_\star)}}{n^{3/2} p_\star^2} \sbr{\log_2{\frac{2n}{p_\star}} + m\log(n+1)}^{2 - \ind\br{k=1}} \p{\log\frac{2mnk}{p_\star} + \frac{(k-1)^2}{p_\star^2}}.
    \end{align*}
    Next, we apply \Cref{prop:good_event_prob} with the same choice of $\delta$. Because $2[\log_2(2/\delta) + m\log(n+1)] \geq \log(m/\delta)$ and $-\log(1-p_\star) \geq p_\star \geq p_\star^2$, we have that $n \geq \log(\delta/m) / \log(1-p_\star)$, which implies that $m(1-p_\star)^n \leq \delta$. Combining with the display above, we have that there exists an absolute constant $C > 0$ such that
    \begin{align*}
        \E{P}{\p{\tilde{\pn}\pow{k}(h) - P(h)}^2} &\leq \frac{\sigma_k^2}{n} + \frac{C\norm{h}_\infty^2 m^2 k^4 \sqrt{\log(2n/p_\star)}}{n^{3/2} p_\star^2} \\
        &\times \sbr{\log_2(2/\delta) + m\log(n+1)}^{2 - \ind\br{k=1}} \p{\log\frac{2mnk}{p_\star} + \frac{(k-1)^2}{p_\star^2}},
    \end{align*}
    which is the claimed result.
\end{proof}

While not shown in the main text, similar techniques to those used above can also control the bias of $\tilde{\pn}^{(k)}(h)$ as in \Cref{thm:bias}. Interestingly, this bias is of order $O(n^{-2})$ which confirms the intuition that even thought $\tilde{\pn}^{(k)}(h)$ may be biased, the dominant term is the variance.

\begin{restatable}{theorem}{bias}\label{thm:bias}
    For a sequence of rebalanced distributions $(P\pow{k})_{k \geq 1}$, there exists an absolute constant $C > 0$ such that when $n \geq C[\log_2(2n/p_\star) + \blue{m} \log{(n+1)}] / \red{p_\star^2}$,
    \begin{align}
        \abs{\Ex_P[\tilde{\pn}^{(k)}(h) - P(h)]}^2 &\le \frac{CB}{n^{2}},
        \label{eqn:bias}
    \end{align}
    where $B$ is as defined in \Cref{thm:mse}.
\end{restatable}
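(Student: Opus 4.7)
The plan is to leverage the recursive decomposition~\eqref{eq:recursion_error} together with the squared-term control of \Cref{lem:main:squared_term}, exploiting the crucial fact that the first-order term of the expansion has mean zero.

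I would first split the bias along the event $\mc{S}$:
\begin{align*}
    \E{P}{\tilde{\pn}\pow{k}(h) - P(h)} = \E{P}{[\pn\pow{k}(h) - P(h)]\ind_{\mc{S}}} + \E{P}{[\pn\pow{0}(h) - P(h)]\ind_{\mc{S}^c}}.
\end{align*}
Under $\mc{S}$, the recursion~\eqref{eq:recursion_error} gives $\pn\pow{k}(h) - P(h) = T_1 + T_2$ with $T_1 = [\pn\pow{0} - P](\C_1\cdots\C_k h)$ and $T_2 = \sum_{\ell=1}^k \vn\pow{\ell-1}(\C_\ell\cdots\C_k h)$, so the bias splits into three pieces: $\E{P}{T_1\ind_{\mc{S}}}$, $\E{P}{T_2\ind_{\mc{S}}}$, and the $\ind_{\mc{S}^c}$ tail.

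The key observation is that $T_1$ is a centered i.i.d.~average with $\E{P}{T_1}=0$. Hence $\E{P}{T_1\ind_{\mc{S}}} = -\E{P}{T_1\ind_{\mc{S}^c}}$, and by Cauchy-Schwarz together with \Cref{prop:good_event_prob},
\begin{align*}
    \abs{\E{P}{T_1\ind_{\mc{S}^c}}}^2 \le \E{P}{T_1^2}\cdot P(\mc{S}^c) \le \frac{\sigma_k^2}{n}\cdot 2m(1-p_\star)^n,
\end{align*}
which is exponentially small in $n$ under the sample-size hypothesis and therefore negligible against $B/n^2$. The same argument applied to $\pn\pow{0}(h)-P(h)$ disposes of the $\ind_{\mc{S}^c}$ tail via \Cref{prop:good_event_prob}. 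The dominant piece is $\E{P}{T_2\ind_{\mc{S}}}$, for which Cauchy-Schwarz gives $\abs{\E{P}{T_2\ind_{\mc{S}}}}^2 \le \E{P}{T_2^2\ind_{\mc{S}}}$; invoking \Cref{lem:main:squared_term} with the same choice $\delta = p_\star^4/n^4$ used in the proof of \Cref{thm:mse} bounds this by $O(B/n^2)$. Assembling the three pieces via $(a+b+c)^2 \le 3(a^2+b^2+c^2)$ yields the claim.

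The main subtlety---and the reason the bound scales as $n^{-2}$ rather than $n^{-3/2}$---is that naively applying Cauchy-Schwarz to the MSE bound of \Cref{thm:mse} would only give $|\text{bias}|^2 \le \text{MSE} = O(B/n^{3/2})$. The improvement requires recognizing that the $\sigma_k^2/n$ contribution to the MSE comes \emph{entirely} from the mean-zero first-order term $T_1$, so up to exponentially small boundary effects on $\mc{S}^c$ it drops out of the bias entirely; only the higher-order remainder $T_2$, whose second moment is already $O(B/n^2)$ by \Cref{lem:main:squared_term}, contributes meaningfully. No new machinery is needed beyond the moment bounds already assembled for \Cref{thm:mse}.
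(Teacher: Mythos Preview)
Your proposal is correct and follows essentially the same route as the paper: split on $\mc{S}$, use that the first-order term $T_1=[\pn\pow{0}-P](\C_1\cdots\C_k h)$ has mean zero so only its $\ind_{\mc{S}^c}$ contribution survives, and show that the higher-order remainder $T_2$ is the dominant piece governing the bias.

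The one genuine, if minor, difference is in how each argument controls the three pieces. For $T_1\ind_{\mc{S}^c}$, the paper decomposes on the Hoeffding event $\calE_3^\delta$ and uses the universal bound $|\gn\pow{0}(h_{1,k})|\le 4k\sqrt{n}\|h\|_\infty$ on the bad event; you instead apply Cauchy--Schwarz against $P(\mc{S}^c)$, which is cleaner and gives the same exponentially small contribution. For $T_2$, the paper reopens the proof of \Cref{lem:main:squared_term} and re-derives the bounds on $|T_2|$ at the first-moment level under $\calE_1^\delta\cap\calE_2^\delta$; you instead apply $|\Ex[T_2\ind_{\mc{S}}]|^2\le\Ex[T_2^2\ind_{\mc{S}}]$ and invoke the lemma as a black box. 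Your route is more economical and loses nothing in the constants, since the $\Ex[T_2^2\ind_{\mc{S}}]$ bound from \Cref{lem:main:squared_term} with $\delta=p_\star^4/n^4$ is already $O(B/n^2)$ (indeed slightly smaller, lacking the extra $\sqrt{\log(2n/p_\star)}$ factor present in $B$).
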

\begin{proof}
    First, apply the decomposition~\eqref{eq:main:decomposition0} so that
    \begin{align*}
        \abs{\E{P}{\tilde{\pn}\pow{k}(h) - P(h)}} \leq \abs{\E{P}{\p{\pn(h) - P(h)} \one_{\S^c}}} + \abs{\E{P}{\p{\pn\pow{k}(h) - P(h)} \ind_{\S}}}.
    \end{align*}
    By using the argument of \Cref{prop:good_event_prob}, we have that
     \begin{align*}
        \abs{\E{P}{\pn(h) - P(h)} \one_{\S^c}} \leq 2 \norm{h}_\infty \min\br{2m(1-p_\star)^n, \delta} + \sqrt{\frac{2\log(2/\delta)}{n}} \norm{h}_\infty 2m(1-p_\star)^n.
    \end{align*}
    Then, by the recursion formula \Cref{eq:recursion_error}, we have that
    \begin{align*}
        &\quad \sqrt{n} \abs{\E{P}{\p{\pn\pow{k}(h) - P(h)} \ind_{\S}}} \\
        &= \abs{\E{P}{\gn\pow{k}(h)  \ind_{\S}}} 
        = \abs{\E{P}{(1-\ind_{\S^c})\gn\pow{0}(\C_1 \ldots \C_k h) + \sqrt{n} \ind_{\S} \sum_{\ell=1}^{k} \vn\pow{\ell-1}(\C_{\ell} \ldots \C_{k} h)}}.
    \end{align*}
    Because $\gn\pow{0}(\C_1 \ldots \C_k h)$ has zero mean, it follows that
    \begin{align*}
        \sqrt{n} \abs{\E{P}{\p{\pn\pow{k}(h) - P(h)} \ind_{\S}}} &\leq \abs{\E{P}{\ind_{\S^c}\gn\pow{0}(\C_1 \ldots \C_k h)}} + \sqrt{n}\abs{\E{P}{\ind_{\S} T_2}}
    \end{align*}
    We have by Hoeffding's inequality that $\prob(\calE^\delta_3) \geq 1 - \delta$, and that by \Cref{eq:inf_norm_bound} that $\gn\pow{0}(\C_1 \ldots \C_k h) \leq 4k\sqrt{n}\norm{h}_\infty$ universally. As a result, applying \Cref{prop:good_event_prob} once again,
    \begin{align*}
        &\quad \abs{\E{P}{\ind_{\S^c}\gn\pow{0}(\C_1 \ldots \C_k h)}} \\
        &\leq \abs{\E{P}{\ind_{\S^c}\ind_{\calE^\delta_3}\gn\pow{0}(\C_1 \ldots \C_k h)}} + \abs{\E{P}{\ind_{\S^c}\ind_{\calE^\delta_3}\gn\pow{0}(\C_1 \ldots \C_k h)}}\\
        &\leq 4k\sqrt{n}\norm{h}_\infty \min\br{2m(1-p_\star)^n, \delta} + \sqrt{2\log(2/\delta)}2k\norm{h}_\infty 2m(1-p_\star)^n.
    \end{align*}
    Using a similar argument to \Cref{lem:main:squared_term}, we have that under $\S \backslash (\calE_1^\delta \cap \calE_2^\delta)$ (which occurs with probability no more than $2\delta$),
    \begin{align*}
        \abs{T_2} &\le \norm{h}_\infty mk \sbr{4n + \frac{k-1}{p_\star^2}\p{n+ 2 +\frac{k+1}{3p_\star^2}}},
    \end{align*}
    and that under $\S \cap \calE^\delta_1 \cap \calE^\delta_2$ (which occurs with probability at least $1 - 2\delta$),
    \begin{align*}
        \abs{T_2} &\leq \frac{4mk\norm{h}_\infty \sbr{\log_2(2/\delta) + 2m\log(n+1)}^{1 - \ind\br{k=1}/2}}{np_\star^2} \\ 
        &\quad \sbr{p_\star \sqrt{2\log(2mk/\delta)}(k+1) + (k-1)(k+4)}.
    \end{align*}
    Applying the decomposition $\abs{\E{P}{\ind_{\S} T_2}} \leq \abs{\E{P}{\ind_{\S \backslash (\calE_1^\delta \cap \calE_2^\delta)} T_2}} + \abs{\E{P}{\ind_{\S \cap \calE^\delta_1 \cap \calE^\delta_2} T_2}}$ and setting $\delta = \frac{p_\star^2}{n^2}$ achieves the desired result.
\end{proof}

\subsection{Misspecified Marginal Distributions}\label{sec:a:statistical:misspecified}
We now adapt the main results to cases in which the marginal distributions $(\px, \py)$ are misspecified, in that the user is provided marginal distributions $(\tpx, \tpy)$ which satisfy the following structure.
\begin{assumption}\label{asm:misspecified}
    There exist fixed probability mass functions $\hpx$ and $\hpy$ for some $\epsilon \in [0, 1)$,
    \begin{align*}
        \tpx = (1- \epsilon)\px + \epsilon \hpx \text{ and } \tpy = (1- \epsilon)\py + \epsilon \hpy.
    \end{align*}
    We also have the existence of the positive quantity
    \begin{align*}
        \hps := \min\{\min_{x} \hpx(x), \min_{y} \hpy(y)\} > 0.
    \end{align*}
\end{assumption}
Given the existence of $\hps > 0$, we may also define
\begin{align*}
    \tps = \min\{\min_{x} \tpx(x), \min_{y} \tpy(y)\} \geq \epsilon \hps + (1-\epsilon) p_\star > 0.
\end{align*}
To be precise, the iterations of balancing follow $\tpn\pow{0} = \pn$ and
\begin{align}
    \tpn\pow{k}(x, y) := \begin{cases}
       \tfrac{\tpx(x)}{\tpnx\pow{k-1}(x)} \cdot \tpn\pow{k-1}(x, y) & \text{ $k$ odd}\\
        \tfrac{\tpy(y)}{\tpny\pow{k-1}(y)} \cdot \tpn\pow{k-1}(x, y) & \text{ $k$ even}
    \end{cases}.
    \label{eq:misspecified}
\end{align}
We start by deriving a result similar to \Cref{prop:recursion}.
Since $\epsilon < 1$, the (possibly misspecified) target marginals $\tpx(x) > 0$ and $\tpy(y) > 0$ for all $x \in \X$ and $y \in \Y$.
Define
\begin{align}
    \tvn\pow{k-1}(h) &:= 
    \begin{cases}
        \sum_{x, y} \p{\frac{\tpx}{\tpnx\pow{k-1}}(x) - 1} h(x, y) \tpn\pow{k-1}(x, y) & \text{ $k$ odd}\\
        \sum_{x, y} \p{\frac{\tpy}{\tpny\pow{k-1}}(y) - 1} h(x, y) \tpn\pow{k-1}(x, y) & \text{ $k$ even}
    \end{cases}
\end{align}
and
\begin{align*}
    \tgn\pow{k}(h) := \sqrt{n}\p{\tpn\pow{k}(h) - P(h)}.
\end{align*}

The format of this section will be to derive results analogous to the building blocks of the previous section. From that point, the computations from \Cref{sec:a:statistical:main} will achieve the desired result. For the sake of comparison to \Cref{thm:mse_main} we consider error terms containing $\epsilon$ only by their dependence on $(\epsilon, k, n, \tps)$. 

\subsubsection{Intermediate Results}
\begin{proposition}\label{prop:mis:recursion}
    Let $(\tpn\pow{k})_{k \geq 1}$ be a sequence computed according to~\eqref{eq:misspecified}. Define
    \begin{align*}
        c^2 = \max\br{\chi^2(\hpx \Vert \px), \chi^2(\hpy \Vert \py)}.
    \end{align*}
    These iterations are well-defined under the event $\mc{S}$, and for $\gn\pow{k}$ defined in~\eqref{eq:emp}, it holds that
    \begin{align}
        \tgn\pow{k}(h) &= \tgn\pow{k-1}(h) + \sqrt{n}\tvn\pow{k-1}(h)\label{eq:mis:uncentered}
    \end{align}
    and
    \begin{align}
        \tgn\pow{k}(h) &= \tgn\pow{k-1}(\C_k h) + \sqrt{n}\tvn\pow{k-1}(\C_k h) +
        \begin{cases}
            \sqrt{n}[\tpx - \px](\M_X h) & \mbox{if $k$ odd} \\
            \sqrt{n}[\tpy - \py](\M_Y h) & \mbox{if $k$ even}
        \end{cases}.
        \label{eq:mis:centered}
    \end{align}
    Furthermore,
    \begin{align}
        \abs{\tgn\pow{k}(h)} &\leq \abs{\tgn\pow{k-1}(\C_k h)} + \sqrt{n}\abs{\tvn\pow{k-1}(\C_k h)} + \orange{c \norm{h}_{\ltwo(P)}\sqrt{n\epsilon}} \notag\\
        &= \abs{\tgn\pow{k-1}(\C_k h)} + \sqrt{n}\abs{\tvn\pow{k-1}(\C_k h)} + \orange{O\p{\sqrt{n\epsilon}}}. \label{eq:misspecified:recursion_emp}
    \end{align}
\end{proposition}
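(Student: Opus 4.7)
The plan is to mirror the proof of \Cref{prop:recursion}, modifying each step to account for the misspecified target marginals $(\tpx,\tpy)$ in place of $(\px,\py)$. Well-definedness of $(\tpn\pow{k})_{k\geq 1}$ under $\mc{S}$ follows by the identical induction argument, since \Cref{asm:misspecified} with $\epsilon < 1$ guarantees $\tpx(x),\tpy(y) > 0$ everywhere. For the uncentered recursion~\eqref{eq:mis:uncentered}, I would argue by direct algebra: for $k$ odd, decompose the reweighting factor as $\tpx(x)/\tpnx\pow{k-1}(x) = 1 + (\tpx(x)/\tpnx\pow{k-1}(x) - 1)$, which immediately yields $\tpn\pow{k}(h) = \tpn\pow{k-1}(h) + \tvn\pow{k-1}(h)$; subtracting $P(h)$ and scaling by $\sqrt{n}$ gives the claim, and the $k$ even case is symmetric.

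The centered recursion~\eqref{eq:mis:centered} is where a new term appears relative to \Cref{prop:recursion}. Decomposing $h = \C_k h + \M_k h$ and applying~\eqref{eq:mis:uncentered} to $\C_k h$ accounts for the first two terms on the right-hand side. The residual piece $[\tpn\pow{k} - P](\M_k h)$ does \emph{not} vanish here: for $k$ odd, since $\M_X h$ depends only on $x$ and $\tpn\pow{k}$ marginalizes to $\tpx$ by construction, we have $\tpn\pow{k}(\M_X h) = \tpx(\M_X h)$, while the tower property gives $P(\M_X h) = \px(\M_X h)$, so the residual equals $[\tpx - \px](\M_X h)$. Multiplying by $\sqrt{n}$ produces the extra additive defect; the $k$ even case is analogous with $(\tpy, \py, \M_Y)$ in place of $(\tpx, \px, \M_X)$.

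To establish the bound~\eqref{eq:misspecified:recursion_emp}, I would use \Cref{asm:misspecified} to write $\tpx - \px = \epsilon(\hpx - \px)$, then apply Cauchy--Schwarz in $\ltwo(\px)$ via $|[\hpx - \px](f)| = |\sum_x \px(x)(\hpx/\px - 1)(x) f(x)| \leq \sqrt{\chi^2(\hpx \Vert \px)}\, \|f\|_{\ltwo(\px)}$. Taking $f = \M_X h$ and using that $\M_X$ is an orthogonal projection in $\ltwo(P)$, so that $\|\M_X h\|_{\ltwo(\px)} \leq \|h\|_{\ltwo(P)}$, yields $|[\tpx - \px](\M_X h)| \leq \epsilon\, c\, \|h\|_{\ltwo(P)}$; multiplying by $\sqrt{n}$ and noting that $\epsilon \leq \sqrt{\epsilon}$ for $\epsilon \in [0,1)$ gives the claimed $c\, \|h\|_{\ltwo(P)}\, \sqrt{n\epsilon}$. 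I anticipate no serious obstacle --- the argument is essentially a bookkeeping extension of \Cref{prop:recursion} --- with the only subtlety being the recognition that the natural norm pairing in the defect is precisely $\sqrt{\chi^2(\hpx \Vert \px)}$, which is what motivates defining $c$ as the maximum of the two $\chi^2$-divergences so that both parities of $k$ are handled uniformly.
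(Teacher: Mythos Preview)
Your proposal is correct and follows essentially the same approach as the paper: well-definedness by the induction from \Cref{prop:recursion}, the uncentered recursion by the $1 + (\cdot - 1)$ split, the centered recursion by decomposing $h = \C_k h + \M_k h$ and identifying the residual $[\tpn\pow{k} - P](\M_k h) = [\tpx - \px](\M_X h)$, and the bound via Cauchy--Schwarz together with $\|\M_X h\|_{\ltwo(\px)} \le \|h\|_{\ltwo(P)}$. The only minor deviation is in extracting the $\sqrt{\epsilon}$: you factor $\tpx - \px = \epsilon(\hpx - \px)$ first, obtain the sharper bound $\epsilon\, c\, \|h\|_{\ltwo(P)}$, and then loosen via $\epsilon \le \sqrt{\epsilon}$, whereas the paper applies Cauchy--Schwarz directly to $\tpx - \px$ and then invokes convexity of the $\chi^2$-divergence to get $\chi^2(\tpx \Vert \px) \le \epsilon\, \chi^2(\hpx \Vert \px)$, arriving at $\sqrt{\epsilon}\, c\, \|h\|_{\ltwo(P)}$ in one step.
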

\begin{proof}
    The proof that $\tpnx\pow{k-1}(x) > 0$ and $\tpny\pow{k-1}(y) > 0$ for all $x \in \X$ and $y \in \Y$ follows the exact same steps as in the proof of \Cref{prop:recursion}. We take this for granted and establish the recursion.

    Consider the following steps in the case that $k$ is odd:
    \begin{align*}
        \tpn\pow{k}(h) 
        &= \sum_{x, y} h(x, y) \tpn\pow{k}(x, y) = \sum_{x, y} h(x, y) \blue{\frac{\tpx}{\tpnx\pow{k-1}}(x)} \tpn\pow{k-1}(x, y)\\
        &= \sum_{x, y} \blue{1} \cdot h(x, y) \tpn\pow{k-1}(x, y) + \sum_{x, y} \blue{\sbr{\frac{\tpx}{\tpnx\pow{k-1}}(x) - 1}} \cdot h(x, y) \tpn\pow{k-1}(x, y)\\
        &= \tpn\pow{k-1}(h) + \tvn\pow{k-1}(h).
    \end{align*}
    Subtracting $P(h)$ on both sides, we have that
    \begin{align}
        [\tpn\pow{k} - P](h) &= [\tpn\pow{k-1} - P](h) +  \tvn\pow{k-1}(h).
    \end{align}
    This proves the uncentered recursion formula given in~\eqref{eq:mis:uncentered}.
    We then show the centered version.
    \begin{align*}
        &[\tpn\pow{k} - P](h)\\
        &= [\tpn\pow{k} - P](\C_X h) + [\tpn\pow{k} - P](\M_X h)\\
        &= [\tpn\pow{k} - P](\C_X h) + \blue{[\tpx - \px](\M_X h)}\\
        &= [\tpn\pow{k-1} - P](\C_X h) +  \tvn\pow{k-1}(\C_X h) + \blue{[\tpx - \px](\M_X h)}.
    \end{align*}
    Next, we bound the additional error term. 
    By the Cauchy-Schwarz inequality,
    \begin{align*}
        [\tpx - \px](\M_X h) &\leq \norm{\tfrac{\tpx}{\px} - \ones}_{\ltwo(P_X)} \cdot \norm{\M_X h}_{\ltwo(P_X)}\\
        &= \sqrt{\chi^2( \tpx\Vert \px)} \cdot \norm{\M_X h}_{\ltwo(P_X)}\\
        &\leq \sqrt{\chi^2(\tpx \Vert \px)} \cdot \norm{h}_{\ltwo(P)},
    \end{align*}
    as $\M_X$ is an orthogonal projection in $\ltwo(P)$. Using convexity of $f$-divergences, we have that
    \begin{align*}
        \chi^2(\tpx \Vert \px) \leq \epsilon \chi^2(\hpx \Vert \px) + (1-\epsilon)\chi^2(\px \Vert \px) = \epsilon \chi^2(\hpx \Vert \px).
    \end{align*}
    This achieves the desired result.
\end{proof}

Using similar ideas, we then prove an analog of \Cref{lem:recursion_emp}.
\begin{lemma}\label{lem:mis:recursion_emp}
    For $k$ odd, it holds that
    \begin{align*}
        \sqrt{n} \tvn\pow{k-1}(\C_X h) = \sum_{j=1}^m  \p{\frac{\tpx}{\tpnx\pow{k-1}}(x_j) - 1} \tgn\pow{k-1}(\C_X h \ones_{jk}),
    \end{align*}
    whereas for $k$ even, it holds that
    \begin{align*}
        \sqrt{n} \tvn\pow{k-1}(\C_Y h) = \sum_{j=1}^m  \p{\frac{\tpy}{\tpny\pow{k-1}}(x_j) - 1} \tgn\pow{k-1}(\C_Y h \ones_{jk}).
    \end{align*}
\end{lemma}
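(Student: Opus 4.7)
The plan is to mirror the proof of \Cref{lem:recursion_emp} almost verbatim, the only substitution being that the empirical measure and its iterates are the misspecified ones $\tpn\pow{k-1}$ and that the reweighting factor now involves $\tpx$ (or $\tpy$) rather than $\px$ (or $\py$). The crucial observation is that the vanishing identity $P(\C_X h \ones_{jk}) = 0$ used in \Cref{lem:recursion_emp} is a statement about the debiasing operator $\C_X$ relative to the \emph{true} joint distribution $P$, and does not involve the target marginals at all. Hence it survives the transition to the misspecified setting with no modification.

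Concretely, I would focus on the case $k$ odd (the even case is symmetric). First I would unfold the definition of $\tvn\pow{k-1}(\C_X h)$ as a double sum over $(x,y)$, then peel off the $x$-variable by writing it as a sum over the atoms $x_j$ and introducing the indicator $\ones_{jk}(x,y) = \ind\{x = x_j\}$. The factor $\frac{\tpx(x)}{\tpnx\pow{k-1}(x)} - 1$ evaluated at $x = x_j$ can be pulled outside of the inner sum over $y$, which then collapses to $\tpn\pow{k-1}(\C_X h \ones_{jk})$. This yields the identity
\begin{align*}
    \tvn\pow{k-1}(\C_X h) = \sum_{j=1}^m \p{\frac{\tpx(x_j)}{\tpnx\pow{k-1}(x_j)} - 1} \tpn\pow{k-1}(\C_X h \ones_{jk}).
\end{align*}

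To convert each $\tpn\pow{k-1}(\C_X h \ones_{jk})$ into the empirical-process quantity $\tgn\pow{k-1}(\C_X h \ones_{jk}) / \sqrt{n}$, I would use the definition $\tgn\pow{k-1}(f) = \sqrt{n}(\tpn\pow{k-1}(f) - P(f))$ and verify that $P(\C_X h \ones_{jk}) = 0$. The latter follows from the tower property: conditioning on $X$, the indicator $\ones_{jk}$ is $\sigma(X)$-measurable, so $\E{P}{\C_X h \cdot \ones_{jk}} = \E{P}{\ones_{jk} \cdot \E{P}{\C_X h \mid X}} = 0$ since $\E{P}{\C_X h \mid X} \equiv 0$ by definition of $\C_X$. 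Multiplying through by $\sqrt{n}$ delivers the claimed identity.

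There is no real obstacle here, as the computation is structurally identical to the well-specified case; the substantive work has already been done in \Cref{prop:mis:recursion}, which provides the uncentered recursion in the misspecified setting. The lemma simply records the consequence of combining that recursion with the $P$-centering of $\C_X h \ones_{jk}$, and thus should follow by direct substitution.
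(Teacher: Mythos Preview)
Your proposal is correct and follows essentially the same route as the paper: unfold the definition of $\tvn\pow{k-1}(\C_X h)$, split over the atoms $x_j$ via the indicators $\ones_{jk}$, pull the scalar factor out, and then invoke $P(\C_X h\ones_{jk})=0$ (proved via the tower property exactly as you describe) to rewrite $\tpn\pow{k-1}(\C_X h\ones_{jk})$ as $\tgn\pow{k-1}(\C_X h\ones_{jk})/\sqrt{n}$. The only minor remark is that the argument is self-contained from the definition of $\tvn\pow{k-1}$ and does not actually require the recursion of \Cref{prop:mis:recursion}; your reference to it in the closing paragraph is unnecessary but harmless.
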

\begin{proof}
    We give the proof for $k$ odd.
    We claim that we need only show that $P(\C_X h \one_{jk}) = 0$.
    This would show that
    \begin{align*}
        \sqrt{n} \tvn\pow{k-1}(\C_X h) &= \sum_{x, y} \p{\frac{\tpx}{\tpnx\pow{k-1}}(x) - 1} [\C_X h](x, y) \tpn\pow{k-1}(x, y)\\
        &= \sqrt{n}\sum_{j=1}^m \sum_{x, y} \p{\frac{\tpx}{\tpnx\pow{k-1}}(x) - 1} [\C_X h \ones_{jk}](x, y) \tpn\pow{k-1}(x, y)\\
        &= \sqrt{n}\sum_{j=1}^m \sum_{x, y} \p{\frac{\tpx}{\tpnx\pow{k-1}}(x_j) - 1} [\C_X h \ones_{jk}](x, y) \tpn\pow{k-1}(x, y)\\
        &= \sum_{j=1}^m  \p{\frac{\tpx}{\tpnx\pow{k-1}}(x_j) - 1} \sqrt{n}\sum_{x, y}[\C_X h \ones_{jk}](x, y) \tpn\pow{k-1}(x, y)\\
        &= \sum_{j=1}^m  \p{\frac{\tpx}{\tpnx\pow{k-1}}(x_j) - 1} \tgn\pow{k-1}(\C_X h \ones_{jk}),
    \end{align*}
    where $P(\C_X h \one_{jk}) = 0$ is employed in the last step.
    Now the result follows from~\eqref{eq:mis:centered} in \Cref{prop:mis:recursion} and the definition of $\tgn\pow{k}(h)$.
    To prove the claim, as in \Cref{lem:recursion_emp}, write
    \begin{align*}
        \E{}{\C_X h \one_{jk} | X}(x) = \begin{cases}
            \E{}{\C_X h | X}(x_j) &\text{ if } x = x_j\\
            0 &\text{ if } x \neq x_j
        \end{cases}.
    \end{align*}
    But $\E{}{\C_X h | X}(x_j) = 0$ by definition of $\C_X$. Taking an expectation over $\px$ gives that $P(\C_X h \one_{jk}) = 0$, which implies the desired result. The proof for $k$ even follows symmetrically.
\end{proof}

For the remainder of the argument, we see that~\eqref{eq:misspecified:recursion_emp} can be unrolled so that 
\begin{align}
    \abs{\tgn\pow{k}(h)} \leq \underbrace{\abs{\gn\pow{0}(\C_1 \ldots \C_k h)}}_{\text{first-order term}} + \underbrace{\sqrt{n}\sum_{\ell=1}^{k} \abs{\tun\pow{\ell-1}(\C_{\ell} \ldots \C_{k} h)}}_{\text{higher-order term}} + \underbrace{O(k\sqrt{n\epsilon})}_{\text{misspecification}}, \label{eq:mis:unrolled}
\end{align}
where we use that $\gn\pow{0} = \tgn\pow{0}$. 

Next, we need to bound $\abs{\tun\pow{\ell-1}(\C_{\ell} \ldots \C_{k} h)}$, in particular accounting for the marginal violation term. We follow similar steps as in the analysis of the higher-order term in \Cref{sec:a:statistical:higher}. 
\begin{proposition}\label{prop:mis:marginals}
    Assume that $\pnx(x) > 0$ for all $x \in \X$. It holds that
    \begin{align}\label{eq:mis:marginal_gross_bound}
        \max_{x \in \X} \abs{\frac{\tpx(x)}{\tpnx\pow{k-1}(x)} - 1}
        \le \begin{cases}
            \max\{n-1, 1\}  & \mbox{if } k = 1 \\
            \max\{1/\tps^2 - 1, 1\}  & \mbox{if } k > 1.
        \end{cases}
    \end{align}    
    In addition, we have that
    \begin{align*}
        \max_{x \in \X} \abs{\frac{\tpx(x)}{\tpnx\pow{k-1}(x)} - 1}
        \le
        \begin{cases}
            \orange{O\p{n\sqrt{\log{\tfrac{1}{1-\epsilon}}}}} + n\sqrt{\frac{1}{2}\kl(\pnx \Vert \px)}  & \mbox{if } k = 1 \\
            \orange{O\p{\tfrac{1}{\tps^2}\sqrt{\log{\tfrac{1}{1-\epsilon}}}}} + \frac{1}{\tps^2}\sqrt{\frac{1}{2}\kl(\pnx \Vert \px)}  & \mbox{if } k > 1
        \end{cases},
    \end{align*}
    Moreover, when $\kl(\pnx \Vert \px) \leq \frac{\tps^2}{8}$ and $\orange{\epsilon \leq 1 - \exp\p{-\tfrac{\tps^2}{8}}}$, we have
    \begin{align}\label{eq:mis:marginal_kl_small}
        \max_{x \in \X} \abs{\frac{\tpx(x)}{\tpnx\pow{k-1}(x)} - 1}
        \le \orange{O\p{\tfrac{1}{\tps}\sqrt{\log{\tfrac{1}{1-\epsilon}}}}} + \frac{2}{\tps} \sqrt{\frac{1}{2}\kl(\pnx \Vert \px)}.
    \end{align}
\end{proposition}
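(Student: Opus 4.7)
The plan mirrors the proof of \Cref{prop:marginals}, adding one new step to absorb the misspecification into the bound.

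First, I would establish uniform lower bounds on $\tpnx\pow{k-1}(x)$. The case $k$ even is trivial, since $\tpnx\pow{k-1} = \tpx$ exactly. For $k = 1$, under $\S$ we have $\tpnx\pow{0}(x) = \pnx(x) \geq 1/n$. For odd $k > 1$, an induction using $\tpy(y) \geq \tps$, $\tpny\pow{k-2}(y) \leq 1$, and $\tpnx\pow{k-2} = \tpx$ (since $k-2$ is odd) gives
\begin{align*}
    \tpnx\pow{k-1}(x) = \sum_y \tfrac{\tpy(y)}{\tpny\pow{k-2}(y)}\tpn\pow{k-2}(x, y) \geq \tps \cdot \tpx(x) \geq \tps^2.
\end{align*}
Combined with $\tpx(x) \leq 1$, these immediately yield~\eqref{eq:mis:marginal_gross_bound}.

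Second, I would control $\tv(\tpx, \tpnx\pow{k-1})$ via Pinsker's inequality followed by an analog of \Cref{prop:nutz2021} applied to the iterations~\eqref{eq:misspecified}:
\begin{align*}
    \tv(\tpx, \tpnx\pow{k-1}) \leq \sqrt{\tfrac{1}{2}\kl(\tpnx\pow{k-1} \Vert \tpx)} \leq \sqrt{\tfrac{1}{2}\kl(\pnx \Vert \tpx)}.
\end{align*}
The new step uses the pointwise lower bound $\tpx(x) \geq (1-\epsilon)\px(x)$ to write
\begin{align*}
    \kl(\pnx \Vert \tpx) = \sum_x \pnx(x) \log\tfrac{\pnx(x)}{\tpx(x)} \leq \sum_x \pnx(x) \log\tfrac{\pnx(x)}{(1-\epsilon)\px(x)} = \kl(\pnx \Vert \px) + \log\tfrac{1}{1-\epsilon}.
\end{align*}
Applying $\sqrt{a+b} \leq \sqrt{a} + \sqrt{b}$ and dividing by the lower bounds from the first step reproduces the middle inequality of the statement, with the misspecification term appearing exactly as $\sqrt{\tfrac{1}{2}\log(1/(1-\epsilon))}$ scaled by the same denominator ($n$ or $1/\tps^2$) that multiplies the KL-based term.

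Finally, for~\eqref{eq:mis:marginal_kl_small}, the hypotheses $\kl(\pnx \Vert \px) \leq \tps^2/8$ and $\epsilon \leq 1 - e^{-\tps^2/8}$ bound each square-root term by $\tps/4$, so $\max_x |\tpx(x) - \tpnx\pow{k-1}(x)| \leq \tps/2$. This forces $\min_x \tpnx\pow{k-1}(x) \geq \min_x \tpx(x) - \tps/2 \geq \tps/2$, and dividing the total-variation bound by $\tps/2$ gives the factor $2/\tps$. The one delicate point is verifying that \Cref{prop:nutz2021} transfers to the misspecified iterations; this should follow because the monotonicity argument of \citet{nutz2021introduction} only uses the alternating-information-projection structure onto $\{Q : Q_X = \tpx\}$ and $\{Q : Q_Y = \tpy\}$, which remain non-empty affine sets of fully supported measures whenever $\epsilon < 1$. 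All other steps are direct analogs of the proof of \Cref{prop:marginals}.
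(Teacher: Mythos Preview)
Your proposal is correct and follows essentially the same approach as the paper's proof: the same lower bounds on $\tpnx\pow{k-1}$, the same use of Pinsker plus \Cref{prop:nutz2021}, the same new inequality $\kl(\pnx \Vert \tpx) \le \kl(\pnx \Vert \px) + \log\tfrac{1}{1-\epsilon}$ via $\tpx \ge (1-\epsilon)\px$, and the same $\tps/2$ argument for the final bound. Your caveat about checking that \Cref{prop:nutz2021} transfers to the misspecified iterations is well taken; the paper simply invokes it directly, and your justification via the alternating-projection structure is exactly right.
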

\begin{proof}
    First, observe that $\tpnx\pow{0}(x) = \pnx\pow{0}(x) \geq 1 /n$ under the event $\S$. For $k > 1$ such that $k$ is odd, we have that for $x \in \X$,
    \begin{align*}
        \tpnx\pow{k-1}(x) &= \sum_{y \in \Y} \tpn\pow{k-1}(x, y) = \sum_{y \in \Y} \frac{\tpy(y)}{\tpny\pow{k-2}(y)}\tpn\pow{k-2}(x, y) \\
        &\geq \tps \sum_{y \in \Y}\tpn\pow{k-2}(x, y) = \tps \tpnx\pow{k-2}(x) = \tps \tpx(x) \geq \tps^2.
    \end{align*}
    The result for $k$ even can be proven similarly. We now prove the inequalities listed in the statement using on the lower bounds above.
    
    {\bf Proving the first inequality.}  For any $x \in \X$,
    \begin{align*}
        \abs{\frac{\tpx(x)}{\tpnx\pow{k-1}(x)} - 1} &= \max\br{\frac{\tpx(x)}{\tpnx\pow{k-1}(x)} - 1, 1 - \frac{\tpx(x)}{\tpnx\pow{k-1}(x)}}
        \leq \begin{cases}
            \max\{n-1, 1\}  & \mbox{if } k = 1 \\
            \max\{1/\tps^2 - 1, 1\}  & \mbox{if } k > 1
        \end{cases},
    \end{align*}
    which is the desired result.
    
    {\bf Proving the second and third inequalities.} Consider an odd $k \geq 1$. By the definition of total variation distance, it holds that
    \begin{align*}
        \max_{x \in \X} \abs{\tpx(x) - \tpnx\pow{k-1}(x)} \leq \tv(\tpnx\pow{k-1}, \tpx).
    \end{align*}
    According to Pinsker's inequality, we have that $\tv(\tpnx\pow{k-1}, \tpx) \leq \sqrt{\frac{1}{2}\kl(\tpnx\pow{k-1} \Vert \tpx)}$, and so we have that
    \begin{align*}
        \max_{x \in \X} \abs{\tpx(x) - \tpnx\pow{k-1}(x)} \leq  \sqrt{\frac{1}{2}\kl(\tpnx\pow{k-1} \Vert \tpx)} \leq \sqrt{\frac12\kl(\pnx\pow{0} \Vert \tpx)},
    \end{align*}
    where the last inequality follows by the monotonicity of Sinkhorn iterations given in \Cref{prop:nutz2021}. Notice that the remaining term is $\kl(\pnx\pow{0} \Vert \tpx) = \kl(\pnx \Vert \tpx)$, which may not decay to zero as $n \rightarrow \infty$. Because $\epsilon < 1$, write
    \begin{align*}
        \kl(\pnx \Vert \tpx)
        &= \sum_{x \in \X} \pnx(x) \log{\frac{\pnx(x)}{(1 - \epsilon)\px(x) + \epsilon \hpx(x)}} \\
        &\le \sum_{x \in \X} \pnx(x) \log{\frac{\pnx(x)}{(1 - \epsilon)\px(x)}} \\
        &= \kl(\pnx \Vert \px) + \log{\frac{1}{1-\epsilon}}\\
    \implies \sqrt{\frac12 \kl(\pnx \Vert \tpx)} &\leq \sqrt{\frac12\kl(\pnx \Vert \px)} + \sqrt{\frac12\log{\frac{1}{1-\epsilon}}}.
    \end{align*}
    We can then apply the lower bounds 
    \begin{align*}
        \max_{x \in \X} \abs{\frac{\tpx(x)}{\tpnx\pow{k-1}(x)} - 1}
        \le
        \begin{cases}
            n\p{\sqrt{\frac12\kl(\pnx \Vert \px)} + \sqrt{\frac12\log{\frac{1}{1-\epsilon}}}}  & \mbox{if } k = 1 \\
            \frac1{\tps^2} \p{\sqrt{\frac12\kl(\pnx \Vert \px)} + \sqrt{\frac12\log{\frac{1}{1-\epsilon}}}}  & \mbox{if } k > 1
        \end{cases}.
    \end{align*}

    Finally, combining the arguments above, we have that
    \begin{align*}
        \max_{x \in \X} \abs{\tpx(x) - \tpnx\pow{k-1}(x)} &\leq \sqrt{\frac12\kl(\pnx \Vert \px)} + \sqrt{\frac12\log{\frac{1}{1-\epsilon}}}\\
        &\leq \frac{\tps}{4} + \frac{\tps}{4} = \frac{\tps}{2},
    \end{align*}
    where the last step invoked the assumption that
    \begin{align*}
         \kl(\pnx \Vert \px) \leq \frac{\tps^2}{8} \quad \text{and} \quad \epsilon \leq 1 - \exp\p{-\tfrac{\tps^2}{8}}.
    \end{align*}
    
    This means that
    \begin{align*}
        \min_{x \in \X} \tpnx\pow{k-1}(x) \geq \min_{x \in \X} \tpx(x) - \max_{x \in \X}\abs{\tpnx\pow{k-1}(x) - \tpx(x)} \geq \frac{\tps}{2}.
    \end{align*}
    Hence,
    \begin{align*}
        \max_{x \in \X} \abs{\frac{\tpx(x)}{\tpnx\pow{k-1}(x)} - 1} \leq \frac{\max_{x \in \X}\abs{\tpnx\pow{k-1}(x) - \tpx(x)}}{\min_{x \in \X} \tpnx\pow{k-1}(x)} \leq \frac{2}{\tps} \sqrt{\frac12 \kl(\pnx \Vert \tpx)}. 
    \end{align*}
    Now, for $k$ even, set $k = 2t$ for $t \geq 0$. We have that
    \begin{align*}
        \max_{y \in \Y}\abs{\tpny\pow{2t-1}(y) - \tpy(y)} \leq \tv(\tpny\pow{2t-1}, \tpy) \leq \sqrt{\frac12 \kl(\tpy \Vert \tpny\pow{2t-1})}.
    \end{align*}
    Invoke \Cref{prop:nutz2021} once again to achieve
    \begin{align*}
        \sqrt{\frac12 \kl(\tpy \Vert \tpny\pow{2t-1})}
        \leq \sqrt{\frac12 \kl(\pnx \Vert \tpx)}
        \leq \sqrt{\frac12\kl(\pnx \Vert \px)} + \sqrt{\frac12\log{\frac{1}{1-\epsilon}}},
    \end{align*}
    which completes the proof.
\end{proof}

Proceeding with similar steps, define the quantities
\begin{align*}
    \tB_1 := \tM_1 
    \quad \text{ and } \quad 
    \tB_2 := \max_{2 \le \ell \le k} \tM_\ell 
    \quad \text{ for } \quad 
    \tM_\ell := \begin{cases}
        \max_{x \in \X} \abs{\frac{\tpx(x)}{\tpnx^{(\ell-1)}(x)} - 1}  & \text{ $\ell$ odd}\\
        \max_{y \in \Y} \abs{\frac{\tpy(y)}{\tpny^{(\ell-1)}(y)} - 1}  & \text{ $\ell$ even}\\
    \end{cases}.
\end{align*}

We must now establish an analog of \Cref{prop:remainder_sum}.
\begin{proposition}\label{prop:mis:remainder_sum}
    For any $k \ge 1$, the following holds under the event $\S$:
    \begin{align*}
        \sqrt{n}\sum_{\ell=1}^{k} \abs{\tun\pow{\ell-1}(\C_{\ell} \dots \C_{k} h)}
        &\le  \sum_{j=1}^m \p{\tB_1 \abs{\gn\pow{0}(h_{1, k} \one_{j\ell})} + \tB_2 \sum_{\ell=2}^k \abs{\gn\pow{0}(h_{\ell, k} \one_{j\ell})}}\\
        &\quad + m \tB_2 \norm{h}_\infty \sqrt{n} k(k-1) [\tB_1 + \tB_2(k+1)/3].
    \end{align*}
\end{proposition}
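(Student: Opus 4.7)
The argument will mirror that of \Cref{prop:remainder_sum} step-by-step, with every piece tilded appropriately. The key observation is that the \emph{uncentered} recursion \eqref{eq:mis:uncentered} from \Cref{prop:mis:recursion} has exactly the same algebraic form as \eqref{eq:recursion_emp_uncentered}; the misspecification residual from \eqref{eq:mis:centered} only appears in the \emph{centered} expansion, which is not invoked in this lemma. Moreover, $\tgn\pow{0} = \gn\pow{0}$ because both reduce to $\sqrt{n}(\pn - P)$, so the $\gn\pow{0}$ appearing on the right-hand side of the claim is the natural base of the unrolling.

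First, I will combine \Cref{lem:mis:recursion_emp} with the elementary inequality $\abs{\tfrac{\tpx(x_j)}{\tpnx\pow{\ell-1}(x_j)} - 1} \le \tM_\ell$ (and its symmetric form for $\ell$ even) to bound, for any $\ell \in \{1, \ldots, k\}$,
\begin{align*}
    \sqrt{n}\,\abs{\tun\pow{\ell-1}(h_{\ell, k})} \le \tM_\ell \sum_{j=1}^m \abs{\tgn\pow{\ell-1}(h_{\ell, k} \ones_{j\ell})}.
\end{align*}
Next, applying \eqref{eq:mis:uncentered} iteratively to the test function $h_{\ell,k} \ones_{j\ell}$ yields
\begin{align*}
    \abs{\tgn\pow{\ell-1}(h_{\ell, k}\ones_{j\ell})} \le \abs{\gn\pow{0}(h_{\ell, k}\ones_{j\ell})} + \sqrt{n}\sum_{r=0}^{\ell - 2} \abs{\tun\pow{r}(h_{\ell,k} \ones_{j\ell})}.
\end{align*}
Each residual $\abs{\tun\pow{r}(h_{\ell,k}\ones_{j\ell})}$ is controlled by $\tM_{r+1}\, \tpn\pow{r}(\abs{h_{\ell,k}}\ones_{j\ell}) \le \tM_{r+1} \norm{h_{\ell,k}}_\infty$ directly from the definition of $\tun\pow{r}$. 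Using $\tM_1 \le \tB_1$ and $\tM_r \le \tB_2$ for $r \ge 2$, together with the uniform estimate $\norm{h_{\ell,k}}_\infty \le 2(k-\ell+1)\norm{h}_\infty$ from \Cref{eq:inf_norm_bound}, reproduces the $\tB_1 + \tB_2(\ell-1)$ factor and the $(k-\ell+1)$ factor seen in the well-specified case.

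The final step is to sum the resulting bound over $\ell = 1, \ldots, k$ and separate the $\ell = 1$ contribution (which only contributes $\tB_1$) from the $\ell \ge 2$ contributions (bounded by $\tB_2$); this gives precisely the two-piece form in the statement, with the closed-form identity $\sum_{\ell=2}^k [\tB_1 + \tB_2(\ell-1)](k-\ell+1) = \tfrac{k(k-1)}{2}[\tB_1 + \tB_2(k+1)/3]$ producing the polynomial factor. The calculation is mechanically identical to the proof of \Cref{prop:remainder_sum}, so no real obstacle arises here: the whole purpose of stating this lemma in this form is to encapsulate the higher-order expansion \emph{before} the misspecification error from \eqref{eq:mis:centered} is accounted for, deferring the $O(k\sqrt{n\epsilon})$ correction to the subsequent unrolling \eqref{eq:mis:unrolled} used in the misspecified analogue of \Cref{thm:mse_main}.
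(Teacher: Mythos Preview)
Your proposal is correct and follows essentially the same route as the paper's proof: invoke \Cref{lem:mis:recursion_emp} to bound $\sqrt{n}\,|\tun\pow{\ell-1}(h_{\ell,k})|$ by $\tM_\ell \sum_j |\tgn\pow{\ell-1}(h_{\ell,k}\ones_{j\ell})|$, unroll each $\tgn\pow{\ell-1}$ via the uncentered recursion \eqref{eq:mis:uncentered} (which, as you correctly emphasize, carries no misspecification residual), bound the intermediate $|\tun\pow{r}|$ terms via $\tM_{r+1}\|h_{\ell,k}\|_\infty$ and \Cref{eq:inf_norm_bound}, and finish with the same combinatorial sum. Your observation that $\tgn\pow{0}=\gn\pow{0}$ and that the $O(k\sqrt{n\epsilon})$ correction is deliberately deferred to the outer expansion \eqref{eq:mis:unrolled} is exactly the point of this lemma.
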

\begin{proof}
    This proof largely follows the argument of \Cref{prop:remainder_sum}, while accounting for the misspecified marginal error. 
    Using again the notation $h_{\ell, k} := \C_{\ell} \ldots \C_{k} h$, it follows from \Cref{lem:mis:recursion_emp} that, for odd $\ell$,
    \begin{align*}
        \sqrt{n} \tun\pow{\ell-1}(h_{\ell, k}) = \sum_{j=1}^m \sbr{\frac{\tpx}{\tpnx\pow{\ell-1}}(x_j) - 1} \tgn\pow{\ell-1}(h_{\ell, k} \one_{j\ell}) \leq \tM_\ell \sum_{j=1}^m \abs{\tgn\pow{\ell-1}(h_{\ell, k} \ones_{j\ell})}.
    \end{align*}
    The bound above holds for $\ell$ even as well.
    Then, using~\eqref{eq:mis:uncentered} from \Cref{prop:mis:recursion} along with the triangle inequality, we have that for $\ell \geq 2$,
    \begin{align*}
        \abs{[\tpn\pow{\ell-1} - P](h_{\ell, k} \one_{j\ell})}
        &\leq \abs{\tpn\pow{\ell-2} - P](h_{\ell, k} \one_{j\ell})} + \abs{\tun\pow{\ell-2}(h_{\ell, k} \one_{j\ell})}
    \end{align*}
    which implies that
    \begin{align}
       &\abs{\tgn\pow{\ell-1}(h_{\ell, k} \one_{j\ell})} \\
       &\leq \abs{\tgn\pow{\ell-2}(h_{\ell, k} \one_{j\ell})} + \sqrt{n}\abs{\tun\pow{\ell-2}(h_{\ell, k} \one_{j\ell})} \notag\\
        &\leq \abs{\tgn\pow{0}(h_{\ell, k} \one_{j\ell})} + \sqrt{n}\abs{\tun\pow{0}(h_{\ell, k} \one_{j\ell})} + \ldots + \sqrt{n}\abs{\tun\pow{\ell-2}(h_{\ell, k} \one_{j\ell})} \notag\\
        &\leq \abs{\tgn\pow{0}(h_{\ell, k} \one_{j\ell})} + \tM_1\sqrt{n}\tpn\pow{0}(\abs{h_{\ell, k}} \one_{j\ell}) + \ldots + \tM_\ell \sqrt{n}\tpn\pow{\ell-2}(\abs{h_{\ell, k}} \one_{j\ell})   \notag\\
        &\leq \abs{\tgn\pow{0}(h_{\ell, k} \one_{j\ell})} + 2\norm{h}_\infty\sqrt{n}\sbr{\tB_1 + \tB_2(\ell-1)}(k - \ell + 1), \label{eq:mis:inner_sum}
    \end{align}
    by \Cref{eq:inf_norm_bound} and $\tM_1 \leq \tB_1$ and $\tM_\ell \leq \tB_2$ for $\ell \geq 2$. 
    The bound above holds trivially for $\ell = 1$.
    Summing these bounds over $\ell$ and $j$, we have that
    \begin{align*}
        &\sqrt{n}\sum_{\ell = 1}^{k} \abs{\tun\pow{\ell-1}(h_{\ell, k})}\\ 
        &\leq \tM_1 \sum_{j=1}^m  \abs{\gn\pow{0}(h_{1, k} \one_{j\ell})} + \sum_{\ell = 2}^k \tM_\ell \sum_{j=1}^m  \abs{\tgn\pow{\ell-1}(h_{\ell, k} \one_{j\ell})}\\
        &\leq \tB_1 \sum_{j=1}^m  \abs{\gn\pow{0}(h_{1, k} \one_{j\ell})} + \tB_2 \sum_{\ell = 2}^k \sum_{j=1}^m  \abs{\tgn\pow{\ell-1}(h_{\ell, k} \one_{j\ell})}\\
        &\leq \tB_1 \sum_{j=1}^m  \abs{\gn\pow{0}(h_{1, k} \one_{j\ell})}\\
        &\quad + \tB_2 \sum_{\ell = 2}^k \sum_{j=1}^m  \p{\abs{\tgn\pow{0}(h_{\ell, k} \one_{j\ell})}  + 2\norm{h}_\infty \sqrt{n}\sbr{\tB_1 + \tB_2(\ell-1)}(k - \ell + 1)}\quad \text{apply~\eqref{eq:mis:inner_sum}}\\
        &= \sum_{j=1}^m \p{\tB_1 \abs{\gn\pow{0}(h_{1, k} \one_{j\ell})} + \tB_2 \sum_{\ell=2}^k \abs{\gn\pow{0}(h_{\ell, k} \one_{j\ell})}}  \\
        &\quad + 2m \tB_2 \norm{h}_\infty\sqrt{n}\sum_{\ell=2}^k \sbr{\tB_1 + \tB_2(\ell-1)}(k - \ell + 1),
    \end{align*}
    because $\abs{\X} = m$. We sum up the last term:
    \begin{align*}
        \sum_{\ell=2}^k \sbr{\tB_1 + \tB_2(\ell-1)}(k - \ell + 1) &= \tB_1\sum_{\ell=1}^{k-1} (k - \ell) + \tB_2\sum_{\ell=1}^{k-1} \ell (k - \ell)\\
        &= \frac{k(k-1)}{2}\sbr{\tB_1 + \tB_2(k+1)/3},
    \end{align*}
    which completes the proof.
\end{proof}

\subsubsection{Mean Squared Error Bound}
Ultimately, we wish to construct an upper bound for
\begin{align}
     \E{P}{\p{\tpn\pow{k}(h) - P(h)}^2 \ind_{\S}} + \E{P}{\p{\pn(h) - P(h)}^2 \ind_{\S^c}},
     \label{eq:mis:decomposition0}
\end{align}
as the method returns $\pn(h)$ when $\S$ is not satisfied. The first term will be controlled by intermediate tools developed above. The second term that includes $\S^c$ is no different from the one analyzed in \Cref{prop:good_event_prob}. We handle the second term first. Recall from \Cref{prop:good_event_prob} that for any $\delta \in (0, 1)$,
\begin{align}
    &\E{P}{\p{\pn(h) - P(h)}^2 \one_{\S^c}} \leq \notag\\
    &\quad 4 \norm{h}_\infty^2 \min\br{2m(1-p_\star)^n, \delta} + \frac{2\log(2/\delta)}{n} \norm{h}_\infty^2 2m(1-p_\star)^n.\label{eq:mis:good_event}
\end{align}
Repeat the argument from the proof of \Cref{thm:mse}: because $2[\log_2(2/\delta) + m\log(n+1)] \geq \log(m/\delta)$ and $-\log(1-p_\star) \geq p_\star \geq p_\star^2$, we have that 
\begin{align}
    n \geq 2[\log_2(2/\delta) + m \log{(n+1)}] / p_\star^2 \implies n \geq \log(\delta/m) / \log(1-p_\star). \label{eq:mis:n_cond}
\end{align}
This in turn implies that $m(1-p_\star)^n \leq \delta$, and gives as a condition on the sample size $n$. 
Further in the analysis, we will set $\delta = (\tps/n)^4$, so right-hand side of \eqref{eq:mis:good_event} can then be upper bounded further, resulting in
\begin{align*}
    \E{P}{\p{\pn(h) - P(h)}^2 \one_{\S^c}} \leq 4\norm{h}_\infty^2 \delta\p{2 + \frac{\log(2/\delta)}{n}} = \tilde{O}\p{\tfrac{\tps^4}{n^4}},
\end{align*}
a higher-order term compared to other components of the bound.

Next, we must control the left-hand side of~\eqref{eq:mis:decomposition0}. We perform the decomposition based on~\eqref{eq:mis:unrolled}:
\begin{align}
     &\E{P}{\p{\tpn\pow{k}(h) - P(h)}^2 \ind_{\S}} \notag \\ 
     &\leq \E{P}{T_1^2 \ind_{\S}} + 2 \E{P}{\abs{T_1 \tT_2} \ind_{\S}} + \E{P}{\tT_2^2 \ind_{\S}} \label{eq:mis:decomposition1a}\\
     &\quad + O(k \sqrt{\epsilon})\cdot \E{P}{\p{|T_1| + |\tT_2|}\ind_{\S}} + O(k^2 \epsilon) \label{eq:mis:decomposition1b}
\end{align}
for 
\begin{align}
    T_1 := [\pn - P](\C_1 \ldots \C_k h) \text{ and } \tT_2 := \sum_{\ell=1}^{k} \abs{\tun\pow{\ell-1}(\C_{\ell} \ldots \C_{k} h)}. \label{eq:mis:decomposition2}
\end{align}

Recall the events $\calE^\delta_1$ and $\calE^\delta_2$ and $\calE^\delta_3$ from \Cref{sec:a:statistical:main}.
To perform this computation efficiently, we will split the bounds on each term into two components. In particular, we will show that
\begin{itemize}
    \item Under the event $\S \cap \calE^\delta_1 \cap \calE^\delta_2: \abs{\tT_2} \leq \mc{T}_2 + E_2$,
    \item Under the event $\S \backslash (\calE_1^\delta \cap \calE_2^\delta): \abs{\tT_2} \leq  \mc{T}_2^c +  E_2^c$,
    \item Under the event $\S \cap \calE^\delta_3: \abs{T_1} \leq \mc{T}_1$,
    \item Under the event $\S \backslash  \calE^\delta_3: \abs{T_1} \leq \mc{T}_1^c$,
\end{itemize}
where any term denoted with ``$E$'' will represent all error terms that include $\epsilon$ and will be written in big-$O$ notation. There are no errors for the bounds on $T_1$, as this term does not depend on the misspecified marginals. The idea is that for the ``$\mc{T}_2$'' terms we may reuse the bounds derived in \Cref{sec:a:statistical:main} by simply replacing $p_\star$ with $\tps$. This is due to the fact that the dependence of the analogous terms from \Cref{sec:a:statistical:main} depend on $p_\star$ only through \Cref{prop:marginals}; similarly, the corresponding terms in this section depend on $\tps$ through \Cref{prop:mis:marginals}. We return to the terms in~\eqref{eq:mis:decomposition1a} and~\eqref{eq:mis:decomposition1b}.

Decomposing on $\mc{E}_3^\delta$ will result in a bound of the form
\begin{align*}
    O(k\sqrt{\epsilon}) \cdot \E{P}{|T_1| \ind_{\S}} \leq O(k\sqrt{\epsilon}) \cdot \p{\delta \mc{T}_1^c + \mc{T}_1}.
\end{align*}
Decomposing on $\calE^\delta_1 \cap \calE^\delta_2$ will result in a bounds of the form
\begin{align*}
    \E{P}{\tT_2^2  \ind_{\S}} &\leq 2\delta (\mc{T}_2^c)^2 + \mc{T}_2^2+ \tilde{O}\p{\delta \p{(E_2^c)^2 +  E_2^c\mc{T}_2^c} + \p{E_2^2 + E_2\mc{T}_2}}\\
    O(k\sqrt{\epsilon}) \cdot \E{P}{|\tT_2| \ind_{\S}} &\leq O(k\sqrt{\epsilon}) \cdot \p{\delta \p{\mc{T}_2^c + E_2^c} + \mc{T}_2 + E_2}.
\end{align*}
Finally, decomposing on $\calE^\delta_1 \cap \calE^\delta_2 \cap \mc{E}_3^\delta$ will result in a bound of the form
\begin{align*}
    \E{P}{\abs{T_1\tT_2}  \ind_{\S}} &\leq 3\delta \mc{T}_1^c\mc{T}_2^c + \mc{T}_1\mc{T}_2+ \tilde{O}\p{\delta \mc{T}_1^c E_2^c + \mc{T}_1 E_2}.
\end{align*}
The leading terms $2\delta (\mc{T}_2^c)^2 + \mc{T}_2^2$ and $3\delta \mc{T}_1^c\mc{T}_2^c + \mc{T}_1\mc{T}_2$ from both bounds should have the exact same form as the terms in \Cref{lem:main:squared_term} and \Cref{lem:main:cross_term}, with $p_\star$ replaced by $\tps$, thus retaining the same dependence on $(n, k)$. By setting $\delta = \tps^4 / n^4$, we will achieve a similar result to \Cref{thm:mse}, i.e., that
\begin{align}
    &\E{P}{\p{\tpn\pow{k}(h) - P(h)}^2 \ind_{\S}} \notag \\
    &\leq \frac{\sigma_k^2}{n} + \tilde{O}\p{\frac{k^6}{n^{3/2}}} \notag\\
    & + \orange{\tilde{O}\Big((\tps/n)^4 \p{E_2^c(E_2^c + \mc{T}_2^c)} + E_2\p{E_2 + \mc{T}_2} + (\tps/n)^4 \mc{T}_1^c E_2^c + \mc{T}_1 E_2\Big)}. \label{eq:mis:order1}\\
    &+ \orange{\tilde{O}\p{k\sqrt{\epsilon}\p{(\tps/n)^4 \mc{T}_1^c + \mc{T}_1 + (\tps/n)^4 \p{\mc{T}_2^c + E_2^c} + \mc{T}_2 + E_2} + k^2 \epsilon}} \label{eq:mis:order2}.
\end{align}

It remains to quantify the $\tilde{O}$ terms by computing the order of the 6 constants $(\mc{T}_2, E_2, \mc{T}_2^c, E_2^c, \mc{T}_1, \mc{T}_1^c)$. We follow similar steps to \Cref{lem:main:squared_term} and \Cref{lem:main:cross_term} to achieve this.
\begin{lemma}\label{lem:mis:T2} 
    For $\delta = (\tps/n)^4$, assume that  $n \geq 8[\log_2(2/\delta) + m\log(n+1)]/\tps^2$ and $\epsilon \leq 1 - \exp\p{-\tfrac{\tps^2}{8}}$.
    Then, it holds that
    \begin{align*}
        \mc{T}_2^c &= \tilde{O}\p{\tfrac{k^2}{\tps^2}\p{n+\tfrac{k}{\tps^2}}}, \quad E_2^c = 0\\
        \mc{T}_2 &= \tilde{O}\p{\frac{k^3}{n\tps^2}}, \quad E_2 = \tilde{O}\p{\tfrac{k^3}{\tps^2}\p{\sqrt{\tfrac{1}{n}\log\tfrac{1}{1-\epsilon}} + \log\tfrac{1}{1-\epsilon}}}.
    \end{align*}
\end{lemma}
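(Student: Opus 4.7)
The plan is to start from the master inequality in \Cref{prop:mis:remainder_sum}, observing that $\epsilon$ enters only through the marginal-violation multipliers $\tB_1,\tB_2$, while the empirical-process terms $\gn\pow{0}(\cdot)$ are $\epsilon$-free. I will therefore isolate the $\epsilon$-dependent versus $\epsilon$-free pieces of $\tB_1,\tB_2$ via \Cref{prop:mis:marginals}, then split the resulting products accordingly, dividing by $\sqrt n$ at the end to recover $|\tT_2|$.

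On the bad event $\S\setminus(\calE_1^\delta\cap\calE_2^\delta)$, I will apply the gross bounds \eqref{eq:mis:marginal_gross_bound}: $\tB_1\le n$ and $\tB_2\le 1/\tps^{2}$, together with the universal estimate $|\gn\pow{0}(h_{\ell,k}\one_{j\ell})|\le 4(k-\ell+1)\sqrt n\,\norm{h}_\infty$ from \Cref{eq:inf_norm_bound}. Mirroring the derivation in the proof of \Cref{lem:main:squared_term} but with $p_\star$ replaced by $\tps$, this yields
\[
|\tT_2|\le \norm{h}_\infty\, mk\Bigl[4n+\tfrac{k-1}{\tps^{2}}\bigl(n+2+\tfrac{k+1}{3\tps^{2}}\bigr)\Bigr],
\]
so $\mc{T}_2^c=\tilde O\p{\tfrac{k^{2}}{\tps^{2}}(n+k/\tps^{2})}$, and $E_2^c=0$ because the crude bound already subsumes every source of $\epsilon$-dependence.

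On the good event $\S\cap\calE_1^\delta\cap\calE_2^\delta$, the condition $n\ge 8[\log_2(2/\delta)+m\log(n+1)]/\tps^{2}$ together with $\calE_1^\delta$ forces $\kl(\pnx\Vert\px)\le \tps^{2}/8$, and $\epsilon\le 1-e^{-\tps^{2}/8}$ is assumed, so \eqref{eq:mis:marginal_kl_small} applies. I then write the additive split
\[
\tB_1,\tB_2\le \underbrace{\tfrac{2}{\tps}\sqrt{\tfrac12\kl(\pnx\Vert\px)}}_{=:A}+\underbrace{O\bigl(\tfrac{1}{\tps}\sqrt{\log\tfrac{1}{1-\epsilon}}\bigr)}_{=:B},
\]
where $\calE_1^\delta$ gives $A=\tilde O(1/(\tps\sqrt n))$ and $\calE_2^\delta$ together with Hoeffding gives $|\gn\pow{0}(h_{\ell,k}\one_{j\ell})|=\tilde O(k\norm{h}_\infty)$. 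Plugging these into \Cref{prop:mis:remainder_sum}, expanding the product $(A+B)(A+B)$, and dividing by $\sqrt n$, the pure-$A$ contribution gives $\mc{T}_2=\tilde O(k^{3}/(n\tps^{2}))$, while the pure-$B$ and mixed-$AB$ pieces aggregate into $E_2=\tilde O\bigl(\tfrac{k^{3}}{\tps^{2}}\bigl(\sqrt{\tfrac{1}{n}\log\tfrac{1}{1-\epsilon}}+\log\tfrac{1}{1-\epsilon}\bigr)\bigr)$.

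The main obstacle will be carefully propagating the additive split $\tB_i\le A+B$ through the quadratic piece $m\tB_2\sqrt n\,k(k-1)[\tB_1+\tB_2(k+1)/3]$ and the linear sum $\sum_j[\tB_1|\gn\pow{0}(h_{1,k}\one_{j\ell})|+\tB_2\sum_{\ell=2}^k|\gn\pow{0}(h_{\ell,k}\one_{j\ell})|]$, then regrouping after dividing by $\sqrt n$ so that (i) only the $A^{2}$ term produces the $1/n$ factor characterizing $\mc{T}_2$, and (ii) every cross or pure-$B$ term, including the lower-order $A/\sqrt n$ and $AB/\sqrt n$ pieces, is absorbed into $E_2$ with the stated $\epsilon$-rate. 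Tracking the polynomial-in-$m$ and logarithmic factors arising from $\calE_1^\delta$ and $\calE_2^\delta$ (all hidden in the $\tilde O$) is the bulk of the bookkeeping; beyond that, the argument is structurally identical to the proof of \Cref{lem:main:squared_term}.
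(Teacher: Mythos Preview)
Your proposal is correct and follows essentially the same approach as the paper's proof: both start from \Cref{prop:mis:remainder_sum}, split on the event $\calE_1^\delta\cap\calE_2^\delta$, use the gross bounds \eqref{eq:mis:marginal_gross_bound} on the bad event to get $\mc{T}_2^c$ with $E_2^c=0$, and on the good event invoke \eqref{eq:mis:marginal_kl_small} together with $\calE_2^\delta$ to separate the $\epsilon$-free piece (giving $\mc{T}_2$ by reusing the computation leading to \eqref{eq:t2_abs_bound}) from the $\epsilon$-dependent remainder $E_2$. Your justification that the assumption on $n$ combined with $\calE_1^\delta$ forces $\kl(\pnx\Vert\px)\le\tps^2/8$ is in fact slightly more explicit than the paper's presentation.
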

\begin{proof}
    The following computations are done under the event $\S$. First, apply \Cref{prop:mis:remainder_sum} to write 
    \begin{align}
        \sqrt{n}\abs{\tT_2} &\le \sum_{j=1}^m \p{\tB_1 \abs{\gn\pow{0}(h_{1, k} \ones_{j\ell})} + \tB_2 \sum_{\ell=2}^k \abs{\gn\pow{0}(h_{\ell, k} \ones_{j\ell})}} \notag \\
        &\quad + m \tB_2 \norm{h}_\infty k(k-1) [\tB_1 + \tB_2(k+1)/3].\label{eq:mis:t2}
    \end{align}
    We decompose on the event $\calE^\delta_1 \cap \calE^\delta_2$.
    \paragraph{Bound $\abs{T_2}$ under the event $\S \backslash (\calE_1^\delta \cap \calE_2^\delta)$.} In this case, we apply~\eqref{eq:mis:marginal_gross_bound} from \Cref{prop:mis:marginals} to get $\tB_1 \leq n$ and $\tB_2 \leq 1/\tps^2$, along with the universal bounds from \Cref{eq:inf_norm_bound}:
    \begin{align*}
        \frac{1}{\sqrt{n}}\abs{\gn\pow{0}(h_{1, k} \ones_{j\ell})} &\leq 2\norm{h_{1, k}}_\infty 
        \leq 4 k \norm{h}_\infty\\
        \frac{1}{\sqrt{n}}\sum_{\ell=2}^k \abs{\gn\pow{0}(h_{\ell, k} \ones_{j\ell})} &\leq 2\sum_{\ell=2}^k \norm{h_{\ell, k}}_{\infty} \leq \sum_{\ell=2}^k 4 (k-\ell + 1) \norm{h}_\infty = 2k(k-1)\norm{h}_\infty 
    \end{align*}
    so that by plugging into~\eqref{eq:mis:t2},
    \begin{align*}
        \abs{\tT_2} &\le  \underbrace{\norm{h}_\infty mk \sbr{4n + \frac{k-1}{\tps^2}\p{n+ 2 +\frac{k+1}{3\tps^2}}}}_{\mc{T}_2^c} + \underbrace{0}_{E_2^c}.
    \end{align*}
    \paragraph{Bound $\abs{T_2}$ under the event $\S \cap \calE^\delta_1 \cap \calE^\delta_2$.}
    In this case, we may use that $n \geq 8/\tps^2$ (because $[\log_2(2/\delta) +m\log(n+1)] \geq 1$ for $\delta \in (0, 1)$) and apply~\eqref{eq:mis:marginal_kl_small} from \Cref{prop:mis:marginals} to get 
    \begin{align*}
        \max\br{\tB_1, \tB_2} &\leq O\p{\tfrac{1}{\tps}\sqrt{\log\tfrac{1}{1-\epsilon}}} + \frac{2}{\tps} \sqrt{\frac{2 \log_2(2/\delta) + 2m\log(n+1)}{2n}}\\
    \end{align*}
    The bounds based on $\calE_2^\delta$ give
    \begin{align*}
        \abs{\gn\pow{0}(h_{1, k} \ones_{j\ell})} &\leq \sqrt{2\log{\frac{2mk}{\delta}}} 2k\norm{h}_\infty\\
        \sum_{\ell=2}^k \abs{\gn\pow{0}(h_{\ell, k} \ones_{j\ell})} &\leq\sum_{\ell=2}^k \sqrt{2\log{\frac{2mk}{\delta}}}2(k-\ell+1)\norm{h}_\infty \leq \sqrt{2\log{\frac{2mk}{\delta}}} k(k-1)\norm{h}_\infty.
    \end{align*}
    By plugging into~\eqref{eq:mis:t2}, we can reuse the steps in the bound from~\eqref{eq:t2_abs_bound} (for all terms without $\epsilon$) to write
    \begin{align*}
        \abs{\tT_2} &\leq \frac{4mk\norm{h}_\infty \sbr{\log_2(2/\delta) + 2m\log(n+1)}^{1 - \ind\br{k=1}/2}}{n \tps^2} \; \times \\
        &\quad \sbr{\tps \sqrt{2\log{(2mk/\delta)}}(k+1) + (k-1)(k+4)} + E_2,
    \end{align*}
    so that
    \begin{align*}
        \mc{T}_2 &= \frac{4mk\norm{h}_\infty \sbr{\log_2(2/\delta) + 2m\log(n+1)}^{1 - \ind\br{k=1}/2}}{n \tps^2} \\
        &\quad \times \sbr{\tps \sqrt{2\log{(2mk/\delta)}}(k+1) + (k-1)(k+4)}.
    \end{align*}
    We compute $E_2$ by using that
    \begin{align*}
        \max\br{\tB_1, \tB_2} &\leq O\p{\tfrac{1}{\tps}\sqrt{\log\tfrac{1}{1-\epsilon}}} + \tilde{O}\p{\tfrac{1}{\tps\sqrt{n}}}\\
        \abs{\gn\pow{0}(h_{1, k} \ones_{j\ell})} &\leq \tilde{O}\p{k}\\
        \sum_{\ell=2}^k \abs{\gn\pow{0}(h_{\ell, k} \ones_{j\ell})} &\leq \tilde{O}\p{k^2},
    \end{align*}
    which gives
    \begin{align*}
        E_2 = \tilde{O}\p{\tfrac{k^3}{\tps^2}\p{\sqrt{\tfrac{1}{n}\log\tfrac{1}{1-\epsilon}} + \log\tfrac{1}{1-\epsilon}}}.
    \end{align*}
\end{proof}

We now make the corresponding argument for the term $T_1$.
\begin{lemma}\label{lem:mis:T1} 
    For $\delta = (\tps/n)^4$, it holds that
    \begin{align*}
        \mc{T}_1^c &= \tilde{O}(k), \quad \mc{T}_1 = \tilde{O}\p{\frac{k}{\sqrt{n}}}.
    \end{align*}
\end{lemma}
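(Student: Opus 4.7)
The plan is to observe that $T_1 = [\pn - P](h_{1,k})$ is simply the empirical mean deviation of the centered function $h_{1,k} := \C_1 \ldots \C_k h$, so that $\sqrt{n}\, T_1 = \gn\pow{0}(h_{1,k})$. This reduces the lemma to a direct translation of the definition of the event $\mc{E}_3^\delta$, combined with a uniform $\ell_\infty$ bound on $h_{1,k}$ for its complement.

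First I would handle the ``off-event'' bound. Under $\S \setminus \mc{E}_3^\delta$, I would use the trivial bound $|T_1| \le \pn(|h_{1,k}|) + P(|h_{1,k}|) \le 2\norm{h_{1,k}}_\infty$. Applying \Cref{eq:inf_norm_bound} with $\ell = 1$ gives $\norm{h_{1,k}}_\infty \le 2k\norm{h}_\infty$, so $|T_1| \le 4k\norm{h}_\infty$, which is $\tilde{O}(k)$. This yields the claimed $\mc{T}_1^c$.

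Next I would handle the ``on-event'' bound. Under $\S \cap \mc{E}_3^\delta$, the definition of $\mc{E}_3^\delta$ directly gives
\[
    |T_1| = \frac{|\gn\pow{0}(h_{1,k})|}{\sqrt{n}} \le \frac{2k\norm{h}_\infty \sqrt{2\log(2/\delta)}}{\sqrt{n}}.
\]
Substituting $\delta = (\tps/n)^4$ yields $\log(2/\delta) = \log 2 + 4\log(n/\tps)$, which is polylogarithmic in $n$ and $1/\tps$. Hence $|T_1| = \tilde{O}(k/\sqrt{n})$, giving the claimed $\mc{T}_1$.

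There is no real obstacle here: the lemma is essentially a packaging of the definition of $\mc{E}_3^\delta$ together with the uniform bound from \Cref{eq:inf_norm_bound}, already established earlier in the section. Note that in contrast to the analogous bound for $\tT_2$ in \Cref{lem:mis:T2}, no $\epsilon$-dependent error term arises, because $T_1$ is built from the empirical measure $\pn$ alone and the centering operators $\C_X, \C_Y$ defined under $P$; the misspecified marginals $(\tpx,\tpy)$ play no role in $T_1$.
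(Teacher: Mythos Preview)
Your proposal is correct and follows essentially the same approach as the paper: bound $|T_1|$ by $4k\norm{h}_\infty$ on $\S \setminus \calE_3^\delta$ via \Cref{eq:inf_norm_bound}, and by $\sqrt{2\log(2/\delta)/n}\cdot 2k\norm{h}_\infty$ on $\S \cap \calE_3^\delta$ via the definition of $\calE_3^\delta$. Your closing remark that $T_1$ involves only $\pn$ and the $P$-based centering operators (hence no $\epsilon$-error) is a nice clarification not spelled out in the paper.
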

\begin{proof}
    The following computations are done under the event $\S$. 
    \paragraph{Bound $\abs{T_1}$ under the event $\S \backslash \calE_3^\delta$.} Here we simply apply a universal bound on the empirical process term:
    \begin{align*}
        \frac{1}{\sqrt{n}}\abs{\gn\pow{0}(h_{1, k})} &\leq 2\norm{h_{1, k}}_\infty \leq 4 k \norm{h}_\infty,
    \end{align*}
    so that $\mc{T}_1^c = 4 k \norm{h}_\infty$
    \paragraph{Bound $\abs{T_1}$ under the event $\S \cap \calE_3^\delta$.}
    Now, we may use the definition of the event $\calE_3^\delta$ to achieve
    \begin{align*}
        \frac{1}{\sqrt{n}}\abs{\gn\pow{0}(h_{1, k})} \leq \sqrt{\frac{2\log(2/\delta)}{n}}2k\norm{h}_\infty = \mc{T}_1.
    \end{align*}
\end{proof}

Knowing that $E_2^c = 0$, we simplify~\eqref{eq:mis:order1} and~\eqref{eq:mis:order1} to read
\begin{align*}
    &\tilde{O}\Big(E_2\p{E_2 + \mc{T}_2  + \mc{T}_1}\Big)\\
    &\tilde{O}\p{k\sqrt{\epsilon}\p{(\tps/n)^4 \mc{T}_1^c + \mc{T}_1 + (\tps/n)^4 \mc{T}_2^c + \mc{T}_2 + E_2} + k^2 \epsilon}.
\end{align*}
We now combine the bounds from the previous two lemmas to compute~\eqref{eq:mis:order1} and~\eqref{eq:mis:order2} to state the main result.
\begin{theorem}\label{thm:mis:mse}
    Let \Cref{asm:misspecified} be true with error $\epsilon \in [0, 1)$. For a sequence of rebalanced distributions $(\tpn\pow{k})_{k \geq 1}$, there exists an absolute constant $C > 0$ such that when $n \geq C[\log_2(2n/\tps) + m \log{(n+1)}] / \min\br{p_\star, \tps}^2$, we have that
    \begin{align*}
        &\E{P}{\p{\tpn\pow{k}(h) - P(h)}^2 \ind_{\S}} + \E{P}{\p{\pn(h) - P(h)}^2 \ind_{\S^c}} \leq \frac{\sigma_k^2}{n} + \tilde{O}\p{\frac{k^6}{n^{3/2}}}\\
        &+ \orange{\tilde{O}\p{\frac{k^4}{\tps^2}\p{\sqrt{\frac{1}{n} \log\frac{1}{1 - \epsilon}} + \log \frac{1}{1 - \epsilon}}\sbr{\frac{k^2}{\tps^2}\p{\sqrt{\frac{1}{n} \log\frac{1}{1 - \epsilon}} + \log \frac{1}{1 - \epsilon} + \frac{1}{n}} + \frac{1}{\sqrt{n}}}}}\\
        &+\orange{\tilde{O}\p{k^2 \sbr{\sqrt{\epsilon}\p{\frac{\tps^4}{n^4} + \frac{1}{\sqrt{n}} + \frac{\tps^2 k}{n^4} \p{n + \frac{k^2}{\tps^2}} + \frac{k^2}{\tps^2}\sbr{\frac{1}{n} + \sqrt{\frac{1}{n} \log\frac{1}{1 - \epsilon}} + \log \frac{1}{1 - \epsilon}}} + \epsilon}}}.
    \end{align*}
\end{theorem}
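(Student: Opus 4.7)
The plan is to follow the blueprint laid out immediately before the theorem statement: decompose the MSE into the event $\S$ and its complement, then control the $\S$-part via the unrolled recursion~\eqref{eq:mis:unrolled}, plugging in the six constants $(\mc{T}_1, \mc{T}_1^c, \mc{T}_2, \mc{T}_2^c, E_2, E_2^c)$ obtained in \Cref{lem:mis:T1,lem:mis:T2}. No new machinery is needed; the argument is essentially bookkeeping, with the only conceptually novel ingredient being the propagation of the $\epsilon$-dependent misspecification error coming from \Cref{prop:mis:recursion} through the squared and cross terms.

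Concretely, I would first set $\delta = (\tps/n)^4$. The complement term $\E{P}{(\pn(h) - P(h))^2 \ind_{\S^c}}$ is handled exactly as in the proof of \Cref{thm:mse}: under the sample-size assumption $n \geq C[\log_2(2n/\tps) + m\log(n+1)]/\min\{p_\star, \tps\}^2$, \Cref{prop:good_event_prob} together with the implication~\eqref{eq:mis:n_cond} yields $m(1-p_\star)^n \leq \delta$, so the whole term is $\tilde{O}(\tps^4 / n^4)$, absorbed into the higher-order remainder. For the $\S$-part, expand using \Cref{prop:mis:recursion} to get the three pieces in~\eqref{eq:mis:decomposition1a}--\eqref{eq:mis:decomposition1b}: the variance $\E{P}{T_1^2 \ind_{\S}}$ (which is at most $\sigma_k^2/n$ by orthogonality of the $\mc{C}$ operators, exactly as in the well-specified case), the cross term $\E{P}{|T_1 \tilde{T}_2| \ind_{\S}}$, the squared remainder $\E{P}{\tilde{T}_2^2 \ind_{\S}}$, and the two explicit misspecification contributions $O(k\sqrt{\epsilon}) \E{P}{(|T_1| + |\tilde{T}_2|) \ind_{\S}}$ and $O(k^2\epsilon)$.

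Next, I would decompose each of these expectations on $\calE_1^\delta \cap \calE_2^\delta \cap \calE_3^\delta$ (whose complement has probability at most $3\delta$ by the union bound, using \Cref{thm:cover1999} for $\calE_1^\delta$ and Hoeffding's inequality for $\calE_2^\delta, \calE_3^\delta$). On the complement, the bounds $\abs{T_1} \leq \mc{T}_1^c$ and $\abs{\tilde{T}_2} \leq \mc{T}_2^c + E_2^c = \mc{T}_2^c$ from \Cref{lem:mis:T1,lem:mis:T2} apply; on the good event, the sharper bounds $\abs{T_1} \leq \mc{T}_1$ and $\abs{\tilde{T}_2} \leq \mc{T}_2 + E_2$ apply. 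Combining, the squared and cross terms give $\mc{T}_2^2 + 2\mc{T}_1 \mc{T}_2 + $ (terms involving $\delta \mc{T}_i^c$ or $E_2$); the leading $\delta$-free and $\epsilon$-free part reproduces the bound of \Cref{thm:mse} with $p_\star$ replaced by $\tps$, i.e., $\tilde{O}(k^6/n^{3/2})$, while the $E_2$-dependent pieces and the $O(k\sqrt{\epsilon})$ terms produce the explicit $\epsilon$-dependent remainders written in the theorem statement after substituting the orders $\mc{T}_1 = \tilde{O}(k/\sqrt{n})$, $\mc{T}_1^c = \tilde{O}(k)$, $\mc{T}_2 = \tilde{O}(k^3/(n\tps^2))$, $\mc{T}_2^c = \tilde{O}(k^2(n + k/\tps^2)/\tps^2)$, and $E_2 = \tilde{O}(k^3 \tps^{-2}(\sqrt{n^{-1}\log(1/(1-\epsilon))} + \log(1/(1-\epsilon))))$.

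The main obstacle is purely organizational: verifying that after setting $\delta = (\tps/n)^4$, every term of the form $\delta \cdot (\text{polynomial in } n, k, \tps^{-1})$ coming from the ``bad event'' branch is indeed absorbed into the displayed remainder (this requires the polynomial factor on $\mc{T}_1^c \mc{T}_2^c$ to be at most $O(n^{C})$ for some fixed $C$, which holds since $\mc{T}_1^c \mc{T}_2^c = \tilde{O}(k^3 n / \tps^2 + k^5/\tps^4)$). Once that bookkeeping is done, adding the $\S^c$ contribution to the $\S$ contribution and grouping terms yields the three lines of the claimed bound: the variance $\sigma_k^2/n$, the well-specified-style remainder $\tilde{O}(k^6/n^{3/2})$, and the two $\epsilon$-dependent remainders. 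No genuinely new estimates are required beyond what \Cref{prop:mis:marginals,prop:mis:remainder_sum,lem:mis:T1,lem:mis:T2} already provide.
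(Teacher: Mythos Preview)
Your proposal is correct and follows essentially the same approach as the paper: decompose on $\S$ versus $\S^c$, control the $\S^c$-part via \Cref{prop:good_event_prob} with $\delta = (\tps/n)^4$, expand the $\S$-part using the unrolled recursion~\eqref{eq:mis:unrolled} into the pieces~\eqref{eq:mis:decomposition1a}--\eqref{eq:mis:decomposition1b}, then split on the high-probability events and plug in the six constants from \Cref{lem:mis:T1,lem:mis:T2}. The only cosmetic difference is that the paper keeps the event-splitting for $T_1$ (on $\calE_3^\delta$) and for $\tT_2$ (on $\calE_1^\delta \cap \calE_2^\delta$) separate rather than taking the single intersection $\calE_1^\delta \cap \calE_2^\delta \cap \calE_3^\delta$, but this makes no material difference to the bound.
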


\section{Experimental Details}\label{sec:a:experiments}
We provide the full experimental details of the experimental results from~\Cref{sec:experiments}. We report additional evaluations on downstream tasks with linear probing and zero-shot retrieval. Finally, we give illustrations of the sensitivity to misspecified marginals, and of the convergence to the given marginals. 

\subsection{Datasets}

\myparagraph{Pre-Training Data}
The pre-training data was taken from the public \href{https://github.com/mlfoundations/imagenet-captions}{ImageNet-Captions} dataset \citep{fang2022data}. We subset the dataset by selecting the 250 classes that were most frequent in the dataset, resulting in 174,594 images and associated Flickr captions. The exact images used and their associated captions are given in the code supplement.

\myparagraph{Evaluation Data}
We perform zero-shot classification (as described in \Cref{sec:experiments}), zero-shot retrieval, and linear probing with various image classification and image-caption datasets. We used the default class captions (for classification) and default linear probing parameters from the \href{https://github.com/LAION-AI/CLIP_benchmark}{CLIP Benchmark} repo. The datasets (test splits) used were:
\begin{itemize}
    \item {\bf CIFAR-10:} 10,000 colored natural images labeled with one of 10 classes.
    \item {\bf CIFAR-100:} 10,000 colored natural images labeled with one of 100 classes. 
    \item {\bf STL-10:} 80,000 colored natural images labeled with one of 10 classes.
    \item {\bf MS-COCO:} 41,000 colored natural images with associated captions.
    \item {\bf Flickr8k:} 8,000 colored natural images with associated captions.
    \item {\bf Rendered SST2:} 1,821 images of typed natural language with sentiment label (2 classes).
    \item {\bf VOC2007:} 4,952 colored natural images labeled with one of 20 classes.
    \item {\bf FGVC Aircraft:} 34,000 colored natural images labeled with one of 102 classes.
\end{itemize}
Evaluation scripts using the various embedding models (described below) are provided.

\subsection{Model Specification and Hyperparameters}

\myparagraph{Architecture and Implementation}
The models considered CLIP models \citep{radford2021learning}, and are specified by pairs of encoders $(f_\theta, g_\theta)$, representing images and text, respectively. The encoders decompose into $f_\theta = f_\theta^{\text{head}} \circ f_\theta^{\text{base}}$ (similarly for $g_\theta$) where $f_\theta^{\text{base}}$ denotes a base image encoder and $f_\theta^{\text{head}}$ denotes a trainable head model. The head models are feed-forward networks with two hidden layers, 256 hidden units, and 128-dimensional output representations. Their input dimensions may be 512 or 768, depending on whether a CLIP model or BERT/GPT-2 model is used as the base. For the image base/foundation models, we use the open-source \href{https://github.com/mlfoundations/open_clip}{OpenCLIP} implementation of the ViT-B/32 model with the 
\texttt{laion2b\_s34b\_b79k} 
model tag. For the text encoder, we use the encoder of the variant of the ViT-B/32 with tag
\texttt{datacomp\_xl\_s13b\_b90k}.
For the other text encoders the Huggingface implementations of \href{https://huggingface.co/docs/transformers/en/model_doc/gpt2}{GPT-2} and \href{https://huggingface.co/docs/transformers/en/model_doc/bert}{BERT} were used.

\myparagraph{Optimizer}
For optimization, models were trained with stochastic gradient descent (SGD) with the learning rate tuned along the grid $\br{1^{-3}, 3^{-3}, 1^{-2}, 3^{-2}, 1^{-1}}$ and a fixed weight decay parameter of 0.01. Momentum-variants such as Adam \citep{Kingma2015Adam} were not used to isolate the effect of varying losses as described in \Cref{sec:experiments}.

\subsection{Compute Environment}

Experiments were run on a CPU/GPU workstation with 12 virtual cores, 126G of memory, and four NVIDIA TITAN Xp GPUs with 12G memory each. The code was written in Python 3 and we use PyTorch for automatic differentiation. The \href{https://github.com/mlfoundations/open_clip}{OpenCLIP} and \href{https://github.com/LAION-AI/CLIP_benchmark}{CLIP Benchmark} repositories were used for zero-shot evaluation.

\subsection{CLIP and Multi-CLIP}
We considered in the contrastive learning example from~\Cref{sec:ssl} -- see ~\eqref{eq:clip_obj} in particular -- a variant of the CLIP objective in which either zero, or one, or more than one balancing iterations are performed (see~\eqref{eq:clip_obj}), via optimizing
\begin{align}
    L_n\pow{k} &= -\frac{1}{2} \sum_{i=1}^n \sbr{\log Q_{n}\pow{k}(X_i, Y_i) + \log R_{n}\pow{k}(X_i, Y_i)}.
    \label{eq:clip_balanced}
\end{align}
This contrasts the single-iteration variant $L_n\pow{1}$ which in fact reduces to the original CLIP loss. Because these iterations are applied in the objective, backpropagation occurs \emph{through} each iteration. 

In \Cref{fig:zero_shot}, we plot the zero-shot classification performance (in terms of average per-class recall) of the variants trained on $L_n\pow{0}$ (the normalized initial measure, \textit{No Balancing}), $L_n\pow{1}$ (the original CLIP loss, \textit{CLIP balancing}), and $L_n\pow{2}$ (the two-iteration CLIP loss, \textit{Multi-CLIP balancing}). We also vary the quality of the text encoder $f_{\theta_T}$, observing an overall accuracy trend of GPT-2 $\prec$ BERT $\prec$ CLIP across variants, which is to be expected given the base representation quality of each model. Interestingly, there is an improvement stemming from performing multiple balancing iterations across choices of the text embedding, the batch size $m$, and the evaluation dataset. 

\subsection{Metadata Curation}
We considered in the metadata curation example from~\Cref{sec:ssl} how to use balancing to adjust the entire pre-training set, in the spirit of~\citet{xu2024demystifying}.
The target marginal $\py$ is selected by choosing a threshold for which frequent keywords have their probability mass truncated, and the probability measure is normalized to sum to one. In \Cref{fig:metaclip}, we show the observed marginal $\pny$ and the target marginal $\py$ sorted in increasing order (left). The original marginal on $\Y$ has approximately $5$ orders of magnitude of difference between the most and least probable keyword. After balancing, the target marginal has less than $2$ orders of difference. To see how this affects downstream performance, we plot the zero-shot classification accuracy over training iterations in \Cref{fig:metaclip} (right) when using the original dataset (orange) and using the metadata-balanced dataset (blue). We observe moderate improvement especially in the small batch regime ($m = 512$) when curating the dataset.

\subsection{Additional Experiments}

In this section, we provide 1) a synthetic data example that helps elucidate the role of the spectral decomposition introduced in \Cref{sec:analysis}, and 2) additional evaluations on downstream tasks such as zero-shot retrieval and linear probing. For the latter, we maintain the experimental settings as used in the zero-shot classification example from \Cref{sec:experiments} (\Cref{fig:zero_shot}). That is, we train variants of CLIP models (see \Cref{sec:ssl}) on the ImageNet-Captions dataset \citep{fang2022data}. As before, we use a fixed image/text encoder as a base vector representation and compose it with a trainable feed-forward neural network, i.e., $f_\theta = f_\theta^{\text{head}} \circ f^{\text{base}}$, for $\theta = \theta_I$ (images) or $\theta = \theta_T$ (text). For the base text embeddings, we maintain three levels of model quality: GPT-2 \citep{radford2019language}, BERT \citep{devlin2019bert}, and CLIP-based encodings.

\begin{figure}[t]
    \centering
    \includegraphics[width=0.9\linewidth]{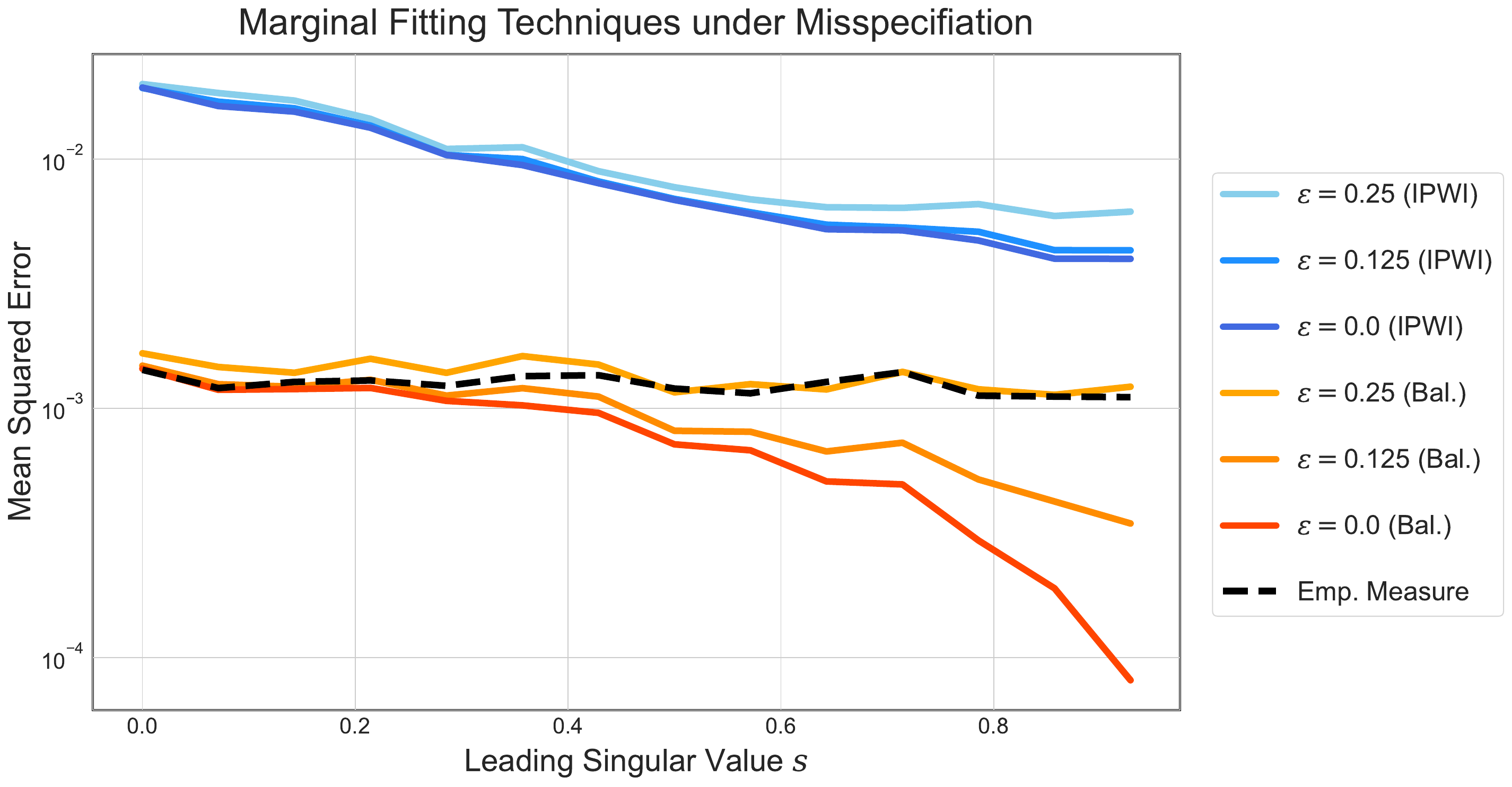}
    \caption{{\bf Baseline Comparisons across Dependence and Misspecification Levels.} Each line refers to a combination of an estimation method (the empirical probability measure $\pn$, the estimator $P^{\mathrm{IPWI}}_n$ from~\eqref{eq:ipwi}, or the balancing estimator $\pn\pow{k}$ for $k = 8$) and a noise level on the provided marginals (see~\eqref{eq:misspecification}). The $y$-axis shows the mean squared error of estimating a linear functional. The $x$-axis represents the dependence level $s = s_2$ (i.e.~the leading singular value other than $s_1 = 1$).
    }
    \label{fig:simulation}
\end{figure}

\myparagraph{Baseline Comparisons}
We present a synthetic data example to understand the role of the singular values $s_2, \ldots, s_m$ and compare our approach to simple baselines that make use of $(\px, \py)$. We also consider misspecification of these target marginals, in that they are chosen by the user but are not the marginal distributions of the data-generating distribution $P$. First, while one can verify by hand that~\eqref{eq:example1} is a distribution for which $s_2 = s$, we construct a more general example for $m \geq 2$. We leave the construction to the end of this example. For controllable misspecification, we define $\epsilon \in [0, 0.5]$ to be the \emph{misspecification} level, so that the corrupted target marginals are set to be
\begin{align}
    \tilde{P}_X := (1-\epsilon)\px + \epsilon \hat{P}_X \text{ and } \tilde{P}_Y := (1-\epsilon)\py + \epsilon \hat{P}_Y,
    \label{eq:misspecification}
\end{align}
where $\hat{P}_X$ and $\hat{P}_Y$ are drawn independently and randomly from the $\operatorname{Dirichlet}(\ones_m)$ distribution (i.e.~uniformly over the probability simplex on $m$ atoms). Finally, other than the empirical measure $P_n$, we define one additional baseline; the \emph{importance weighted independently (IPWI)} estimator is defined as 
\begin{align}
    P^{\mathrm{IPWI}}_n(x, y) = \frac{\tilde{P}_X(x)}{\pnx(x)}\frac{\tilde{P}_Y(y)}{\pny(y)}\pn(x, y).
    \label{eq:ipwi}
\end{align}
This estimator simply reweighs all cells of the empirical measure by the likelihood ratio from each observed marginal to the target marginal. Note that the result may not even be a probability measure, as it may not sum to one. 
Observe the comparative performance in (see \Cref{fig:simulation}).
We notice in particular that the naive $P^{\mathrm{IPWI}}_n$ is outperformed by empirical measure uniformly over $s$, as by applying both reweightings simultaneously, the estimator does not satisfy either marginal constraint. On the other hand, under the maximum amount of target marginal corruption ($\epsilon = 0.5$), the balancing-based estimator suffers an approximately half-order of magnitude in MSE. When $s \approx 1$, the MSE of the balancing estimator decreases significantly. We hypothesize that this is because the data sources $X$ and $Y$ are nearly a function of one another, and if this function is estimable to high precision by a small amount of data, then a single marginal can identify the entire joint distribution via pushforward calculations. That being said, it is important to note that the quantities $u_j$ and $v_j$ in~\eqref{eq:coordinates} also depend on $s$, so it is difficult to control the singular values without controlling the respective bases.

As for the construction of the probability mass function and test function, let $\mathbf{I}_m$ and $\ones_{m \times m}$ denote the identity matrix and matrix of ones in $\R^{m \times m}$. For any $s \in (0, 1)$ and $m \geq 1$, consider the probability mass matrix $P$ given by
\begin{align*}
    P = \frac{1}{m}\sbr{\frac{1}{m}\ones_{m \times m} + s\p{\mathbf{I}_m - \frac{1}{m}\ones_{m \times m}}}.
\end{align*}
The eigenvalues of the first matrix in the squared brackets are $(1, 0, \ldots, 0)$, as it is a rank $1$ matrix for which $\ones_m$ is an eigenvector. The second matrix in the square brackets is the centering matrix (the projection matrix that subtracts the mean of a vector's components from the entire vector). Multiplied by $s$, it has eigenvalues $(0, s, \ldots, s)$ where $0$ is associated to the eigenvector $\ones_m$. Thus, the matrix in its entirety has eigenvalues $\p{1/m, s/m, \ldots, s/m}$, where the scaling factor ensures that $P$ sums to one. The relation~\eqref{eq:svd1} holds for this choice of $P$ and uniform marginals, with $s_2 = \ldots = s_m = s$. Thus, by tuning $s$, we may control the level of dependence between $X$ and $Y$. Finally, because $\X$ and $\Y$ are finite, we can also specify the test function $h$ via an $m \times m$ table indexed by $i$ (meaning $x_i$) and $j$ (meaning $y_j$). We let $h(x_i, y_j) = \abs{Z_{ij}}$ 
where the $Z_{ij}$ are independently drawn from a standard normal distribution. The resulting mean squared error is estimated with 200 seeds at $n=300$.

\begin{figure}
    \centering
    \includegraphics[width=\linewidth]{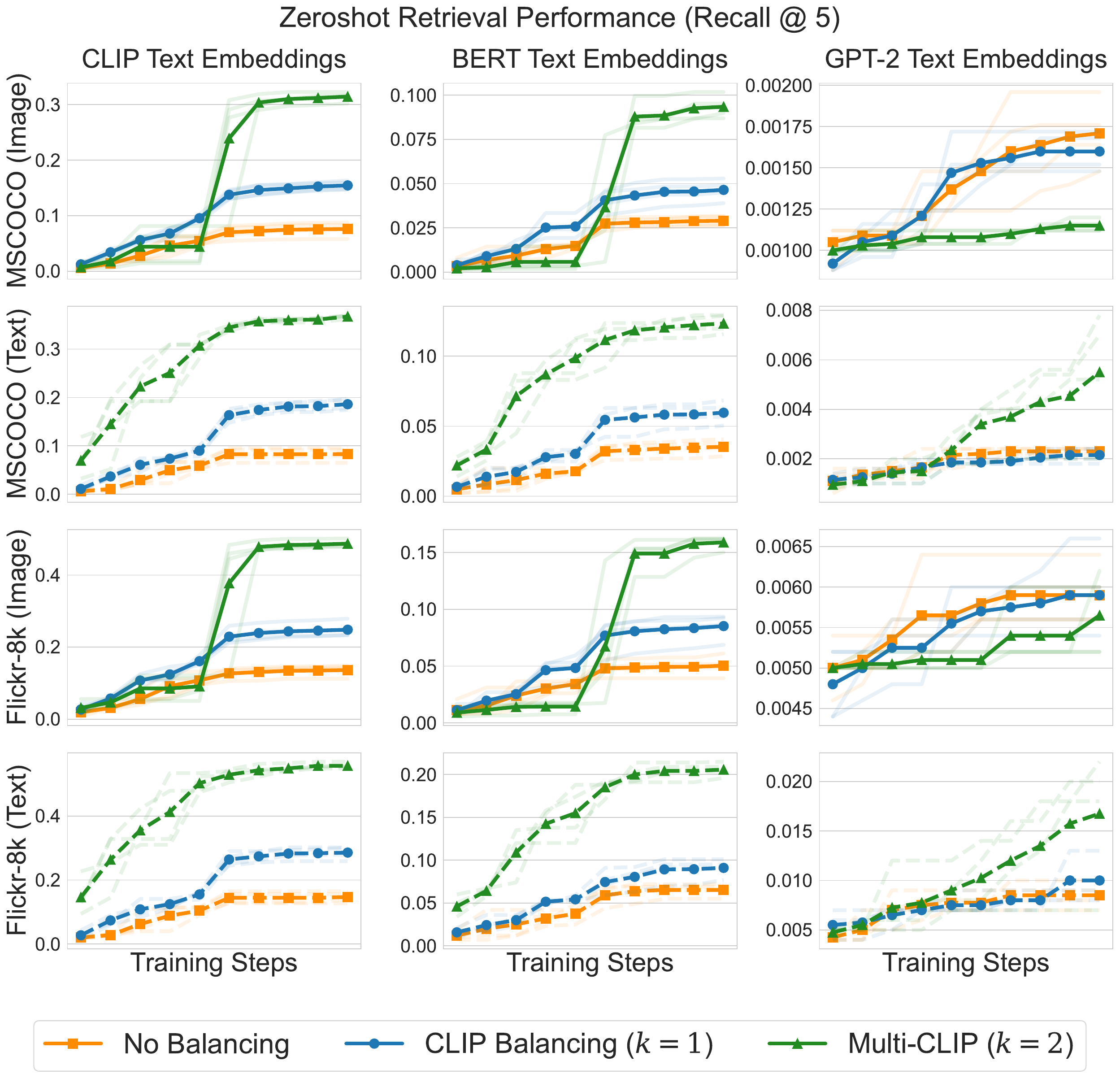}
    \caption{{\bf Zero-Shot Retrieval Performance across Embeddings and Objectives.} The three vertical panels describe different choices of the text encoder $f_{\theta_T}$ which increases in quality from left to right; that is, pre-trained GPT-2, BERT, and CLIP embeddings, respectively. Rows indicate various datasets, either MS-COCO or Flickr8k. evaluated under recall at $K = 5$ for image and text retrieval, respectively. The $y$-axis of each plot indicates the metric (see \eqref{eq:recall_at_k}) for either image or text retrieval, whereas the $x$-axis indicates training iterations at batch size $512$. }
    \label{fig:zero_shot_retrieval}
\end{figure}

\myparagraph{Zero-Shot Retrieval}
In this evaluation, we assess the ability of the learned representations to match queries from one modality to their counterparts in another modality.
We are given a test sets $\X_{\text{test}} = \br{x_1, \ldots, x_M}$ of images and $\Y_{\text{test}} = \br{y_1, \ldots, y_N}$ of texts in natural language. We are also given a matrix of annotations $A \in \br{0, 1}^{M \times N}$ where $A_{ij} = 1$ if and only if $y_j$ is a ``relevant'' caption for image $x_i$ (and vice versa). Given a particular query $y \in \Y_{\text{test}}$, we define the top-$K$ neighborhood of $y$ as
\begin{align*}
    \mc{N}_K(y; \theta) = \argmax_{S \sse [M]: |S| = K} \sum_{i \in S} \ip{f_{\theta_I}(x_i), f_{\theta_T}(y)},
\end{align*}
i.e. the images in the test set that have the closest embeddings under the given model.
Then, we may define the \emph{average recall at $K$ for image retrieval} metric  as
\begin{align}
    \operatorname{AverageRecall}_K(\theta) :=\frac{1}{N}\sum_{j=1}^N \frac{\sum_{i \in \mc{N}_K(y_j; \theta)} A_{ij}}{\sum_{i' \in[M]} A_{i'j}}.
    \label{eq:recall_at_k}
\end{align}
In words, the metric evaluates the retrieval system's ability to detect relevant items in the dataset, in this case by comparing the closeness of the image-text representations. We can analogously define the \emph{average recall at $K$ for text retrieval} metric by swapping the role of $x$ and $y$ above. The results for both retrieval metrics on the MS-COCO \citep{lin2015microsoftcococommonobjects} and Flickr8k \citep{hodosh2013framing} benchmarks are given in \Cref{fig:zero_shot_retrieval}.

\begin{figure}
    \centering
    \includegraphics[width=\linewidth]{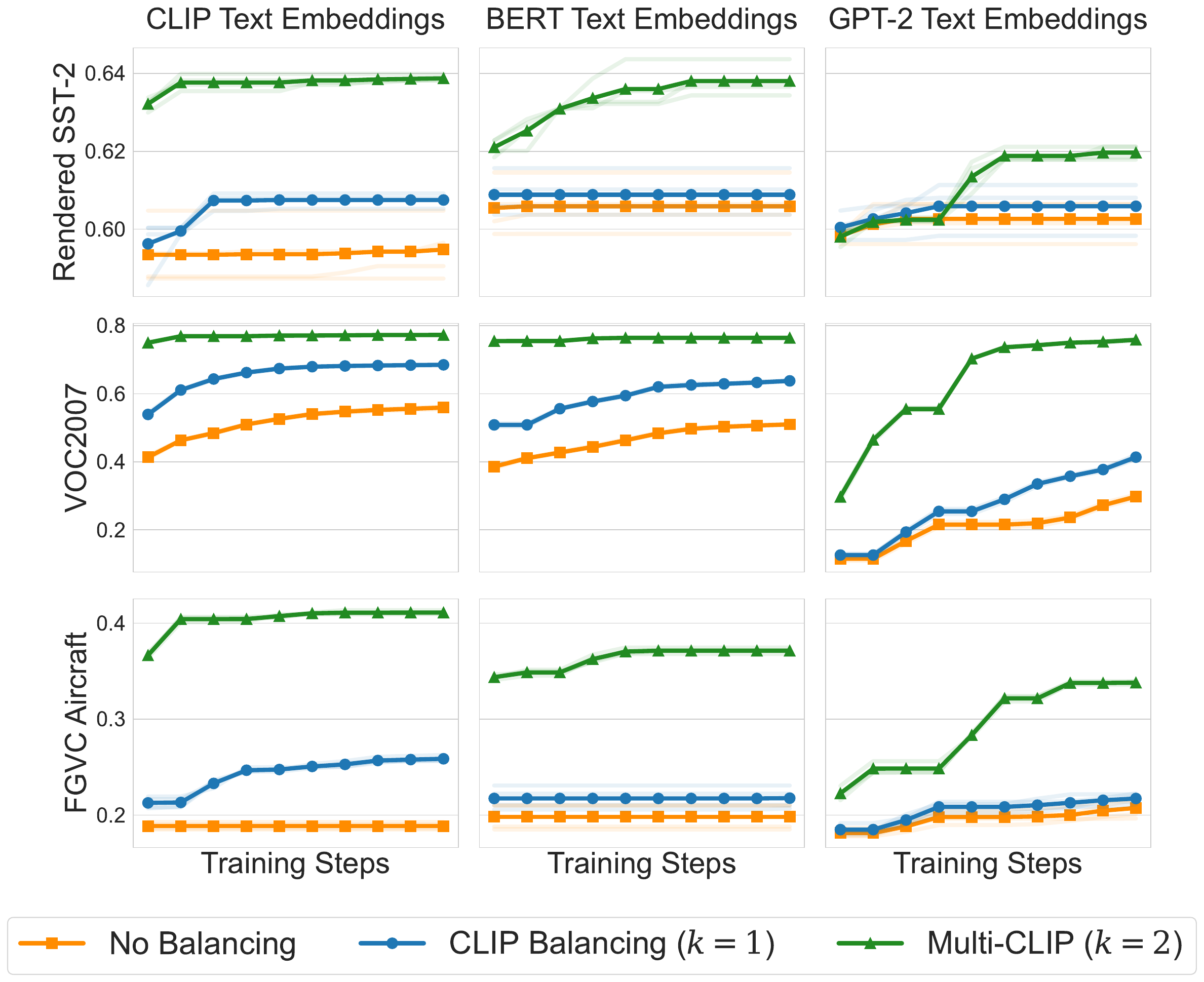}
    \caption{{\bf Linear Probe Performance across Embeddings and Objectives.} The three vertical panels describe different choices of the text encoder $f_{\theta_T}$ which increases in quality from left to right; that is, pre-trained GPT-2, BERT, and CLIP embeddings, respectively. Rows indicate various evaluation datasets from Rendered SST2, VOC2007, and FGVC Aircraft. The $y$-axis of each plot indicates the average per-class recall, whereas the $x$-axis indicates training iterations at batch size $512$. }
    \label{fig:linear_probe}
\end{figure}

\myparagraph{Linear Probe}
Here, we evaluate the quality of the model's encoders by fine-tuning a single linear layer on top of the learned representations for a classification task. In the case of linear probing via image classification, we use only the image encoder $f_{\theta_I}$. We are given a training set $\br{(x_1, c_1), \ldots, (x_N, c_N)}$ of image-label pairs, where each $c_i \in \br{1, \ldots, C}$. We fix the model parameter $\theta_I$ and solve the regularized multinomial cross entropy (MCE) objective
\begin{align*}
    \min_{W \in \R^{C \times r}} \sbr{L_{\text{MCE}}(W) := -\frac{1}{N}\sum_{i=1}^N [\operatorname{LogSoftmax}(W f_{\theta_I}(x_i))]_{c_i} + \frac{\lambda}{2} \norm{W}^2_F},
\end{align*}
where $\lambda > 0$ is a regularization parameter, $\norm{\cdot}_F$ denotes the Frobenius norm on $\R^{C \times r}$ and $\operatorname{LogSoftmax}: \R^C \rightarrow \R^C$ is given by $\operatorname{LogSoftmax}(z) = z - \log \sum_{j=1}^C \exp(z_j)$.
This results in a classifier
\begin{align*}
    g(x) := \argmax_{j \in [C]} [W f_{\theta_I}(x)]_j,
\end{align*}
which can then be evaluated using standard accuracy metrics on a held-out test set. The image classification results for the Rendered SST2 \citep{radford2021learning}, VOC2007 \citep{everingham2007pascal}, and FGVC Aircraft \citep{maji13fine-grained} benchmarks are given in \Cref{fig:linear_probe}.

\begin{figure}[t]
    \centering
    \includegraphics[width=\linewidth]{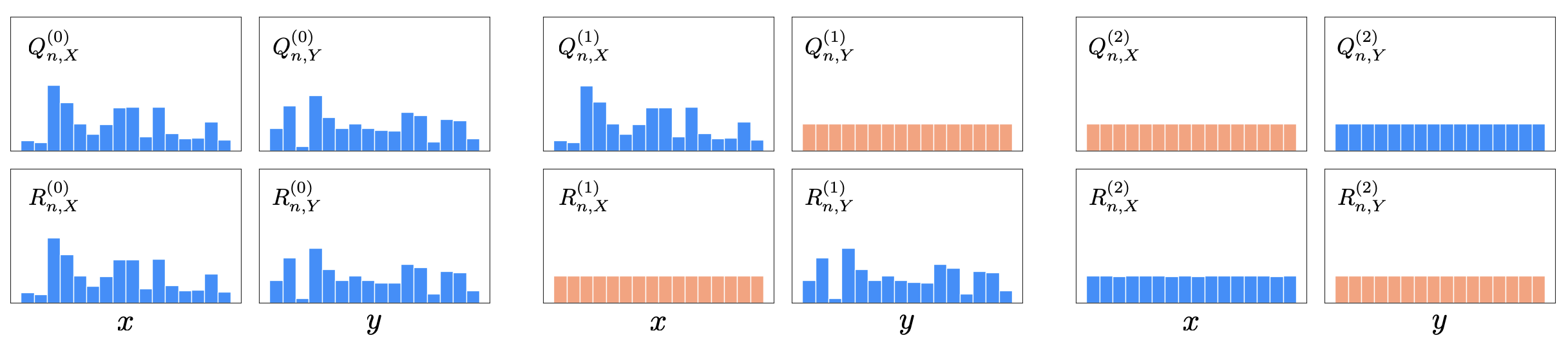}
    \caption{{\bf Empirical Marginals of CLIP Contrast Matrix.} Depiction of the probability measures $Q_n\pow{k}$ and $R_n\pow{k}$ as described in~\eqref{eq:clip_balanced} from \Cref{sec:ssl}. The orange bars correspond to the observed marginal after fitting to the target uniform distribution on the given iteration. {\bf Left:} $Q_n\pow{0}$ and $R_n\pow{0}$, where neither marginal is set to uniform. {\bf Center:} $Q_n\pow{1}$ and $R_n\pow{1}$, which corresponds to the original CLIP loss. {\bf Right:} $Q_n\pow{2}$ and $R_n\pow{2}$, which correspond to two iterations of the balancing procedure within the loss. The blue bars are slightly non-uniform. 
    }
    \label{fig:empirical_marg}
\end{figure}

\myparagraph{Empirical Marginals in CLIP Balancing}
To further clarify how the iterative balancing procedure is baked into the CLIP losses, recall from~\eqref{eq:clip_balanced} that the objectives decompose into two terms, which depend on $Q_n\pow{k}$ and $R_n\pow{k}$ which differ only based on whether balancing to fit $\py$ or to fit $\px$ is applied first, respectively. Thus, for any model parameterized by $\theta$ and any number of iterations $k$, there are four marginal distributions of interest: $Q_{\theta, X}\pow{k}$, $Q_{\theta, Y}\pow{k}$, $R_{\theta, X}\pow{k}$, and $R_{\theta, Y}\pow{k}$. Based on the order of iterations, we have that $Q_{\theta, Y}\pow{1} = R_{\theta, Y}\pow{2} = \py$, and $R_{\theta, X}\pow{1} = Q_{\theta, X}\pow{2} = \px$. This is illustrated in \Cref{fig:empirical_marg}. We see that after only a few iterations, both marginal distributions converge to the uniform distribution.

\end{document}